\documentclass{article} 
\usepackage{fullpage,times}
\usepackage{parskip}

\usepackage{amsthm}
\usepackage{fullpage}
\usepackage[english]{babel}
\usepackage[utf8x]{inputenc}
\usepackage[T1]{fontenc}
\usepackage{amsmath}
\usepackage{amssymb}
\usepackage{amsfonts}
\usepackage{multirow}
\usepackage{subcaption}

\usepackage{bbm}
\usepackage{dsfont}
\usepackage{graphicx}
\usepackage{natbib}
\usepackage{cases}
\usepackage{color}
\usepackage[colorinlistoftodos]{todonotes}
\usepackage[colorlinks=true, allcolors=blue]{hyperref}
\usepackage{algorithm}
\usepackage[noend]{algpseudocode}

\renewcommand{\hat}{\widehat}
\def\shownotes{1}  \ifnum\shownotes=1

\newtheorem{theorem}{Theorem}[section]
\newtheorem{condition}[theorem]{Condition}
\newtheorem{proposition}[theorem]{Proposition}
\newtheorem{lemma}[theorem]{Lemma}

\newtheorem{claim}[theorem]{Claim}

\newtheorem{definition}[theorem]{Definition}

\newtheorem{example}[theorem]{Example}

\numberwithin{equation}{section}

\newcommand{\E}{\mathbb{E}}
\newcommand{\R}{\mathbb{R}}

\newcommand{\cB}{\mathcal{B}}
\newcommand{\cC}{\mathcal{C}}
\newcommand{\cD}{\mathcal{D}}
\newcommand{\cF}{\mathcal{F}}
\newcommand{\cG}{\mathcal{G}}
\newcommand{\cH}{\mathcal{H}}
\newcommand{\cI}{\mathcal{I}}

\newcommand{\cM}{\mathcal{M}}
\newcommand{\cN}{\mathcal{N}}
\newcommand{\cS}{\mathcal{S}}
\newcommand{\cU}{\mathcal{U}}
\newcommand{\cV}{\mathcal{V}}

\newcommand{\poly}{\textup{poly}}

\newcommand{\argmax}{\arg \max}

\newcommand{\Gnorm}[1]{{\left\vert\kern-0.25ex\left\vert\kern-0.25ex\left\vert #1 
		\right\vert\kern-0.25ex\right\vert\kern-0.25ex\right\vert}}
\newcommand{\gnorm}[1]{{\vert\kern-0.25ex\vert\kern-0.25ex\vert #1 
		\vert\kern-0.25ex\vert\kern-0.25ex\vert}}
\newcommand{\opnorm}[1]{\|#1 \|_{\textup{op}}}
\newcommand{\fronorm}[1]{\|#1\|_{\textup{fro}}}

\newcommand{\cover}{\mathcal{N}}
\newcommand{\comp}{\cC}

\newcommand{\one}[1][]{\mathbbm{1}_{#1}}
\newcommand{\1}{\mathbbm{1}}

\newcommand{\ot}{\leftarrow}
\newcommand{\size}{s}
\newcommand{\lip}{\kappa}
\newcommand{\psize}{t}
\newcommand{\plip}{\tau}
\newcommand{\tlip}{\tilde{\kappa}}
\newcommand{\smooth}{\kappa'}
\newcommand{\nn}{\textup{NN}}
\newcommand{\pmarg}{m}
\newcommand{\simpm}{\pmarg}
\newcommand{\adv}{\textup{adv}}

\newcommand{\ellzo}{\ell_{\textup{0-1}}}
\newcommand{\correct}{\cS_n}

\newcommand{\empd}{L_2(P_n)}

\usepackage{hyperref}
\usepackage{url}

\title{Improved Sample Complexities for Deep Networks and Robust Classification via an All-Layer Margin}

\author{Colin Wei\thanks{Stanford University, \texttt{colinwei@stanford.edu}}~\and  Tengyu Ma\thanks{Stanford University, \texttt{tengyuma@stanford.edu}}}

\newcommand{\amo}{AMO}

\begin{document}

\maketitle

\begin{abstract}
	For linear classifiers, the relationship between (normalized) output margin and generalization is captured in a clear and simple bound -- a large output margin implies good generalization. Unfortunately, for deep models, this relationship is less clear: existing analyses of the output margin give complicated bounds which sometimes depend exponentially on depth. In this work, we propose to instead analyze a new notion of margin, which we call the ``all-layer margin.'' Our analysis reveals that the all-layer margin has a clear and direct relationship with generalization for deep models. This enables the following concrete applications of the all-layer margin: 1) by analyzing the all-layer margin, we obtain tighter generalization bounds for neural nets which depend on Jacobian and hidden layer norms and remove the exponential dependency on depth 2) our neural net results easily translate to the adversarially robust setting, giving the first direct analysis of robust test error for deep networks, and 3) we present a theoretically inspired training algorithm for increasing the all-layer margin. Our algorithm improves both clean and adversarially robust test performance over strong baselines in practice. 
\end{abstract}

\section{Introduction}
The most popular classification objectives for deep learning, such as cross entropy loss, encourage a larger output margin -- the gap between predictions on the true label and and next most confident label.  
These objectives have been popular long before deep learning was prevalent, and there is a long line of work showing they enjoy strong statistical guarantees for linear and kernel methods~\citep{bartlett2002rademacher,koltchinskii2002empirical,hofmann2008kernel,kakade2009complexity}. These guarantees have been used to explain the successes of popular algorithms such as SVM~\citep{boser1992training,cortes1995support}. 

For linear classifiers, the relationship between output margin and generalization is simple and direct -- generalization error is controlled by the output margins \textit{normalized by the classifier norm}. Concretely, suppose we have $n$ training data points each with norm 1, and let $\gamma_i$ be the output margin on the $i$-th example.  With high probability, if the classifier perfectly fits the training data, we obtain\footnote{This is a stronger version of the classical textbook bound which involves the min margin on the training examples. We present this stronger version because it motivates our work better. It can be derived from the results of~\cite{srebro2010optimistic}.} 
\begin{align}\label{eq:intro_linear}
		\textup{Test classification error}\lesssim \frac{1}{n} \sqrt{ \sum_{i=1}^n \left(\frac{\textup{classifier norm}}{\gamma_i}\right)^2}  + \textup{low order terms}
\end{align}
For deeper models, the relationship between output margin and generalization is unfortunately less clear and interpretable. 
Known bounds for deep nets normalize the output margin by a quantity that either scales exponentially in depth or depends on complex properties of the network~\citep{neyshabur2015norm,bartlett2017spectrally,neyshabur2017pac,golowich2017size,nagarajan2019deterministic,wei2019data}. This is evidently more complicated than the linear case in~\eqref{eq:intro_linear}. These complications arise because for deep nets, it is unclear how to properly normalize the output margin.
In this work, we remedy this issue by proposing a new notion of margin, called "all-layer margin", which we use to obtain simple guarantees like~\eqref{eq:intro_linear} for deep models. Let $m_i$ be the all-layer margin for the $i$-th example. We can simply normalize it by the sum of the complexities of the weights (often measured by the norms or the covering number) and obtain a bound of the following form: 
\begin{align}\label{eq:intro_all_layer}
	\textup{Test error}\lesssim \frac{1}{n} \sqrt{ \sum_{i=1}^n \left(\frac{\textup{sum of the complexities of each layer}}{m_i}\right)^2}  + \textup{low order terms}
\end{align}
As the name suggests, the all-layer margin considers all layers of the network simultaneously, unlike the output margin which only considers the last layer. We note that the \textit{definition} of the all-layer margin is the key insight for deriving~\eqref{eq:intro_all_layer} -- given the definition, the rest of the proof follows naturally with standard  tools. (Please see equation~\eqref{eqn:alllayer_margin} for a formal definition of the margin, and Theorem~\ref{thm:simple_generalization} for a formal version of bound~\eqref{eq:intro_all_layer}.) To further highlight the good statistical properties of the all-layer margin, we present three of its concrete applications in this paper.

1. By relating all-layer margin to output margin and other quantities, we obtain improved generalization bounds for neural nets. In Section~\ref{sec:neural_net_main}, we derive an analytic lower bound on the all-layer margin for neural nets with smooth activations which depends on the output margin normalized by other data-dependent quantities. By substituting this lower bound into~\eqref{eq:intro_all_layer}, we obtain a generalization bound in Theorem~\ref{thm:nn_gen} which avoids the exponential depth dependency and has tighter data-dependent guarantees than~\citep{nagarajan2019deterministic,wei2019data} in several ways. First, their bounds use the same normalizing quantity for each example, whereas our bounds are tighter and more natural because we use a different normalizer for each training example -- the local Lipschitzness for that particular example.  Second, our bound depends on the empirical distribution of some complexity measure computed for each training example. When these complexities are small for each training example, we can obtain convergence rates faster than $1/\sqrt{n}$. We provide a more thorough comparison to prior work in Section~\ref{sec:neural_net_main}. 

Furthermore, for relu networks, we give a tighter generalization bound which removes the dependency on inverse pre-activations suffered by~\citep{nagarajan2019deterministic}, which they showed to be large empirically (see Section~\ref{sec:relu_app}). The techniques of~\citep{wei2019data} could not remove this dependency because they relied on smooth activations. 

2. We extend our tools to give generalization bounds for adversarially robust classification error which are analogous to our bounds in the standard setting. In Section~\ref{sec:adv_generalization}, we provide a natural extension of our all-layer margin to adversarially robust classification. This allows us to translate our neural net generalization bound, Theorem~\ref{thm:nn_gen}, directly to adversarially robust classification (see Theorem~\ref{thm:adv_nn_gen}). The resulting bound takes a very similar form as our generalization bound for clean accuracy -- it simply replaces the data-dependent quantities in Theorem~\ref{thm:nn_gen} with their worst-case values in the adversarial neighborhood of the training example. As a result, it also avoids explicit exponential dependencies on depth. As our bound is the first direct analysis of the robust test error, it presents a marked improvement over existing work which analyze loose relaxations of the adversarial error~\citep{khim2018adversarial,yin2018rademacher}. Finally, our analysis of generalization for the clean setting translates directly to the adversarial setting with almost no additional steps. This is an additional advantage of our all-layer margin definition. 
 
3. We design a training algorithm that encourages a larger all-layer margin and demonstrate that it improves empirical performance over strong baselines.	 In Section~\ref{sec:experiments}, we apply our regularizer to WideResNet models~\citep{zagoruyko2016wide} trained on the CIFAR datasets and demonstrate improved generalization performance for both clean and adversarially robust classification. We hope that these promising empirical results can inspire researchers to develop new methods for optimizing the all-layer margin and related quantities. Our code is available online at the following link: \url{https://github.com/cwein3/all-layer-margin-opt}.

\subsection{Additional Related Work}
\citet{zhang2016understanding,neyshabur2017exploring}~note that deep learning often exhibits statistical properties that are counterintuitive to conventional wisdom. This has prompted a variety of new perspectives for studying generalization in deep learning, such as implicit or algorithmic regularization~\citep{gunasekar2017implicit,li2017algorithmic,soudry2018implicit,gunasekar2018implicit}, new analyses of interpolating classifiers~\citep{belkin2018understand,liang2018just,hastie2019surprises,bartlett2019benign}, and the noise and stability of SGD~\citep{hardt2015train,keskar2016large,chaudhari2016entropy}. In this work, we adopt a different perspective for analyzing generalization by studying a novel definition of margin for deep models which differs from the well-studied notion of output margin.
We hope that our generalization bounds can inspire the design of new regularizers tailored towards deep learning.  
Classical results have bounded generalization error in terms of the model's output margin and the complexity of its prediction~\citep{bartlett2002rademacher,koltchinskii2002empirical}, but for deep models this complexity grows exponentially in depth~\citep{neyshabur2015norm,bartlett2017spectrally,neyshabur2017pac,golowich2017size}.
Recently,~\citet{nagarajan2019deterministic,wei2019data} derived complexity measures in terms of hidden layer and Jacobian norms which avoid the exponential dependence on depth, but their proofs require complicated techniques for controlling the complexity of the output margin.~\citet{li2018tighter} also derive Jacobian-based generalization bounds.

\cite{dziugaite2017computing} directly optimize a PAC-Bayes bound based on distance from initialization and stability to random perturbations, obtaining a nonvacuous bound on generalization error of a neural net.~\citet{neyshabur2017exploring,arora2018stronger} provide complexity measures related to the data-dependent stability of the network, but the resulting bounds only apply to a randomized or compressed version of the original classifier. We provide a simple framework which derives such bounds for the original classifier.~\citet{long2019size} prove bounds for convolutional networks which scale with the number of parameters and their distance from initialization. ~\citet{novak2018sensitivity,javadi2019hessian} study stability-related complexity measures empirically. 

\citet{keskar2016large,chaudhari2016entropy,yao2018hessian,jastrzebski2018relation} study the relationship between generalization and ``sharpness'' of local minima in the context of small v.s. large batch SGD. They propose measures of sharpness related to the second derivatives of the loss with respect to the model parameters. Our all-layer margin can also be viewed as a notion of ``sharpness'' of the local minima around the model, as it measures the stability of the model's prediction.

A recent line of work establishes rigorous equivalences between logistic loss and output margin maximization.\sloppy ~\citet{soudry2018implicit,ji2018risk} show that gradient descent implicitly maximizes the margin for linearly separable data, and~\citet{lyu2019gradient}~prove gradient descent converges to a stationary point of the max-margin formulation for deep homogeneous networks.
Other works show global minimizers of regularized logistic loss are equivalent to margin maximizers, in linear cases~\citep{rosset2004boosting} and for deep networks~\citep{wei2018regularization,nacson2019lexicographic}. A number of empirical works also suggest alternatives to the logistic loss which optimize variants of the output margin~\citep{sun2014deep,wen2016discriminative,liu2016large,liang2017soft,cao2019learning}.
The neural net margin at intermediate and input layers has also been studied. ~\citet{elsayed2018large}~design an algorithm to maximize a notion of margin at intermediate layers of the network, and~\citet{jiang2018predicting}~demonstrate that the generalization gap of popular architectures can empirically be predicted using statistics of intermediate margin distributions.~\citet{verma2018manifold}~propose a regularization technique which they empirically show improves the structure of the decision boundary at intermediate layers.~\citet{yan2019adversarial} use adversarial perturbations to input to optimize a notion of margin in the input space, and~\citet{xie2019adversarial} demonstrate that using adversarial examples as data augmentation can improve clean generalization performance.~\citet{sokolic2017robust}~provide generalization bounds based on the input margin of the neural net, but these bounds depend exponentially on the dimension of the data manifold. These papers study margins defined for individual network layers, whereas our all-layer margin simultaneously considers all layers. This distinction is crucial for deriving our statistical guarantees.

A number of recent works provide negative results for adversarially robust generalization~\citep{tsipras2018robustness,montasser2019vc,yin2018rademacher,raghunathan2019adversarial}. We provide positive results stating that adversarial test accuracy can be good if the adversarial all-layer margin is large on the training data.~\citet{schmidt2018adversarially} demonstrate that more data may be required for generalization on adversarial inputs than on clean data.~\citet{montasser2019vc} provide impossiblity results for robust PAC learning with proper learning rules, even for finite VC dimension hypothesis classes.~\citet{zhang2019theoretically} consider the trade-off between the robust error and clean error.~\citet{farnia2018generalizable}~analyze generalization for specific adversarial attacks and obtain bounds depending exponentially on depth.~\citet{yin2018rademacher,khim2018adversarial} give adversarially robust generalization bounds by upper bounding the robust loss via a transformed/relaxed loss function, and the bounds depend on the product of weight matrix norms. ~\citet{yin2018rademacher} also show that the product of norms is inevitable if we go through the standard tools of Rademacher complexity and the output margin. 
Our adversarial all-layer margin circumvents this lower bound because it considers all layers of the network rather than just the output.

\subsection{Notation}

We use the notation $\{a_i\}_{i= 1}^k$ to refer to a sequence of $k$ elements $a_i$ indexed by $i$. We will use $\circ$ to denote function composition: $f \circ g(x) = f(g(x))$. Now for function classes $\cF, \cG$, define $\cF \circ \cG \triangleq \{f \circ g : f \in \cF, g \in \cG\}$. We use $D_h$ to denote the partial derivative operator with respect to variable $h$, and thus for a function $f(h_1, h_2)$, we use $D_{h_i} f(h_1, h_2)$ to denote the partial derivative of $f$ with respect to $h_i$ evaluated at $(h_1, h_2)$. We will use $\| \cdot \|$ to denote some norm. For a function $f$ mapping between normed spaces $\cD_I, \cD_O$ with norms $\|\cdot\|_I, \|\cdot\|_O$, respectively, define $\opnorm{f} \triangleq \sup_{x \in \cD_I} \frac{\|f(x)\|_O}{\|x\|_I}$, which generalizes the operator norm for linear operators. Let $\fronorm{M}, \|M\|_{1, 1}$ denote the Frobenius norms and the sum of the absolute values of the entries of $M$, respectively. For some set $\cS$ (often a class of functions), we let $\cover_{\|\cdot\|}(\epsilon, \cS)$ be the covering number of $\cS$ in the metric induced by norm $\|\cdot\|$ with resolution $\epsilon$. For a function class $\cF$, let $\cover_{\infty}(\epsilon, \cF)$ denote the covering number of $\cF$ in the metric $d_{\infty}(f, \hat{f}) = \sup_x \|f(x) - \hat{f}(x)\|$. For a function $f$ and distribution $P$, we use the notation $\|f\|_{L_q(P)}  \triangleq (\E_{x \sim P}[|f(x)|^q])^{1/q}$. We bound generalization for a test distribution $P$ given a set of $n$ training samples, $P_n \triangleq \{(x_i, y_i)\}_{i = 1}^n$ where $x \in \cD_0$ denotes inputs and $y\in [l]$ is an integer label. We will also use $P_n$ to denote the uniform distribution on these training samples.  For a classifier $F : \cD_0 \to \R^l$, we use the convention that $\max_{y' \in [l]} F(x)_{y'}$ is its predicted label on input $x$. Define the 0-1 prediction loss $\ellzo(F(x), y)$ to output 1 when $F$ incorrectly classifies $x$ and 0 otherwise. 

\section{Warmup: Simplified All-Layer Margin and its Guarantees}\label{sec:warmup}

Popular loss functions for classification, such as logistic and hinge loss, attempt to increase the \textit{output} margin of a classifier by penalizing predictions that are too close to the decision boundary.
Formally, consider the multi-class classification setting with a classifier $F : \cD_0 \to \R^l$, where $l$ denotes the number of labels. We define the output margin on example $(x,y)$ by $
	\gamma(F(x), y) \triangleq \max\left\{0, F(x)_y - \max_{y' \ne y} F(x)_{y'}\right\} \nonumber
$.

For shallow models such as linear and kernel methods, the output margin maximization objective enjoys good statistical guarantees~\citep{kakade2009complexity,hofmann2008kernel}. For deep networks, the statistical properties of this objective are less clear: until recently, statistical guarantees depending on the output margin also suffered an exponential dependency on depth~\citep{bartlett2017spectrally,neyshabur2017pac}. Recent work removed these dependencies but require technically involved proofs and result in complicated bounds depending on numerous properties of the training data~\citep{nagarajan2019deterministic,wei2019data}.

In this section, we introduce a new objective with better statistical guarantees for deep models (Theorem~\ref{thm:simple_generalization}) and outline the steps for proving these guarantees. Our objective is based on maximizing a notion of margin which measures the stability of a classifier to \textit{simultaneous} perturbations at all layers.  We first rigorously define a generalized margin function as follows: 
\begin{definition}[Margin Function]\label{def:generalized_margin}
	Let $F : \cD_0 \to \R^l$ be a classification model. We call $g_F : \cD_0 \times [l] \rightarrow \R$ a margin function if the following holds:
	\begin{align}
		g_F(x, y) = 0 &\textup{ if } \ellzo(F(x), y) = 1 \nonumber\\
		g_F(x, y) > 0 &\textup{ if } \ellzo(F(x), y) = 0 \nonumber
	\end{align}
\end{definition}

To motivate our notion of margin, we observe that the following generalization bound holds for \textit{any} generalized margin function with low complexity.

\begin{lemma}\label{lem:general_margin_main}
	For a function class $\cF$, let $\cG = \{g_F : F \in \cF\}$ be a class of generalized margin functions indexed by $F \in \cF$. Suppose that the covering number of $\cG$ with respect to $\infty$-norm scales as $\log \cN_{\infty}(\epsilon, \cG) \le \lfloor \frac{\comp_{\infty}^2 (\cG)}{\epsilon^2} \rfloor$ for some complexity $\comp_{\infty}(\cG)$. Then with probability $1 - \delta$ over the draw of the training data, all classifiers $F \in \cF$ which achieve training error 0 satisfy
	\begin{align}
		\E_P[\ellzo(F(x), y)] \lesssim  \frac{\comp_{\infty}(\cG)}{\sqrt{n}} \sqrt{\E_{(x,y) \sim P_n}\left[\frac{1}{g_F(x,y)^2}\right]}\log^2 n + \zeta \nonumber
	\end{align} 
	where  $\zeta \triangleq O\left(\frac{\log (1/\delta) + \log n}{n}\right)$ is a low-order term.
\end{lemma}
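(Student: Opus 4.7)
The plan is to reduce bounding the test $0$-$1$ error to a Lipschitz surrogate applied to the margin function $g_F$, and then apply a covering-number generalization bound sharpened by an ``optimistic rate'' argument that exploits boundedness of the surrogate.

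First, introduce the $1/t$-Lipschitz ramp $\phi_t(u) := \min\{1, \max\{0, 2 - u/t\}\}$, which equals $1$ on $(-\infty, t]$ and $0$ on $[2t, \infty)$. By Definition~\ref{def:generalized_margin}, whenever $F$ misclassifies $(x,y)$ we have $g_F(x,y) = 0$ and hence $\phi_t(g_F(x,y)) = 1$, so pointwise
\[ \ellzo(F(x),y) \le \phi_t(g_F(x,y)) \qquad \text{for every } t > 0. \]
Since $\phi_t$ is $1/t$-Lipschitz, the composed class satisfies $\log \cN_\infty(\epsilon, \phi_t \circ \cG) \le \comp_\infty^2(\cG)/(t\epsilon)^2$. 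Feeding this into Dudley's entropy integral (with standard truncation at scale $1/n$) produces a Rademacher-type complexity of order $\comp_\infty(\cG)\log n/(t\sqrt n)$, which contributes one of the two $\log n$ factors in the final bound.

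The next step is to invoke, instead of a plain additive concentration bound, an optimistic-rate generalization bound for $[0,1]$-valued loss classes in the style of \cite{srebro2010optimistic}: with probability at least $1-\delta$, uniformly over $F \in \cF$,
\[ \E_P\!\left[\phi_t(g_F(x,y))\right] \lesssim \E_{P_n}\!\left[\phi_t(g_F(x,y))\right] + \sqrt{\E_{P_n}\!\left[\phi_t(g_F(x,y))\right]\cdot \cR(t)} + \cR(t) + \zeta, \]
where $\cR(t) := \comp_\infty^2(\cG)\log^2 n/(t^2 n)$. Because $F$ achieves zero training error we have $g_F(x_i,y_i) > 0$ for all $i$, so the pointwise inequality $\phi_t(u) \le \min\{1,(2t/u)^2\} \le 4t^2/u^2$ (valid for $u > 0$) gives $\E_{P_n}[\phi_t(g_F)] \le 4t^2\,\E_{P_n}[1/g_F(x,y)^2]$. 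Substituting this into the cross term makes the factor of $t$ cancel and yields exactly $\tfrac{\comp_\infty(\cG)\log n}{\sqrt n}\sqrt{\E_{P_n}[1/g_F(x,y)^2]}$, matching the target rate. Choosing $t^4 \asymp \comp_\infty^2(\cG)\log^2 n/(n\,\E_{P_n}[1/g_F^2])$ ensures the additive $4t^2\E_{P_n}[1/g_F^2]$ term and $\cR(t)$ are of the same order as the cross term. Since the optimal $t$ depends on the data-dependent quantity $\E_{P_n}[1/g_F^2]$, we actually apply the bound over a geometric grid of $t$ values and union-bound, at the cost of one further $\log n$ factor, which produces the $\log^2 n$ in the final statement.

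The main obstacle is obtaining the $\sqrt{\E_{P_n}[1/g_F^2]}$ dependence rather than the cube-root rate that a naive additive Rademacher-style bound would give: balancing $4t^2\E_{P_n}[1/g_F^2] + \comp_\infty(\cG)\log n/(t\sqrt n)$ directly yields an exponent of $1/3$, not $1/2$. The optimistic-rate step, equivalently a Bernstein or localized-Rademacher argument exploiting that $\phi_t\in[0,1]$ so its variance is controlled by its mean, is what upgrades the mixed term to the geometric-mean form $\sqrt{\text{empirical}\cdot\text{complexity}}$ and is the crucial ingredient. The remaining pieces, namely the ramp surrogate, the Dudley integration under the $\infty$-norm covering hypothesis, and the geometric-grid union bound over $t$, are routine.
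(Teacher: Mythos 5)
Your overall route---a surrogate loss upper-bounding the $0$-$1$ loss, an SST-style optimistic-rate bound, and a geometric grid with a union bound over the scale parameter---is the same as the paper's, and your algebra (the envelope $\phi_t(u)\le 4t^2/u^2$, the cancellation of $t$ in the cross term, the choice of $t$) is carried out correctly. The step that breaks is the invocation of the optimistic-rate bound for your piecewise-linear ramp $\phi_t$. The result of~\cite{srebro2010optimistic} requires the surrogate loss to be \emph{smooth} (Lipschitz derivative), not merely Lipschitz, and $\phi_t$ fails this: its derivative jumps at $u=t$ and $u=2t$, so it is not $H$-smooth for any finite $H$.

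Smoothness is not incidental. The term $\mathcal{R}(t)\asymp \comp_\infty^2(\cG)\log^2 n/(t^2 n)$ in the bound you quote is the fixed point of a \emph{localized} Rademacher complexity, and the localization rests on two facts. The first, which you cite, is that a $[0,1]$-valued loss has variance bounded by its mean, so one may restrict to a small-mean ball. The second, which your argument omits, is a self-bounding property of smooth nonnegative losses: if $\phi$ is $H$-smooth then $(\phi(a)-\phi(b))^2 \le 6H\bigl(\phi(a)+\phi(b)\bigr)(a-b)^2$, so where the loss is small it is also locally flat. This converts an $\infty$-cover of $\cG$ at resolution $\epsilon$ into an $L_2(P_n)$-cover of the level set $\cH(\mu)=\{h:\E_{P_n}[h]\le\mu\}$ at resolution $\epsilon\sqrt{48H\mu}$---this is precisely SST's equation (22), reproduced as Proposition~\ref{claim:Hrcover} here---and the extra $\sqrt{\mu}$ is what makes the local complexity subroot and drives the fixed point down to $\mathcal{R}(t)\asymp(\text{Rademacher complexity})^2$. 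For your ramp the conversion fails: near $u=2t$ the ramp's value is arbitrarily small while its slope stays $1/t$, so the level-set cover does not improve, the fixed point remains of order $\comp_\infty(\cG)\log n/(t\sqrt n)$ rather than its square, and balancing $4t^2\,\E_{P_n}[1/g_F^2]$ against it over $t$ returns exactly the cube-root rate you correctly identify as the naive outcome. Boundedness together with the variance-mean inequality does not by itself upgrade it.

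The fix is the one the paper makes: replace the linear ramp with a smooth surrogate having the same envelope. The paper's $\ell_\beta$ (a scaled Gaussian tail probability, Claim~\ref{claim:ell_beta}) is $O(\beta)$-smooth and satisfies $\ell_\beta(m)\lesssim 1/(\beta m^2)$ for $m>0$, so it plays the role of your $\phi_t$ with $\beta=1/t^2$. With this one substitution the rest of your argument---the pointwise domination of $\ellzo$, the optimistic-rate step now legitimately applied to a $\beta$-smooth loss, and the geometric grid over $\beta$---recovers the stated bound.
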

The proof of Lemma~\ref{lem:general_margin_main} mainly follows standard proofs of margin-based generalization bounds. The main novelty is in achieving a generalization bound depending on the \textit{average} inverse margin instead of largest inverse margin. To obtain this bound, we apply a generalization result of~\citet{srebro2010optimistic} to a smooth surrogate loss which bounds the 0-1 loss. More details are provided in Section~\ref{sec:all_layer_marg_app}.

 Note that Lemma~\ref{lem:general_margin_main} applies to \textit{any} margin function $g$, and motivates defining margin functions with low complexity. We can define the all-layer margin as follows. Suppose that the classifier $F(x) = f_k \circ \cdots\circ f_1(x)$ is computed by composing $k$ functions $f_k,\ldots, f_1$, and let $\delta_k, \ldots, \delta_1$ denote perturbations intended to be applied at each hidden layer. 
We recursively define the perturbed network output $F(x, \delta_1, \ldots, \delta_k)$ by 
\begin{align}\label{eq:f_delta_def}
\begin{split}
h_1(x, \delta) &= f_1(x) + \delta_1\|x\|_2 \\ \ h_i(x, \delta) &= f_i(h_{i - 1}(x, \delta)) + \delta_i \|h_{i - 1}(x, \delta)\|_2\\
F(x, \delta) &= h_k(x, \delta)
\end{split}
\end{align}
The all-layer margin will now be defined as the minimum norm of $\delta$ required to make the classifier misclassify the input. Formally, for classifier $F$, input $x$, and label $y$, we define
\begin{align}\label{eqn:alllayer_margin}
\simpm_F(x, y) \triangleq \begin{split}
\min_{\delta_1,\ldots, \delta_k }&\sqrt{\sum_{i = 1}^k \|\delta_i\|_2^2} \\
\textup{subject to }& \arg \max_{y'} F(x, \delta_1, \ldots, \delta_k)_{y'} \ne y\end{split}
\end{align}
Note that $m_F$ is strictly positive if and only if the unperturbed prediction $F(x)$ is correct, so $m_F$ is a margin function according to Definition~\ref{def:generalized_margin}.
Here multiplying $\delta_i$ by the previous layer norm $\|h_{i - 1}(x, \delta)\|_2$ is important and intuitively balances the relative scale of the perturbations at each layer. We note that the definition above  is simplified to convey the main intuition behind our results -- to obtain the tightest possible bounds, in Sections~\ref{sec:neural_net_main} and~\ref{sec:adv_generalization}, we use the slightly more general $m_F$ defined in Section~\ref{sec:all_layer_marg_app}.

 Prior works have studied, both empirically and theoretically, the margin of a network with respect to single perturbations at an intermediate or input layer~\citep{sokolic2017robust,novak2018sensitivity,elsayed2018large,jiang2018predicting}. Our all-layer margin is better tailored towards handling the compositionality of deep networks because it considers simultaneous perturbations to all layers, which is crucial for achieving its statistical guarantees. 

Formally, let $\cF \triangleq \{f_k \circ \cdots \circ f_1 : f_i \in\cF_i\}$ be the class of compositions of functions from function classes $\cF_1, \ldots, \cF_k$. We bound the population classification error for $F \in \cF$ based on the distribution of $m_F$ on the training data and the sum of the complexities of each layer, measured via covering numbers. For simplicity, we assume the covering number of each layer scales as $\log \cover_{\opnorm{\cdot}}(\epsilon, \cF_i) \le \lfloor \comp_i^2/\epsilon^2 \rfloor$ for some complexity $\comp_i$, which is common for many function classes. 
\begin{theorem}[Simplified version of Theorem~\ref{thm:compl_m_gen}]
	\label{thm:simple_generalization}
	In the above setting, with probability $1  - \delta$ over the draw of the training data, all classifiers $F \in \cF$ which achieve training error 0 satisfy
	\begin{align}
	\E_P[\ellzo(F(x), y)] \lesssim \frac{\sum_i \comp_i}{\sqrt{n}} \sqrt{\E_{(x, y) \sim P_n} \left[\frac{1}{m_F(x, y)^{2}}\right]}  \log^2 n + \zeta  \nonumber	\end{align} 
	where $\zeta \triangleq O\left(\frac{\log (1/\delta) + \log n}{n}\right)$ is a low-order term. \end{theorem}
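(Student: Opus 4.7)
The plan is to apply Lemma~\ref{lem:general_margin_main} to the class of all-layer margin functions $\cM \triangleq \{m_F : F \in \cF\}$ (which is a legitimate class of margin functions in the sense of Definition~\ref{def:generalized_margin}, since $m_F(x,y) > 0$ iff $F$ classifies $x$ correctly), and to control its $\infty$-norm covering number by $\sum_i \comp_i$. Concretely, if I can show $\log \cover_\infty(\epsilon, \cM) \le \lfloor (\sum_i \comp_i)^2/\epsilon^2 \rfloor$, then plugging into Lemma~\ref{lem:general_margin_main} with $\comp_\infty(\cM) \le \sum_i \comp_i$ gives the claimed bound.

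The core technical step is therefore a Lipschitzness property of the map $F = f_k \circ \cdots \circ f_1 \mapsto m_F$. I expect the following to hold: if $\hat F = \hat f_k \circ \cdots \circ \hat f_1$ with $\opnorm{f_i - \hat f_i} \le \epsilon_i$, then for every $x,y$,
\begin{align}
\bigl| m_F(x,y) - m_{\hat F}(x,y)\bigr| \;\le\; \sqrt{\textstyle\sum_{i=1}^k \epsilon_i^2}. \nonumber
\end{align}
To prove this, given any feasible perturbation $\delta = (\delta_1,\ldots,\delta_k)$ for $F$ achieving value $\sqrt{\sum_i \|\delta_i\|_2^2}$, I would construct by induction a perturbation $\delta'$ for $\hat F$ that produces the exact same hidden activations $\hat h_i(x,\delta') = h_i(x,\delta)$ at every layer. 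Assuming inductively $\hat h_{i-1}(x,\delta') = h_{i-1}(x,\delta)$, matching $\hat h_i$ with $h_i$ forces $\delta_i' = \delta_i - (\hat f_i - f_i)(h_{i-1})/\|h_{i-1}\|_2$, so $\|\delta_i' - \delta_i\|_2 \le \opnorm{\hat f_i - f_i} \le \epsilon_i$. Since $\hat F$ is misclassified under $\delta'$, the triangle inequality in Euclidean norm gives $m_{\hat F}(x,y) \le \sqrt{\sum_i \|\delta_i\|_2^2} + \sqrt{\sum_i \epsilon_i^2}$, and minimizing over $\delta$ plus a symmetric argument yields the claim.

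Given the Lipschitz property, I bound the $\infty$-covering number of $\cM$ by combining per-layer operator-norm covers: choose layer resolutions $\epsilon_1,\ldots,\epsilon_k$ with $\sum_i \epsilon_i^2 \le \epsilon^2$, and cover $\cM$ by compositions of per-layer covers. This uses at most $\sum_i \comp_i^2/\epsilon_i^2$ log-covering bits. Optimizing $\epsilon_i$ subject to $\sum_i \epsilon_i^2 \le \epsilon^2$ (Lagrange multipliers give $\epsilon_i^2 \propto \comp_i$) yields $\sum_i \comp_i^2/\epsilon_i^2 = (\sum_i \comp_i)^2/\epsilon^2$, matching the form required by Lemma~\ref{lem:general_margin_main}.

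The step I expect to be the main obstacle is the Lipschitzness claim: the recursive rescaling by $\|h_{i-1}(x,\delta)\|_2$ in~\eqref{eq:f_delta_def} is what makes the $\opnorm{\cdot}$ perturbation of $f_i$ translate cleanly into a bounded $\ell_2$ shift of $\delta_i$, and one must check the inductive matching of hidden layers works at the boundary (when $\|h_{i-1}\|_2 = 0$ the definition needs a minor convention, and when $\delta'$ falls outside the admissible domain of $\hat f_i$ one must handle feasibility). Once this is settled, the covering-number computation and the appeal to Lemma~\ref{lem:general_margin_main} are essentially mechanical.
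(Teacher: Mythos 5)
Your proposal is correct and takes essentially the same route as the paper: your Lipschitzness claim is exactly Claim~\ref{claim:simple_m_lip} (and its general form Claim~\ref{claim:m_lip}), the per-layer covering combination with $\epsilon_i^2 \propto \comp_i$ matches Lemma~\ref{lem:simple_m_cover} (and Lemma~\ref{lem:gen_covering} with $p=2$, $\alpha_i=1$), and the final appeal to Lemma~\ref{lem:general_margin_main} is how the paper concludes as well. The edge cases you flag (vanishing $\|h_{i-1}\|_2$, feasibility of $\delta'$) are not treated explicitly in the paper either; since the $\delta_i$ range freely over the ambient space there is no admissibility constraint, so this is not a real gap.
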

	
In other words, generalization is controlled by the sum of the complexities of the layers and the quadratic mean of $1/m_F$ on the training set. Theorem~\ref{thm:compl_m_gen} generalizes this statement to provide bounds which depend on the $q$-th moment of $1/m_F$ for any integer $q > 0$ and converge at rates faster than $1/\sqrt{n}$. 
For neural nets, $\comp_i$ scales with weight matrix norms and $1/m_F$ can be upper bounded by a polynomial in the Jacobian and hidden layer norms and output margin, allowing us to avoid an exponential dependency on depth when these quantities are well-behaved on the training data.

The proof of Theorem~\ref{thm:simple_generalization} follows by showing that $m_F$ has low complexity which scales with the sum of the complexities at each layer, as shown in the following lemma. We can then apply the generalization bound in Lemma~\ref{lem:general_margin_main} to $m_F$. 

\begin{lemma}[Complexity Decomposition Lemma]\label{lem:simple_m_cover}
	Let $m \circ \cF = \{m_F : F \in \cF\}$ denote the family of all-layer margins of function compositions in $\cF$. Then
	\begin{align}\label{eq:simple_m_cover:1}
	\log \cover_{\infty}\left(\sqrt{\sum_i \epsilon_i^2}, m \circ \cF\right) \le \sum_i \log \cover_{\opnorm{\cdot}}(\epsilon_i, \cF_i)
	\end{align}
	The covering number of an individual layer commonly scales as $ \log \cover_{\opnorm{\cdot}}(\epsilon_i, \cF_i) \le \lfloor\comp_i^2/\epsilon_i^2 \rfloor$. In this case, for all $\epsilon > 0$, we obtain
	$
	\log \cover_{\infty}\left(\epsilon, m \circ \cF\right) \le \left \lfloor \frac{\left(\sum_i \comp_i \right)^2}{\epsilon^2} \right \rfloor
$.
\end{lemma}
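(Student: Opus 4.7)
The plan is to construct an explicit $\infty$-norm cover of $m \circ \cF$ by taking products of operator-norm covers of each layer class. For each $i$, let $\hat{\cF}_i$ be a minimal $\epsilon_i$-cover of $\cF_i$ in operator norm. Define the covering family $\hat{\cG} = \{m_{\hat{F}} : \hat{F} = \hat{f}_k \circ \cdots \circ \hat{f}_1,\ \hat{f}_i \in \hat{\cF}_i\}$. Its cardinality is $\prod_i |\hat{\cF}_i|$, so $\log|\hat{\cG}| \le \sum_i \log \cover_{\opnorm{\cdot}}(\epsilon_i, \cF_i)$, matching the right-hand side of~\eqref{eq:simple_m_cover:1}. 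It remains to verify that $\hat{\cG}$ is a valid $\sqrt{\sum_i \epsilon_i^2}$-cover of $m \circ \cF$ in the $\infty$-norm metric on $\cD_0 \times [l]$.

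The heart of the argument is an ``absorption'' step showing that for any $F = f_k \circ \cdots \circ f_1 \in \cF$ and any choice of $\hat{f}_i \in \hat{\cF}_i$ with $\opnorm{f_i - \hat{f}_i} \le \epsilon_i$, one has $\|m_F - m_{\hat{F}}\|_\infty \le \sqrt{\sum_i \epsilon_i^2}$. Given a feasible perturbation $(\delta_1, \ldots, \delta_k)$ for $F$ at input $x$, I would recursively define $(\hat{\delta}_1, \ldots, \hat{\delta}_k)$ so that the hidden activations match exactly: $\hat{h}_i(x, \hat{\delta}) = h_i(x, \delta)$ for every $i$. Assuming inductively that $\hat{h}_{i-1} = h_{i-1}$, enforcing $\hat{h}_i = h_i$ forces
\[
\hat{\delta}_i \;=\; \delta_i \;+\; \frac{f_i(h_{i-1}) - \hat{f}_i(h_{i-1})}{\|h_{i-1}\|_2},
\]
and the operator-norm bound gives $\|\hat{\delta}_i - \delta_i\|_2 \le \epsilon_i$. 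Minkowski's inequality applied to the real vectors $(\|\delta_i\|_2)_i$ and $(\|\hat{\delta}_i - \delta_i\|_2)_i$ then yields $\sqrt{\sum_i \|\hat{\delta}_i\|_2^2} \le \sqrt{\sum_i \|\delta_i\|_2^2} + \sqrt{\sum_i \epsilon_i^2}$. Because $\hat{h}_k = h_k$, the perturbation $\hat{\delta}$ causes $\hat{F}$ to misclassify precisely when $\delta$ causes $F$ to misclassify, so taking the infimum over feasible $\delta$ gives $m_{\hat{F}}(x,y) \le m_F(x,y) + \sqrt{\sum_i \epsilon_i^2}$. Swapping the roles of $F$ and $\hat{F}$ establishes the reverse inequality, uniformly in $(x,y)$.

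The main obstacle is executing the absorption step cleanly, in particular lining up the layer-scale factors $\|h_{i-1}\|_2$ with the operator-norm definition so that the bound $\|\hat\delta_i - \delta_i\|_2 \le \epsilon_i$ drops out without introducing spurious factors. A minor edge case is $\|h_{i-1}\|_2 = 0$, which forces $h_{i-1}=0$; under the natural assumption that finite operator norm implies $f_i(0) = \hat{f}_i(0)$ (as for linear maps or homogeneous activations), one can simply take $\hat{\delta}_i = \delta_i$ and continue the induction.

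For the concrete bound stated at the end of the lemma, substitute $\log \cover_{\opnorm{\cdot}}(\epsilon_i, \cF_i) \le \lfloor \comp_i^2/\epsilon_i^2\rfloor$ into~\eqref{eq:simple_m_cover:1} and optimize over $\{\epsilon_i\}$ subject to $\sum_i \epsilon_i^2 = \epsilon^2$. A Lagrange multiplier calculation (equivalent to Cauchy--Schwarz) picks out $\epsilon_i^2 = \comp_i \epsilon^2/\sum_j \comp_j$, giving $\sum_i \comp_i^2/\epsilon_i^2 = (\sum_i \comp_i)^2/\epsilon^2$. Since a sum of floors is an integer bounded above by the sum of the arguments, we conclude $\log \cover_\infty(\epsilon, m \circ \cF) \le \lfloor (\sum_i \comp_i)^2/\epsilon^2\rfloor$, as claimed.
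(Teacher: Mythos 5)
Your proof matches the paper's: you construct the same absorbing perturbation $\hat{\delta}_i = \delta_i + \bigl(f_i(h_{i-1}) - \hat{f}_i(h_{i-1})\bigr)/\|h_{i-1}\|_2$ to force exact agreement of the perturbed activations, bound the increment via the operator norm and Minkowski's inequality to obtain the uniform Lipschitz property of Claim~\ref{claim:simple_m_lip}, compose per-layer covers, and optimize the allocation $\epsilon_i^2 \propto \comp_i$, exactly as in Lemmas~\ref{lem:mcover} and~\ref{lem:gen_covering}. The handling of the $\|h_{i-1}\|_2 = 0$ edge case is a small addition not made explicit in the paper, but it does not change the route of the argument.
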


Lemma~\ref{lem:simple_m_cover} shows that the complexity of $m_F$ scales linearly in depth for any choice of layers $\cF_i$. In sharp contrast, lower bounds show that the complexity of the output margin scales exponentially in depth via a product of Lipschitz constants of all the layers~\citep{bartlett2017spectrally,golowich2017size}. Our proof only relies on basic properties of $m_F$, indicating that $m_F$ is naturally better-equipped to handle the compositionality of $\cF$. In particular, we prove Lemma~\ref{lem:simple_m_cover} by leveraging a uniform Lipschitz property of $m_F$. This uniform Lipschitz property does not hold for prior definitions of margin and reflects the key insight in our definition -- it arises only because our margin depends on \textit{simultaneous} perturbations to all layers. 
\begin{claim}\label{claim:simple_m_lip}
	For any two compositions $F = f_k \circ \cdots \circ f_1$ and $\hat{F} = \hat{f}_k \circ \cdots \circ \hat{f}_1$ and any $(x, y)$, we have
$
		|m_F(x, y) - m_{\hat{F}}(x, y)| \le \sqrt{\sum_{i=1}^k\opnorm{f_i - \hat{f}_i}^2}\nonumber
$.
\end{claim}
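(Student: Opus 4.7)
The plan is to prove the claim by constructing, for the optimal perturbation of $F$, a corresponding perturbation of $\hat{F}$ that mimics the same hidden activations and therefore causes the same misclassification, with norm bounded in terms of the layerwise operator-norm gaps. By symmetry this will yield the two-sided inequality.

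Concretely, let $\delta = (\delta_1,\ldots,\delta_k)$ be a minimizer achieving $m_F(x,y)$, and write $h_i = h_i(x,\delta)$ for the perturbed hidden activations of $F$. I would define $\hat{\delta}_i$ recursively so that the perturbed network $\hat{F}$ produces exactly the activations $h_i$, i.e.\ $\hat{h}_i(x,\hat{\delta}) = h_i$ for every $i$. Using the definitions
\begin{align*}
h_i &= f_i(h_{i-1}) + \delta_i \|h_{i-1}\|_2,\\
\hat{h}_i(x,\hat{\delta}) &= \hat{f}_i(\hat{h}_{i-1}(x,\hat{\delta})) + \hat{\delta}_i \|\hat{h}_{i-1}(x,\hat{\delta})\|_2,
\end{align*}
this forces (assuming $\|h_{i-1}\|_2 > 0$)
\begin{equation*}
\hat{\delta}_i \;=\; \delta_i \;+\; \frac{f_i(h_{i-1}) - \hat{f}_i(h_{i-1})}{\|h_{i-1}\|_2}.
\end{equation*}
Then $\hat{F}(x,\hat{\delta}) = F(x,\delta)$, so $\hat{\delta}$ also misclassifies $(x,y)$ and is therefore a feasible point for the optimization defining $m_{\hat{F}}(x,y)$.

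Next I would bound the norm of $\hat{\delta}$. By the definition of $\opnorm{\cdot}$ we have $\|f_i(h_{i-1}) - \hat{f}_i(h_{i-1})\|_2 \le \opnorm{f_i - \hat{f}_i} \cdot \|h_{i-1}\|_2$, so
\begin{equation*}
\|\hat{\delta}_i\|_2 \;\le\; \|\delta_i\|_2 \;+\; \opnorm{f_i - \hat{f}_i}.
\end{equation*}
Applying the Minkowski inequality in $\R^k$ to the vectors $(\|\delta_i\|_2)_i$ and $(\opnorm{f_i - \hat{f}_i})_i$ gives
\begin{equation*}
\sqrt{\sum_i \|\hat{\delta}_i\|_2^2} \;\le\; \sqrt{\sum_i \|\delta_i\|_2^2} \;+\; \sqrt{\sum_i \opnorm{f_i - \hat{f}_i}^2} \;=\; m_F(x,y) + \sqrt{\sum_i \opnorm{f_i - \hat{f}_i}^2}.
\end{equation*}
Since $\hat{\delta}$ is feasible for $m_{\hat{F}}$, this yields $m_{\hat{F}}(x,y) \le m_F(x,y) + \sqrt{\sum_i \opnorm{f_i - \hat{f}_i}^2}$, and swapping the roles of $F$ and $\hat{F}$ gives the matching lower bound, proving the claim.

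The main obstacle is finding the right construction rather than executing the estimate: the definition of the all-layer margin normalizes each perturbation by the previous layer's norm precisely so that operator-norm differences between $f_i$ and $\hat{f}_i$ translate into additive perturbations $\hat{\delta}_i - \delta_i$ of controlled Euclidean norm. A minor technical point is the degenerate case $\|h_{i-1}\|_2 = 0$, where the construction of $\hat{\delta}_i$ above is undefined; this is benign because in that case $f_i(0)$ and $\hat{f}_i(0)$ must agree (since $\opnorm{f_i - \hat{f}_i}$ is finite), so we may take $\hat{\delta}_i = 0$ and the identity $\hat{h}_i = h_i$ is preserved. Once the construction is in place, the rest of the argument is just the operator-norm bound and Minkowski's inequality.
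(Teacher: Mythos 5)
Your proof is correct and is essentially the same as the paper's: the key construction $\hat{\delta}_i = \delta_i + \frac{f_i(h_{i-1}) - \hat{f}_i(h_{i-1})}{\|h_{i-1}\|_2}$, the inductive verification that $\hat{F}(x,\hat{\delta}) = F(x,\delta)$, the operator-norm bound $\|\hat{\delta}_i\|_2 \le \|\delta_i\|_2 + \opnorm{f_i-\hat{f}_i}$, and the symmetry argument all match the paper's proof sketch (and the general Claim in the appendix). You add one small point the paper leaves implicit, namely the degenerate case $\|h_{i-1}\|_2 = 0$, which you resolve correctly using finiteness of $\opnorm{f_i - \hat{f}_i}$.
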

\begin{proof}[Proof sketch]
	Let $\delta^\star$ be the optimal choice of $\delta$ in the definition of $m_F(x, y)$. We will construct $\hat{\delta}$ such that $\|\hat{\delta}\|_2 \le \|\delta^\star\|_2 + \sqrt{\sum_i \opnorm{f_i - \hat{f}_i}^2}$ and $\gamma(\hat{F}(x, \hat{\delta}), y) = 0$ as follows: define $\hat{\delta}_i \triangleq \delta_i^\star + \Delta_i$ for $\Delta_i \triangleq \frac{f_i(h_{i - 1}(x, \delta^\star)) - \hat{f}_i(h_{i - 1}(x, \delta^\star))}{\|h_{i - 1}(x, \delta^\star)\|_2}$, where $h$ is defined as in~\eqref{eq:f_delta_def} with respect to the classifier $F$. Note that by our definition of $\opnorm{\cdot}$, we have $\|\Delta_i\|_2 \le \opnorm{f_i - \hat{f}_i}$. Now it is possible to check inductively that $\hat{F}(x, \hat{\delta}) = F(x, \delta^\star)$. In particular, $\hat{\delta}$ is satisfies the misclassification constraint in the all-layer margin objective for $\hat{F}$. Thus, it follows that 
$
	m_{\hat{F}}(x, y) \le \|\hat{\delta}\|_2 \le \|\delta^\star\|_2 + \|\Delta\|_2 \le m_F(x, y) + \sqrt{\sum_i \opnorm{f_i - \hat{f}_i}^2}\nonumber
$, where the last inequality followed from $\|\Delta_i\|_2 \le \opnorm{f_i - \hat{f}_i}$. With the same reasoning,we obtain $m_F(x, y) \le m_{\hat{F}}(x, y) + \sqrt{\sum_i \opnorm{f_i - \hat{f}_i}^2}$, so $|m_F(x, y) - m_{\hat{F}}(x, y)| \le \sqrt{\sum_i \opnorm{f_i - \hat{f}_i}^2}$. 
\end{proof}
Given Claim~\ref{claim:simple_m_lip}, Lemma~\ref{lem:simple_m_cover} follows simply by composing $\epsilon_i$-covers of $\cF_i$. We prove a more general version in Section~\ref{sec:all_layer_marg_app} (see Lemmas~\ref{lem:mcover} and~\ref{lem:gen_covering}.) To complete the proof of Theorem~\ref{thm:simple_generalization}, we can apply Lemma~\ref{lem:simple_m_cover} to bound the complexity of the family $\pmarg \circ \cF$, allowing us to utilize the generalization bound in Lemma~\ref{lem:general_margin_main}.

\paragraph{Connection to (normalized) output margin} Finally, we check that when $F$ is a linear classifier, $m_F$ recovers the standard output margin. Thus, we can view the all-layer margin as an extension of the output margin to deeper classifiers.
\begin{example}
	In the binary classification setting with a linear classifier $F(x) = w^\top x$ where the data $x$ has norm $1$, we have
$
	\simpm_F(x, y) = \max\{0, yw^\top x\} = \gamma(F(x), y) \nonumber
$.
\end{example}

For deeper models, the all-layer margin can be roughly bounded by a quantity which normalizes the output margin by Jacobian and hidden layer norms. We formalize this in Lemma~\ref{lem:simple_m_lb} and use this to prove our main generalization bound for neural nets, Theorem~\ref{thm:nn_gen}.

\section{Generalization Guarantees for Neural Networks}\label{sec:neural_net_main}
Although the all-layer margin is likely difficult to compute exactly, we can analytically lower bound it for neural nets with smooth activations. In this section, we obtain a generalization bound that depends on computable quantities by substituting this lower bound into Theorem~\ref{thm:simple_generalization}. Our bound considers the Jacobian norms, hidden layer norms, and output margin on the training data, and avoids the exponential depth dependency when these quantities are well-behaved, as is the case in practice~\citep{arora2018stronger,nagarajan2019deterministic,wei2019data}. Prior work~\citep{nagarajan2019deterministic,wei2019data} avoided the exponential depth dependency by considering these same quantities but required complicated proof frameworks. We obtain a simpler proof with tighter dependencies on these quantities by analyzing the all-layer margin. In Section~\ref{sec:relu_app}, we also provide a more general bound for neural networks with any activations (including relu) which only depends on the all-layer margins and weight matrix norms.

\sloppy The neural net classifier $F$ will be parameterized by $r$ weight matrices $\{W_{(i)}\}$ and compute $F(x) = W_{(r)} \phi (\cdots \phi (W_{(1)} x) \cdots )$ for  smooth activation $\phi$. Let $d$ be the largest layer dimension. We model this neural net by a composition of $k = 2r - 1$ layers alternating between matrix multiplications and applications of $\phi$ and use the subscript in parenthesis $_{(i)}$ to emphasize the different indexing system between weight matrices and all the layers. We will let $\size_{(i)}(x)$ denote the $\|\cdot\|_2$ norm of the layer preceding the $i$-th matrix multiplication evaluated on input $x$, and $\lip_{j \ot i}(x)$ will denote the $\opnorm{\cdot}$ norm of the Jacobian of the $j$-th layer with respect to the $i-1$-th layer evaluated on $x$. The following theorem bounds the generalization error of the network and is derived by lower bounding the all-layer margin in terms the quantities $\size_{(i)}(x),\lip_{j \ot i}(x), \gamma(F(x), y)$. 

\begin{theorem}\label{thm:nn_gen}
	Assume that the activation $\phi$ has a $\lip_{\phi}'$-Lipschitz derivative. Fix reference matrices $\{A_{(i)},B_{(i)}\}_{i = 1}^k$ and any integer $q > 0$. With probability $1 - \delta$ over the draw of the training sample $P_n$, all neural nets $F$ which achieve training error 0 satisfy
		\begin{align}\label{eq:nn_gen:bound}
		\E_P[\ellzo \circ F] \le O \left(\frac{ \left(\sum_i \|\lip^{\nn}_{(i)}\|^{2/3}_{L_q(P_n)} a_{(i)}^{2/3} \right)^{3q/(q + 2)}q\log^2 n}{n^{q/(q + 2)}} \right) + \zeta 
	\end{align}
	where $\lip^{\nn}_{(i)}$ captures a local Lipschitz constant of perturbations at layer $i$ and is defined by
	\begin{align}\label{eq:lip_nn}
	\lip^{\nn}_{(i)}(x, y) &\triangleq \frac{\size_{(i - 1)}(x) \lip_{2r - 1 \ot 2i}(x)}{\gamma(F(x), y)} + \psi_{(i)}(x, y) 
	\end{align}	for a secondary term $\psi_{(i)}(x, y)$ given by
	{\small 
		\begin{align}
		\psi_{(i)}(x, y) &\triangleq \sum_{j = i}^{r - 1} \frac{\size_{(i - 1)}(x) \lip_{2j \ot 2i}(x)}{\size_{(j)}(x)}\nonumber + \sum_{1 \le j \le 2i-1 \le j' \le 2r - 1} \frac{\lip_{j' \ot 2i}(x) \lip_{2i - 2 \ot j}(x)}{\lip_{j' \ot j}(x)}  \\&+ \sum_{1 \le j \le j' \le 2r - 1} \sum_{j''=\max\{2i, j\}, j'' \textup{even}}^{j'} \frac{\lip_{\phi}' \kappa_{j' \ot j'' + 1}(x) \kappa_{j'' - 1 \ot 2i}(x) \kappa_{j'' - 1 \ot j}(x)\size_{(i - 1)}(x)}{ \kappa_{j' \ot j}(x)} \nonumber
		\end{align}}Here the second and third summations above are over the choices of $j, j'$ satisfying the constraints specified in the summation. We define $a_{(i)}$ by
$
	a_{(i)} \triangleq \min\{\sqrt{d} \fronorm{W_{(i)} - A_{(i)}}, \|W_{(i)} - B_{(i)}\|_{1, 1} \} \sqrt{\log d} + \poly(n^{-1}) \nonumber
$
	and $\zeta \lesssim \frac{r \log n + \log(1/\delta) + \sum_i \log(a_{(i)} + 1)}{n}$ is a low-order term. 		
\end{theorem}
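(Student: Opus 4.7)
The plan is to derive Theorem~\ref{thm:nn_gen} as an instantiation of the $q$-moment all-layer-margin generalization bound (Theorem~\ref{thm:compl_m_gen}) at the smooth neural net, supplying two explicit ingredients: a per-layer operator-norm covering bound scaling with $a_{(i)}$, and an analytic lower bound on $m_F(x,y)$ in terms of $\lip^{\nn}_{(i)}(x,y)$.

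For the covering ingredient, I would view $F$ as a composition of $2r-1$ alternating layers (matrix multiplications by $W_{(i)}$ and componentwise applications of the fixed activation $\phi$). The activation layers have zero operator-norm complexity. For each matrix layer, standard Maurey-type arguments give $\log \cover_{\opnorm{\cdot}}(\epsilon,\cdot)\lesssim a_{(i)}^2/\epsilon^2$: the Frobenius bound comes from Maurey's empirical method (losing a $\sqrt{\log d}$ factor), and the $\|\cdot\|_{1,1}$ bound from a sparse-approximation variant. Truncation to a finite grid of norm parameters is absorbed into the $\poly(n^{-1})$ slack in $a_{(i)}$ and into $\zeta$.

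For the margin lower bound, I would expand $F(x,\delta)-F(x)$ by propagating a perturbation $(\delta_1,\ldots,\delta_{2r-1})$ through the network, using that $\phi'$ is $\lip'_\phi$-Lipschitz to control higher-order contributions. To first order, the change induced by $\delta_i$ at a layer just before a matrix multiplication is $\size_{(i-1)}(x)\,\lip_{2r-1\ot 2i}(x)\,\|\delta_i\|_2$ along a fixed direction, which after dividing by $\gamma(F(x),y)$ produces the principal term of~\eqref{eq:lip_nn}. The secondary $\psi_{(i)}$ absorbs three residual effects: downstream perturbations $\delta_j$ with $j>i$ are scaled by the perturbed norm $\|h_{j-1}(x,\delta)\|_2$ rather than $\size_{(j-1)}(x)$, producing the first summation; perturbing layer $i$ shifts the operating point of each later activation, contributing the second summation through a Jacobian-of-Jacobians decomposition; and the $\lip'_\phi$-smoothness of $\phi'$ produces the triply-indexed third summation. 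A single Cauchy--Schwarz in $\sqrt{\sum_i\|\delta_i\|_2^2}$ then converts the output-change bound into the requirement $\sqrt{\sum_i\|\delta_i\|_2^2}\gtrsim 1/\sqrt{\sum_i (\lip^{\nn}_{(i)}(x,y))^2}$ for misclassification, i.e.\ $m_F(x,y)^{-2}\lesssim \sum_i (\lip^{\nn}_{(i)}(x,y))^2$.

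To assemble, I would substitute these two estimates into the $q$-moment generalization inequality. Because Lemma~\ref{lem:simple_m_cover} allows freely distributing the total cover radius across layers (or, equivalently, attaching layer-dependent weights to perturbations), a H\"older-type balancing between $a_{(i)}$ and $\|\lip^{\nn}_{(i)}\|_{L_q(P_n)}$ at each layer produces the characteristic $(\cdot)^{2/3}$ inside the sum and the overall $3q/(q+2)$ exponent in~\eqref{eq:nn_gen:bound}. Union-bounding over a discrete grid of norm parameters indexing $a_{(i)}$ is absorbed into $\zeta$. The principal obstacle is the margin lower bound: making all the cross-layer terms---perturbed subnetwork Jacobians, fluctuations of $\|h_{j-1}(x,\delta)\|_2$ around $\size_{(j-1)}(x)$, and the $\lip'_\phi$ second-order corrections---collapse, after a single Cauchy--Schwarz, into the clean $\lip^{\nn}_{(i)}$ expression of~\eqref{eq:lip_nn} requires careful bookkeeping of which perturbation feeds into which downstream Jacobian piece; getting the triply-indexed sum in $\psi_{(i)}$ exactly right is the most delicate part.
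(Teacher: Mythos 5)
Your plan has the right skeleton — instantiate the $q$-moment all-layer-margin generalization bound at the smooth net, supply per-layer operator-norm covering numbers scaling as $a_{(i)}^2/\epsilon^2$, supply an analytic lower bound on $m_F$ in terms of $\lip^{\nn}_{(i)}$, and balance $a_{(i)}$ against moments of $\lip^{\nn}_{(i)}$. This matches the paper's overall route (the paper goes through the slightly more general Theorem~\ref{thm:generalization}, which bakes the balancing into a tuned choice of the $\gnorm{\cdot}$-weights $\alpha_i$ with $p=q/(q-1)$ rather than fixing $p=2$ and applying H\"older afterward; the paper explicitly notes starting from Theorem~\ref{thm:compl_m_gen} as you do is an equivalent route). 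The covering ingredient is also fine in spirit, though for the Frobenius branch the paper does not use Maurey: it covers $\cM_{\textup{fro}}(B)$ in spectral norm by truncating the SVD to the top $\lfloor B^2/\epsilon^2 \rfloor$ singular directions and covering the two factors (Lemma~\ref{lem:spectral_cover}), which is what yields the $\sqrt{d}\,\fronorm{W_{(i)}-A_{(i)}}\sqrt{\log d}$ shape of $a_{(i)}$; Maurey-style sparsification is used only for the $\|\cdot\|_{1,1}$ branch.

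The genuine gap is in the margin lower bound (Lemma~\ref{lem:simple_m_lb} / Lemma~\ref{lem:m_lb}), which is the crux of the theorem. What you describe — expand $F(x,\delta)-F(x)$, identify the first-order term $\size_{(i-1)}(x)\lip_{2r-1\ot 2i}(x)\|\delta_i\|$, and push the residual effects into $\psi_{(i)}$ — is exactly the heuristic the paper writes down in Section~\ref{sec:neural_net_main} and flags as non-rigorous. The naive Taylor route is circular as stated: to control the remainder you must bound the Jacobians and hidden-layer norms of the \emph{perturbed} network, which themselves depend on $\delta$, and "careful bookkeeping" does not close that loop. What the paper actually does (Section~\ref{sec:m_F_lb}) is introduce the augmented indicator $\cI(\delta;x,y)$ of~\eqref{eq:aug_loss}: the product of the ramp loss $T_\rho(\gamma(f_{k\ot 1}(x,\delta),y))$ with soft indicators $\one[\le t_i](\|f_{i\ot 1}(x,\delta)\|)$ and $\one[\le \plip_{j\ot i}](\opnorm{J_{j\ot i}(x,\delta)})$ that every perturbed hidden-layer norm and every perturbed interlayer Jacobian stay within a factor $2$ of their unperturbed values. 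Because each indicator vanishes once its constraint is violated, the whole product can be shown to be \emph{globally} $\lip^\star_i(x,y)$-Lipschitz in each $\delta_i$ via the product-Lipschitz machinery of Lemmas~\ref{lem:prod_lip}--\ref{lem:jac_prod_lip}; then $\1[\gamma(F(x,\delta),y)\ge 0]\ge \cI(\delta;x,y)\ge 1-\sum_i\lip^\star_i(x,y)\|\delta_i\|$, and the dual-norm inequality gives the lower bound on $m_F$. This self-gating device is the missing idea: without it (or an explicit continuity/bootstrap argument to the same effect), your plan has no rigorous handle on the perturbed-network quantities in the remainder, and the terms of $\psi_{(i)}$ — in particular the triply-indexed third sum you correctly identify as the delicate one — will not assemble.
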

For example, when $q = 2$, from~\eqref{eq:nn_gen:bound} we obtain the following bound which depends on the second moment of $\lip_{(i)}^{\nn}$ and features the familiar $1/\sqrt{n}$ convergence rate in the training set size.
	\begin{align}\E_P[\ellzo \circ F] \lesssim \frac{\Big(\sum_i \E_{P_n} \big[(\lip^{\nn}_{(i)})^2\big]^{1/3} (a_{(i)})^{2/3}\Big)^{3/2}\log^2 n}{\sqrt{n}}   + \xi \nonumber
\end{align}
For larger $q$, we obtain a faster convergence rate in $n$, but the dependency on $\lip^{\nn}_{(i)}$ gets larger. We will outline a proof sketch which obtains a variant of Theorem~\ref{thm:nn_gen} with a slightly worse polynomial dependency on $\lip^{\nn}_{(i)}$ and $a_{(i)}$. For simplicity we defer the proof of the full Theorem~\ref{thm:nn_gen} to Sections~\ref{sec:nn_app} and~\ref{sec:gen_bound_smooth}. First, we need to slightly redefine $m_F$ so that perturbations are only applied at linear layers (formally, fix $\delta_{2i} = 0$ for the even-indexed activation layers, and let $\delta_{(i)}\triangleq \delta_{2i -1}$ index perturbations to the $i$-th linear layer). 
It is possible to check that Lemma~\ref{lem:simple_m_cover} still holds since activation layers correspond to a singleton function class $\{\phi\}$ with log covering number 0. Thus, the conclusion of Theorem~\ref{thm:simple_generalization} also applies for this definition of $m_F$. Now the following lemma relates this all-layer margin to the output margin and Jacobian and hidden layer norms, showing that $m_F(x, y)$ can be lower bounded in terms of $\{\lip^{\nn}_{(i)}(x, y)\}$.  
\begin{lemma}\label{lem:simple_m_lb}
	In the setting above, we have the lower bound
$
	m_F(x, y) \ge \frac{1}{\|\{\lip^{\nn}_{(i)}(x, y)\}_{i = 1}^r\|_2}
$.
\end{lemma}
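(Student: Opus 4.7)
The plan is to take any perturbation $\delta=(\delta_{(1)},\ldots,\delta_{(r)})$ that achieves the minimum in the modified all-layer margin (perturbations only at linear layers) and show $\|\delta\|_2 \ge 1/\|\{\lip^{\nn}_{(i)}(x,y)\}_{i=1}^r\|_2$. The central tool is a Taylor expansion of $F(x,\delta)$ around $F(x)$, exploiting the $\lip_{\phi}'$-Lipschitz derivative of $\phi$.

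Concretely, I would write unperturbed $h_j = h_j(x,0)$, perturbed $\tilde{h}_j = h_j(x,\delta)$, and deviation $\Delta_j = \tilde{h}_j - h_j$. For linear layers $j=2i-1$, $\Delta_j = W_{(i)}\Delta_{j-1} + \delta_{(i)}\|\tilde{h}_{j-1}\|_2$; for activation layers, Taylor's theorem gives $\Delta_j = D\phi(h_{j-1})\,\Delta_{j-1} + E_j$ with $\|E_j\|_2 \le \tfrac{1}{2}\lip_{\phi}'\|\Delta_{j-1}\|_2^2$. Unrolling both recursions, $\Delta_{2r-1} = F(x,\delta)-F(x)$ decomposes as a first-order piece $\sum_i (\partial h_{2r-1}/\partial h_{2i-1})\,\delta_{(i)}\,\size_{(i-1)}(x)$ — whose operator-norm bound produces the leading term $\size_{(i-1)}(x)\lip_{2r-1\ot 2i}(x)$ in $\gamma(F(x),y)\lip^{\nn}_{(i)}$ — plus three categories of higher-order remainder that term-by-term match the three summations defining $\psi_{(i)}$: (a) replacing the perturbed norm $\|\tilde{h}_{j-1}\|_2$ by the unperturbed $\size_{(j)}(x)$ in each linear-layer update produces the first sum; (b) interactions between perturbations at different layers propagated through composed Jacobians produce the second sum, with the denominator $\lip_{j'\ot j}(x)$ arising from renormalizing products of Jacobian norms by the true composed Jacobian; and (c) second-order Taylor remainders $E_j$ produce the third sum, with its explicit $\lip_{\phi}'$ factor.

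Next, because $F(x)$ classifies $x$ correctly with output margin $\gamma(F(x),y)$ while $F(x,\delta)$ misclassifies, the $O(1)$-Lipschitzness of $z\mapsto z_y - \max_{y'\ne y}z_{y'}$ gives $\|\Delta_{2r-1}\|_2 \gtrsim \gamma(F(x),y)$. Combining this lower bound with the upper bound $\|\Delta_{2r-1}\|_2 \lesssim \gamma(F(x),y)\sum_i \lip^{\nn}_{(i)}(x,y)\,\|\delta_{(i)}\|_2$ from the previous step, dividing through by $\gamma(F(x),y)$, and applying Cauchy--Schwarz in the $i$-sum yields $1 \lesssim \|\{\lip^{\nn}_{(i)}\}\|_2\,\|\delta\|_2$, which rearranges to the lemma.

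The main obstacle is bookkeeping: making the three remainder categories line up exactly with the summations in $\psi_{(i)}$, particularly explaining the denominators $\size_{(j)}(x)$ and $\lip_{j'\ot j}(x)$. I would handle this by proving by strong induction on $j$ a bound of the form $\|\Delta_j\|_2 \le \alpha_j\|\delta\|_2 + O(\|\delta\|_2^2)$, with $\alpha_j$ expressed as a sum of products of $\lip_{\cdot\ot\cdot}$ and $\size_{(\cdot)}$ factors whose telescoping structure is dictated precisely by the normalizer $\lip_{j'\ot j}(x)$ in $\psi_{(i)}$. Under the working hypothesis $\|\delta\|_2 \le 1/\|\{\lip^{\nn}_{(i)}\}\|_2$, the quadratic-in-$\|\delta\|_2$ contributions are dominated by the first-order scale and collapse cleanly into $\psi_{(i)}$, closing the argument.
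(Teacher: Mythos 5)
Your plan closely tracks the paper's own ``heuristic derivation'' that appears immediately after the statement of Lemma~\ref{lem:simple_m_lb}: linearize $\gamma(F(x,\delta),y)$ in $\delta$, recognize $\size_{(i-1)}(x)\lip_{2r-1\ot 2i}(x)/\gamma(F(x),y)$ as the first-order sensitivity, and sweep the curvature into $\psi_{(i)}$. The paper explicitly labels that derivation ``imprecise and non-rigorous'' and then proves the lemma by an entirely different mechanism, which I think you would be forced to reinvent.

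The paper's actual proof is Lemma~\ref{lem:m_lb} in Appendix~\ref{sec:m_F_lb}. It introduces the augmented indicator $\cI(\delta;x,y)$ of~\eqref{eq:aug_loss}, a \emph{product} of the ramp $T_\rho(\gamma(F(x,\delta),y))$ with soft indicators $\one[\le \psize_j](\|f_{j\ot 1}(x,\delta)\|)$ and $\one[\le\plip_{j'\ot j}](\opnorm{J_{j'\ot j}(x,\delta)})$, with thresholds set to the unperturbed values so that $\cI(0;x,y)=1$. Lemma~\ref{lem:laug_lip} then shows $\cI$ is $\lip^\star_i$-Lipschitz in $\delta_i$, by a ``product-Lipschitz'' device (Lemma~\ref{lem:prod_lip}): whenever you differentiate one factor, the surviving factors supply exactly the bounds on the perturbed layer and Jacobian norms needed to control that derivative. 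Since $\cI\le\1[\gamma(F(x,\delta),y)\ge 0]$ pointwise, the Lipschitz bound plus $\cI(0;x,y)=1$ immediately forces $m_F(x,y)\ge 1/\|\kappa^\star\|$ with no continuity or bootstrapping argument at all.

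Two concrete gaps in your proposed induction. First, each step of the Taylor expansion needs Jacobians and layer norms at \emph{perturbed} points, but every quantity entering $\lip^{\nn}_{(i)}$ and $\psi_{(i)}$ is evaluated at $\delta=0$; controlling the perturbed objects requires already knowing $\|\Delta_{j'}\|_2$ is small at lower layers, so ``under the working hypothesis'' hides a genuine circularity that needs a careful continuity-in-$t$ (or contradiction/first-exit-time) argument to break. Second, and more fundamentally, the characteristic denominators $\size_{(j)}(x)$ and $\lip_{j'\ot j}(x)$ in $\psi_{(i)}$ do not fall out of a naive remainder analysis: in the paper they are literally the Lipschitz constants $1/\psize_j$ and $1/\plip_{j'\ot j}$ of the corresponding soft indicators. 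A direct Taylor bound produces the numerators of those terms \emph{without} the normalization, which gives a genuinely weaker quantity (by submultiplicativity of operator norms, $\lip_{j'\ot 2i}\lip_{2i-2\ot j}/\lip_{j'\ot j}\ge 1$ and the gap can be huge). You'd prove a valid lower bound on $m_F$, but not the one in the statement. To match the statement you would need some mechanism---effectively the indicator normalization---to divide out by the composed Jacobian and layer norms, and that is precisely the machinery the paper introduces.
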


Directly plugging the above lower bound into Theorem~\ref{thm:simple_generalization} and choosing $\comp_{2i} = 0$, $\comp_{2i - 1} = a_{(i)}$ would give a variant of Theorem~\ref{thm:nn_gen} that obtains a different polynomial in $\lip^{\nn}_{(i)}$, $a_{(i)}$. 

\paragraph{Heuristic derivation of Lemma~\ref{lem:simple_m_lb}} We compute the derivative of $F(x, \delta)$ with respect to $\delta_{(i)}$: 
$
D_{\delta_{(i)}} F(x, \delta) = D_{h_{2i - 1}(x, \delta)} F(x, \delta) \|h_{2i - 2}(x, \delta)\|_2 \nonumber
$. We abused notation to let $D_{h_{2i - 1}(x, \delta)} F(x, \delta)$ denote the derivative of $F$ with respect to the $2i - 1$-th perturbed layer evaluated on input $(x, \delta)$. By definitions of $\lip_{j \ot i}$, $\size_{(i)}$ and the fact that the output margin is 1-Lipschitz, we obtain
$
\|D_{\delta_{(i)}} \gamma(F(x, \delta), y)|_{\delta =0}\|_2 \le \opnorm{D_{h_{2i - 1}(x, \delta)} F(x, 0)} \|h_{2i - 2}(x, 0)\|_2 = \lip_{2r - 1 \ot 2i}(x) \size_{(i - 1)}(x) 
$.
 With the first order approximation $\gamma(F(x, \delta), y) \approx \gamma(F(x), y) + \sum_i D_{\delta_{(i)}} \gamma(F(x, \delta), y) |_{\delta = 0} \delta_{(i)}$ around $\delta = 0$, we obtain
\begin{align}
\gamma(F(x, \delta), y) 
&\ge \gamma(F(x), y) - \sum_i  \lip_{2r - 1 \ot 2i}(x) \size_{(i - 1)}(x) \|\delta_{(i)}\|_2 \nonumber
\end{align}
The right hand side is nonnegative whenever $\|\delta\|_2 \le \frac{\gamma(F(x), y)}{\| \{ \lip_{2r - 1 \ot 2i}(x) \size_{(i - 1)}(x)\}_{i = 1}^r \|_2}$, which would imply that $m_F(x, y) \ge  \frac{\gamma(F(x), y)}{\| \{ \lip_{2r - 1 \ot 2i}(x) \size_{(i - 1)}(x)\}_{i = 1}^r \|_2}$. However, this conclusion is imprecise and non-rigorous because of the first order approximation -- to make the argument rigorous, we also control the smoothness of the network around $x$ in terms of the interlayer Jacobians, ultimately resulting in the bound of Lemma~\ref{lem:simple_m_lb}. We remark that the quantities $\lip^{\nn}_{(i)}$ are not the only expressions with which we could lower bound $m_F(x, y)$. Rather, the role of $\lip^{\nn}_{(i)}$ is to emphasize the key term $\frac{s_{(i - 1)}(x) \lip_{2r - 1 \ot 2i}(x)}{\gamma(F(x), y)}$, which measures the first order stability of the network to perturbation $\delta_{(i)}$ and relates the all-layer margin to the output margin. As highlighted above, if this term is small, $m_F$ will be large so long as the network is sufficiently smooth around $(x, y)$, as captured by $\psi_{(i)}(x, y)$.

\paragraph{Comparison to existing bounds} We can informally compare Theorem~\ref{thm:nn_gen} to the existing bounds of~\citep{nagarajan2019deterministic,wei2019data} as follows. First, the leading term $\frac{\size_{(i - 1)}(x) \lip_{2r - 1 \ot 2i}(x)}{\gamma(F(x), y)}$ of $\lip^{\nn}_{(i)}$ depends on three quantities all evaluated \textit{on the same training example}, whereas the analogous quantity in the bounds of~\citep{nagarajan2019deterministic,wei2019data} appears as $\max_{P_n} \frac{1}{\gamma(F(x), y)} \cdot \max_{P_n} \size_{(i - 1)}(x)  \cdot \max_{P_n}\lip_{2r - 1 \ot 2i}(x)$, where each maximum is taken over the entire training set.

As we have $\|\lip^{\nn}_{(i)}\|_{L_q(P_n)} \le \max_{P_n} \lip^{\nn}_{(i)}(x, y)$, the term $\|\lip^{\nn}_{(i)}\|_{L_q(P_n)} $ in our bound can be much smaller than its counterpart in the bounds of~\citep{nagarajan2019deterministic,wei2019data}. An interpretation of the parameter $q$ is that we obtain fast (close to $n^{-1}$) convergence rates if the model fits every training example perfectly with large all-layer margin, or we could have slower convergence rates with better dependence on the all-layer margin distribution. It is unclear whether the techniques in other papers can achieve convergence rates faster than $O(1/\sqrt{n})$ because their proofs require the simultaneous convergence of multiple data-dependent quantities, whereas we bound everything using the single quantity $m_F$.

Additionally, we compare the dependence on the weight matrix norms relative to $n$ (as the degree of $n$ in our bound can vary). For simplicitly, assume that the reference matrices $A_{(i)}$ are set to $0$. Our dependence on the weight matrix norms relative to the training set size is, up to logarithmic factors, $\left(\frac{\min\{\sqrt{d} \fronorm{W_{(i)}}, \|W_{(i)}\|{1, 1}\}}{\sqrt{n}}\right)^{2q/(q + 2)}$, which always matches or improves on the dependency obtained by PAC-Bayes methods such as~\citep{neyshabur2017pac,nagarajan2019deterministic}. ~\citet{wei2019data} obtain the dependency $\frac{\|{W_{(i)}}^\top \|_{2, 1}}{\sqrt{n}}$, where $\|{W_{(i)}}^\top \|_{2, 1}$ is the sum of the $\|\cdot\|_2$ norms of the rows of $W_{(i)}$. This dependency on $W_{(i)}$ is always smaller than ours.
Finally, we note that Theorem~\ref{thm:simple_generalization} already gives tighter (but harder to compute) generalization guarantees for relu networks directly in terms of $m_F$. Existing work contains a term which depends on inverse pre-activations shown to be large in practice~\citep{nagarajan2019deterministic}, whereas $m_F$ avoids this dependency and is potentially much smaller. We explicitly state the bound in Section~\ref{sec:relu_app}.

\section{Generalization Guarantees for Robust Classification}\label{sec:adv_generalization}
In this section, we apply our tools to obtain generalization bounds for adversarially robust classification. Prior works rely on relaxations of the adversarial loss to bound adversarially robust generalization for neural nets~\citep{khim2018adversarial,yin2018rademacher}. These relaxations are not tight and in the case of~\citep{yin2018rademacher}, only hold for neural nets with one hidden layer. To the best of our knowledge, our work is the first to \textit{directly} bound generalization of the robust classification error for any network. Our bounds are formulated in terms of data-dependent properties in the adversarial neighborhood of the training data and avoid explicit exponential dependencies in depth. 
Let $\cB^\adv(x)$ denote the set of possible perturbations to the point $x$ (typically some norm ball around $x$). We would like to bound generalization of the adversarial classification loss $\ellzo^\adv$ defined by $\ellzo^\adv(F(x), y) \triangleq \max_{x' \in \cB^\adv(x)}\ellzo(F(x'), y)$. We prove the following bound which essentially replaces all data-dependent quantities in Theorem~\ref{thm:nn_gen} with their adversarial counterparts. 

\begin{theorem}\label{thm:adv_nn_gen}
	Assume that the activation $\phi$ has a $\lip_{\phi}'$-Lipschitz derivative. Fix reference matrices $\{A_{(i)},B_{(i)}\}_{i = 1}^k$ and any integer $q > 0$. With probability $1 - \delta$ over the draw of the training sample $P_n$, all neural nets $F$ which achieve robust training error 0 satisfy
		\begin{align}
		\E_P[\ellzo^\adv \circ F] \le O \left(\frac{q\log^2 n \left(\sum_i \|\lip^{\adv}_{(i)}\|^{2/3}_{L_q(P_n)} a_{(i)}^{2/3} \right)^{3q/(q + 2)}}{n^{q/(q + 2)}} \right) + \zeta \nonumber
	\end{align}
		where $\lip^{\adv}_{(i)}$ is defined by $\lip^{\adv}_{(i)}(x, y) \triangleq \max_{x' \in \cB^\adv(x)} \lip^{\nn}_{(i)}(x', y)$ for $\lip^{\nn}_{(i)}$ in~\eqref{eq:lip_nn}, and $a_{(i)}, \zeta$ are defined the same as in Theorem~\ref{thm:nn_gen}.
	
	\end{theorem}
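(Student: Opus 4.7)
The plan is to mirror the proof strategy of Theorem~\ref{thm:nn_gen} using an adversarial analogue of the all-layer margin. Define
\begin{align}
m_F^{\adv}(x, y) \triangleq \min_{x' \in \cB^{\adv}(x)} m_F(x', y). \nonumber
\end{align}
Since $m_F(x', y) > 0$ iff $F$ correctly classifies $x'$, this adversarial margin satisfies $m_F^{\adv}(x, y) > 0$ iff $\ellzo^{\adv}(F(x), y) = 0$. Hence $m_F^{\adv}$ is a valid margin function (in the sense of Definition~\ref{def:generalized_margin}) for the adversarial $0$-$1$ loss, and if the classifier has zero robust training error then $m_F^{\adv}(x_i, y_i) > 0$ on every training point.

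Next I would show that the complexity decomposition (Lemma~\ref{lem:simple_m_cover}) carries over to $m_F^{\adv}$ with exactly the same bound. The key observation is that Claim~\ref{claim:simple_m_lip} is \emph{uniform in the input}: the bound $|m_F(x, y) - m_{\hat F}(x, y)| \le \sqrt{\sum_i \opnorm{f_i - \hat{f}_i}^2}$ holds for every $x$, with a Lipschitz constant independent of $x$. Taking $\min$ over $x' \in \cB^{\adv}(x)$ preserves this, giving
\begin{align}
|m_F^{\adv}(x, y) - m_{\hat F}^{\adv}(x, y)| \le \sup_{x' \in \cB^{\adv}(x)} |m_F(x', y) - m_{\hat F}(x', y)| \le \sqrt{\sum_i \opnorm{f_i - \hat{f}_i}^2}. \nonumber
\end{align}
Consequently, a joint $\epsilon$-cover of $(\cF_1, \ldots, \cF_k)$ under $\opnorm{\cdot}$ yields an $\infty$-cover of $\{m_F^{\adv} : F \in \cF\}$ with the same size, and the analogue of Lemma~\ref{lem:simple_m_cover} holds. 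Plugging this into Lemma~\ref{lem:general_margin_main} applied to $\ellzo^{\adv}$ and $m_F^{\adv}$ produces a generalization bound in terms of $\E_{P_n}[1/m_F^{\adv}(x,y)^q]$ of exactly the form of Theorem~\ref{thm:simple_generalization}, but for the robust loss.

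It remains to relate $1/m_F^{\adv}$ to the Jacobian/hidden-layer/output-margin quantities appearing in $\lip^{\adv}_{(i)}$. Here I would apply Lemma~\ref{lem:simple_m_lb} (the pointwise lower bound $m_F(x', y) \ge 1/\|\{\lip^{\nn}_{(i)}(x', y)\}\|_2$) to each $x' \in \cB^{\adv}(x)$, then take the min over $x'$:
\begin{align}
m_F^{\adv}(x, y) = \min_{x' \in \cB^{\adv}(x)} m_F(x', y) \ge \min_{x' \in \cB^{\adv}(x)} \frac{1}{\|\{\lip^{\nn}_{(i)}(x', y)\}_{i=1}^r\|_2} \ge \frac{1}{\|\{\lip^{\adv}_{(i)}(x, y)\}_{i=1}^r\|_2}, \nonumber
\end{align}
where the last step follows because $\lip^{\adv}_{(i)}(x, y) = \max_{x' \in \cB^{\adv}(x)} \lip^{\nn}_{(i)}(x', y)$ upper-bounds each coordinate pointwise in $x'$. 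Substituting this lower bound into the adversarial generalization bound, and using the same weight-matrix covering estimates to set $\comp_{2i-1} = a_{(i)}$ and $\comp_{2i} = 0$ (activation layers are singleton classes), yields the claimed bound.

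The main obstacle is verifying that the $\infty$-cover constructed via Claim~\ref{claim:simple_m_lip} genuinely controls $m_F^{\adv}$; this works precisely because Claim~\ref{claim:simple_m_lip}'s Lipschitz constant is independent of the input, so it survives the supremum/infimum over $\cB^{\adv}(x)$. A secondary issue is that $m_F^{\adv}$ need not be computable, but the generalization bound only requires a lower bound, which is supplied by the adversarial version of Lemma~\ref{lem:simple_m_lb}. All remaining steps (the moment-$q$ refinement, the $\sqrt{d}\fronorm{\cdot}$ vs.\ $\|\cdot\|_{1,1}$ choice in $a_{(i)}$, and the low-order $\zeta$ term) are identical to the clean case and simply get transported through the proof.
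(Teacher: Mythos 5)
Your proposal follows essentially the same route as the paper: define $m_F^{\adv}$ as the minimum of $m_F$ over the adversarial ball, verify it is a valid margin function for $\ellzo^\adv$, transfer the input-independent Lipschitz property of $m_F$ in $F$ to $m_F^\adv$ (yielding the same covering-number bound), apply the surrogate-loss generalization lemma, and lower-bound $m_F^\adv$ via $\lip^{\adv}_{(i)} = \max_{x'\in\cB^\adv(x)}\lip^{\nn}_{(i)}(x',y)$. The paper's Claim~\ref{lem:adv_m_lip} phrases the Lipschitz step by picking the minimizing $x^\star\in\cB^\adv(x)$ for $m_F$ and invoking Claim~\ref{claim:m_lip} at that point, which is the same $|\inf_a f - \inf_a g| \le \sup_a|f-g|$ inequality you use, so the two arguments are identical in substance.
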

Designing regularizers for robust classification based on the bound in Theorem~\ref{thm:adv_nn_gen} is a promising direction for future work. To prove Theorem~\ref{thm:adv_nn_gen}, we simply define a natural extension to our all-layer margin, and the remaining steps follow in direct analogy to the clean classification setting. We define the adversarial all-layer margin as the smallest all-layer margin on the perturbed inputs:
$
\simpm^{\adv}_F(x, y) \triangleq \min_{x' \in \cB^{\adv}(x)} m_F(x, y) \nonumber
$. 
We note that $\simpm^{\adv}_F(x, y)$ is nonzero if and only if $F$ correctly classifies all adversarial perturbations of $x$. Furthermore, the adversarial all-layer margin also satisfies the uniform Lipschitz property in Claim~\ref{claim:simple_m_lip}. Thus, the remainder of the proof of Theorem~\ref{thm:adv_nn_gen} follows the same steps laid out in Section~\ref{sec:warmup}. As before, we note that Theorem~\ref{thm:adv_nn_gen} requires $m_F$ to be the more general all-layer margin defined in Section~\ref{sec:all_layer_marg_app}. We provide the full proofs in Section~\ref{sec:adv_app}.  

\section{Empirical Application of the All-Layer Margin}
\label{sec:experiments}
Inspired by the good statistical properties of the all-layer margin, we design an algorithm which encourages a larger all-layer margin during training. Our proposed algorithm is similar to adversarial training~\citep{madry2017towards}, except perturbations are applied at all hidden layers instead of the input.~\citet{xie2019adversarial} demonstrate that adversarial training can be used to improve generalization performance in image recognition.~\citet{yan2019adversarial} use adversarial perturbations to input to optimize a notion of margin in the input space.

\begin{algorithm}[t]
	\caption{All-layer Margin Optimization ({\amo})}
	\label{alg:adv_update}
	\begin{algorithmic}
		\Procedure{PerturbedUpdate}{minibatch $B= \{(x_i, y_i)\}_{i = 1}^b$, current parameters $\Theta$}
		\State Initialize $\delta_i = 0$ for $i= 1,\ldots, b$.
		\For{$s = 1, \ldots, t$}
		\ForAll{ $(x_i, y_i) \in B$:}
		\State Update $\delta_i \ot (1- \eta_{\textup{perturb}}\lambda)\delta_i + \eta_{\textup{perturb}} \nabla_{\delta}$ $\ell($\Call{ForwardPerturb}{$x_i$, $\delta_i$, $\Theta$}, $y_i)$
		\EndFor
		\EndFor
		\State Set update $g = \nabla_\Theta\big[\frac{1}{b}\sum_i \ell($ \Call{ForwardPerturb}{$x_i$, $\delta_i$, $\Theta$}, $y_i)\big]$.
		\State Update $\Theta \ot \Theta - \eta (g + \nabla_{\Theta}R(\Theta))$. \Comment{$R$ is a regularizer, i.e. weight decay.}
		\EndProcedure

	\end{algorithmic}
	
	\begin{algorithmic}
		\Function{ForwardPerturb}{$x$, $\delta$, $\Theta$} \Comment{The net has layers $f_1(\cdot; \Theta), \ldots, f_r(\cdot; \Theta)$, {\color{white}aaaaaaaaaaaaaaaaaaaaaaaaaaaaaaaaaaaaaaaaaaaaaaaaaaaaaaaaaaaaaaaaa}with intended perturbations $\delta^{(1)}, \ldots, \delta^{(r)}$. }
		
		\State Initialize $h \ot x$. 
		\For{$j = 1, \ldots, r$}
		\State Update $h \ot f_j(h; \Theta)$.
		\State Update $h \ot h + \|h\| \delta^{(j)}$. 
		\EndFor
		\State \Return $h$
		\EndFunction
	\end{algorithmic}
\end{algorithm}

Letting $\ell$ denote the standard cross entropy loss used in training and $\Theta$ the parameters of the network, consider the following objective:
\begin{align}\label{eq:f_theta_obj}
G(\delta, \Theta; x, y) \triangleq \ell(F_{\Theta}(x, \delta), y) - \lambda \|\delta\|^2_2
\end{align} 
This objective can be interpreted as applying the Lagrange multiplier method to a softmax relaxation of the constraint $\max_{y'} F(x, \delta_1, \ldots, \delta_k)_{y'} \ne y$ in the objective for all-layer margin. 
If $G(\delta, \Theta; x, y)$ is large, this signifies the existence of some $\delta$ with small norm for which $F_{\Theta}(x, \delta)$ suffers large loss, indicating that $\simpm_{F_\Theta}$ is likely small. This motivates the following training objective over $\Theta$:
$
L(\Theta) \triangleq  \E_{P_n}[ \max_\delta G(\delta, \Theta; x, y)] \nonumber
$.
Define $\delta^\star_{\Theta, x, y} \in \argmax_{\delta} G(\delta, \Theta; x, y)$. 
From Danskin's Theorem, if $G(\delta, \Theta; x, y)$ is continuously differentiable\footnote{If we use a relu network, $G(\delta, \Theta; x, y)$ is technically not continuously differentiable, but the algorithm that we derive still works empirically for relu nets.}, then we have that the quantity 
$
-\E_{P_n}[\nabla_{\Theta} G(\delta^\star_{\Theta, x, y}, \Theta; x, y)] \nonumber
$
will be a descent direction in $\Theta$ for the objective $L(\Theta)$ (see Corollary A.2 of~\citep{madry2017towards} for the derivation of a similar statement). Although the exact value $\delta^\star_{\Theta, x, y}$ is hard to obtain, we can use a substitute $\tilde{\delta}_{\Theta, x, y}$ found via several gradient ascent steps in $\delta$. This inspires the following all-layer margin optimization (\amo) algorithm: we find perturbations $\tilde{\delta}$ for each example in the batch via gradient ascent steps on $G(\delta, \Theta; x, y)$. For each example in the batch, we then compute the perturbed loss $\ell(F_{\Theta}(x, \tilde{\delta}_{\Theta, x, y}))$ and update $\Theta$ with its negative gradient with respect to these perturbed losses. This method is formally outlined in the \textproc{PerturbedUpdate} procedure of Algorithm~\ref{alg:adv_update}.\footnote{We note a slight difference with our theory: in the \textproc{ForwardPerturb} function, we perform the update $f_{j \ot 1}(x, \delta) = f_j(f_{j - 1 \ot 1}(x, \delta)) + \|f_j(f_{j - 1 \ot 1}(x, \delta)) \| \delta_j$, rather than scaling $\delta$ by the previous layer norm -- this allows the perturbation to also account for the scaling of layer $j$.}
\begin{table}
	\centering
	\caption{Validation error on CIFAR for standard training vs. {\amo} (Algorithm~\ref{alg:adv_update}).}	\label{tab:cifar_results}
	\begin{tabular}{ c c  c c c } 	Dataset &		Arch. & Setting & Standard SGD & {\amo}\\
		\cline{1-5}
		\multirow{6}{*}{CIFAR-10} & \multirow{3}{*}{WRN16-10} & Baseline & 4.15\% & \textbf{3.42\%}\\
		\cline{3-5}
		& &  No data augmentation & 9.59\% & \textbf{6.74\%}\\
		& &  20\% random labels & 9.43\% & \textbf{6.72\%} \\
		\cline{2-5}
		& \multirow{3}{*}{WRN28-10} & Baseline & 3.82\% & \textbf{3.00\%}\\
		\cline{3-5}
		& &  No data augmentation & 8.28\% & \textbf{6.47\%} \\
		& & 20\% random labels & 8.17\% & \textbf{6.01\%}\\
		\cline{1-5}
		\multirow{4}{*}{CIFAR-100} & \multirow{2}{*}{WRN16-10} & Baseline & 20.12\% & \textbf{19.14\%}\\
		\cline{3-5}
		& & No data augmentation & 31.94\%& \textbf{26.09\%}\\
		\cline{2-5}
		& \multirow{2}{*}{WRN28-10} & Baseline & 18.85\% & \textbf{17.78\%}\\
		\cline{3-5}
		& & No data augmentation & 30.04\% & \textbf{24.67\%}
	\end{tabular}
\end{table}
\begin{table}
	\centering
	\caption{Robust validation error on CIFAR-10 for standard robust training~\citep{madry2017towards} vs. robust {\amo}. The attack model is 50 steps of PGD with 10 random restarts using $\ell_{\infty}$ perturbations with radius $\epsilon=8$.}	\label{tab:adv_cifar_results}
	\begin{tabular}{c c c} 	
		Arch. & ~\citep{madry2017towards} & Robust {\amo}\\
		\cline{1-3}
		WideResNet16-10 & 48.75\% & \textbf{45.94\%}\\ 
		\cline{1-3}
		WideResNet28-10 & 45.47\% & \textbf{42.38\%}
	\end{tabular}
\end{table}

We use Algorithm~\ref{alg:adv_update} to train a WideResNet architecture~\citep{zagoruyko2016wide} on CIFAR10 and CIFAR100 in a variety of settings. Our implementation is available online.\footnote{\url{https://github.com/cwein3/all-layer-margin-opt}} For all of our experiments we use $t = 1$, $\eta_{\textup{perturb}} = 0.01$, and we apply perturbations following conv layers in the WideResNet basic blocks. In Table~\ref{tab:cifar_results} we report the best validation error achieved during a single run of training, demonstrating that our algorithm indeed leads to improved generalization over the strong WideResNet baseline for a variety of settings. Additional parameter settings and experiments for the feedforward VGG~\citep{simonyan2014very} architecture are in Section~\ref{sec:app_exp_clean}. In addition, in Section~\ref{sec:app_exp_clean}, we show that dropout, another regularization method which perturbs each hidden layer, does not offer the same improvement as~\amo.

Inspired by our robust generalization bound, we also apply~\amo~to robust classification by extending the robust training algorithm of~\citep{madry2017towards}.~\citet{madry2017towards} adversarially perturb the training input via several steps of projected gradient descent (PGD) and train using the loss computed on this perturbed input. At each update, our robust~\amo~algorithm initializes perturbations $\delta$ to 0 for every training example in the batch. The updates to these $\delta$ are performed simultaneously with PGD updates for the adversarial perturbations with the same update rule as Algorithm~\ref{alg:adv_update}. 

In Table~\ref{tab:adv_cifar_results}, we report best validation performance for robust~\amo~and the baseline method of~\citep{madry2017towards} on CIFAR10. Our results demonstrate that our robust~\amo~algorithm can also offer improvements in the robust accuracy. We provide parameter settings in Section~\ref{sec:app_robust_amo}. 
\section{Conclusion}
Many popular objectives in deep learning are based on maximizing a notion of output margin, but unfortunately it is difficult to obtain good statistical guarantees by analyzing this output margin. In this paper, we design a new all-layer margin which attains strong statistical guarantees for deep models. Our proofs for these guarantees follow very naturally from our definition of the margin. We apply the all-layer margin in several ways: 1) we obtain tighter data-dependent generalization bounds for neural nets 2) for adversarially robust classification, we directly bound the robust generalization error in terms of local Lipschitzness around the perturbed training examples, and 3) we design a new algorithm to encourage larger all-layer margins and demonstrate improved performance on real data in both clean and adversarially robust classification settings. We hope that our results prompt further study on maximizing all-layer margin as a new objective for deep learning. 

\section{Acknowledgements}
CW acknowledges support from an NSF Graduate Research Fellowship. The work is also partially supported by SDSI and SAIL. 

\bibliography{refs}
\bibliographystyle{plainnat}
\newpage
\appendix
\section{Generalized All-Layer Margin and Missing Proofs for Section~\ref{sec:warmup}}\label{sec:all_layer_marg_app}
In this section, we provide proofs for Section~\ref{sec:warmup} in a more general and rigorous setting. We first formally introduce the setting, which considers functions composed of layers which map between arbitrary normed spaces. 

Recall that $F$ denotes our classifier from Section~\ref{sec:warmup} computed via the composition $f_k \circ \cdots \circ f_1$, For convenience, we overload notation and also let it refer to the sequence of functions $\{f_1, \ldots, f_k\}$. Recall that $\cF$ denotes the class of all compositions of layers $\cF_k, \ldots, \cF_1$, where we let functions in $\cF_i$ map domains $\cD_{i - 1}$ to $\cD_i$. We will fix $\cD_k \triangleq \R^l$, the space of predictions for $l$ classes. Each space is equipped with norm $\|\cdot \|$ (our theory allows the norm to be different for every $i$, but for simplicity we use the same symbol $\|\cdot\|$ for all layers). 

As in Section~\ref{sec:warmup}, we will use $F(x, \delta)$ to denote the classifier output perturbed by $\delta$. It will be useful to define additional notation for the perturbed function between layers $i$ and $j$, denoted by $f_{j \ot i}(h, \delta)$, recursively as follows:
\begin{align}
f_{i \ot i}(h, \delta) \triangleq  f_i(h) + \delta_i \|h\|, \textup{ and } f_{j \ot i}(h, \delta) \triangleq f_{j \ot j}(f_{j - 1 \ot i}(h, \delta)) + \delta_j\|f_{j - 1 \ot i}(h, \delta)\| \nonumber
\end{align}
where we choose $f_{i - 1 \ot i}(h, \delta) \triangleq h$.  Note that $F(x, \delta) \triangleq f_{k \ot 1}(x, \delta)$, and the notation $h_i(x, \delta)$ from Section~\ref{sec:warmup} is equivalent to $f_{i \ot 1}(x, \delta)$. We will use the simplified notation $f_{j \ot i}(x) \triangleq f_{j \ot i}(x, 0)$ when the perturbation $\delta$ is $0$ at all layers. 

For a given $F$, we now define the general all-layer margin $\pmarg_F: \cD_0 \times [l] \to \R$ as follows: 
\begin{align}\label{eq:pmarg_def}
\pmarg_F(x, y) \triangleq 
\begin{split}
\min \gnorm{\delta}\\
\text{subject to } \gamma(F(x, \delta), y) \le 0
\end{split}
\end{align}
The norm $\gnorm{\cdot}$ will have the following form: 
\begin{align}
\gnorm{\delta} = \| (\alpha_1 \|\delta_1\|, \ldots, \alpha_k \|\delta_k\|) \|_p \nonumber
\end{align}
where $\alpha_i \ge 0$ will be parameters chosen later to optimize the resulting bound, and 
$\| \cdot \|_p$ denotes the standard $\ell_p$-norm. For $F = \{f_1, \ldots, f_k\}$, we overload notation and write $\gnorm{F} = \| (\alpha_1 \opnorm{f_1}, \ldots, \alpha_k \opnorm{f_k})\|_p$. 

This more general definition of $m_F$ will be useful for obtaining Theorems~\ref{thm:nn_gen} and~\ref{thm:adv_nn_gen}. Note that by setting $\alpha_i = 1$ for all $i$ and $p = 2$, we recover the simpler $\simpm_F$ defined in Section~\ref{sec:warmup}.

As before, it will be convenient for the analysis to assume that the $\epsilon$-covering number of $\cF_i$ in operator norm scales with $\epsilon^{-2}$. We formally state this condition for general function classes and norms below:

\begin{condition}[$\epsilon^{-2}$ covering condition]\label{cond:eps_sq}
	We say that a function class $\cG$ satisfies the $\epsilon^{-2}$ covering condition with respect to norm $\|\cdot\|$ with complexity $\comp_{\| \cdot \|}(\cG)$ if for all $\epsilon > 0$, 
	\begin{align}
	\log \cover_{\| \cdot \|}(\epsilon, \cG) \le \left\lfloor \frac{\comp^2_{\| \cdot \|}(\cG)}{\epsilon^2} \right \rfloor \nonumber 	\end{align}
	\end{condition}

Now we provide the analogue of Theorem~\ref{thm:simple_generalization} for the generalized all-layer margin:
\begin{theorem}\label{thm:compl_m_gen}
	Fix any integer $q > 0$. Suppose that all layer functions $\cF_i$ satisfy Condition~\ref{cond:eps_sq} with operator norm $\opnorm{\cdot}$ and complexity function $\comp_{\opnorm{\cdot}}(\cF_i)$. Let the all layer margin $m_F$ be defined as in~\eqref{eq:pmarg_def}. Then with probability $1  - \delta$ over the draw of the training data, all classifiers $F \in \cF$ which achieve training error 0 satisfy
	\begin{align}
	\E_P[\ellzo(F(x), y)] \lesssim \left(\frac{\left\|\frac{1}{m_F} \right\|_{L_q(P_n)} \comp_{\gnorm{\cdot}} (\cF)}{\sqrt{n}}\right)^{2q/(q + 2)} q\log^2 n + \zeta \nonumber
	\end{align} 
	where $\comp_{\gnorm{\cdot}}(\cF) \triangleq \left( \sum_i \alpha_i^{2p/(p + 2)} \comp_{\opnorm{\cdot}}(\cF_i)^{2p/(p + 2)}\right)^{(p + 2)/2p}$ is the complexity (in the sense of Condition~\ref{cond:eps_sq}) for covering $\cF$ in $\gnorm{\cdot}$ and $\zeta \triangleq O\left(\frac{\log (1/\delta) + \log n}{n}\right)$ is a low-order term.
\end{theorem}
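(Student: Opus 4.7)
The plan is to follow the three-step template already set up in Section~\ref{sec:warmup}: (i) establish a uniform Lipschitz property of $m_F$ with respect to perturbations of the layers measured in $\gnorm{\cdot}$, (ii) use this to obtain a covering number bound for the class $\pmarg \circ \cF = \{m_F : F \in \cF\}$ in $d_\infty$, and (iii) apply a margin-style generalization bound that depends on the $q$-th moment of $1/m_F$ to convert the covering number bound into Theorem~\ref{thm:compl_m_gen}.

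\textbf{Step 1 (Generalized Lipschitz property).} The first task is to upgrade Claim~\ref{claim:simple_m_lip} to the general $\gnorm{\cdot}$ setting and show that for any two compositions $F, \hat F$,
\begin{align*}
|m_F(x,y) - m_{\hat F}(x,y)| \le \gnorm{F - \hat F}.
\end{align*}
The argument is structurally identical to the proof sketch of Claim~\ref{claim:simple_m_lip}: given the optimal $\delta^\star$ for $m_F(x,y)$, define $\hat\delta_i = \delta_i^\star + \Delta_i$ with $\Delta_i = (f_i(h_{i-1}(x,\delta^\star)) - \hat f_i(h_{i-1}(x,\delta^\star)))/\|h_{i-1}(x,\delta^\star)\|$, so that an inductive check yields $\hat F(x, \hat\delta) = F(x, \delta^\star)$ and therefore $\hat\delta$ is feasible for $m_{\hat F}(x,y)$. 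The difference now is that the bound $\|\Delta_i\| \le \opnorm{f_i - \hat f_i}$ must be combined through the $\ell_p$ aggregation rather than $\ell_2$, and I would use Minkowski's inequality for $\gnorm{\cdot}$ to conclude $\gnorm{\hat\delta} \le \gnorm{\delta^\star} + \gnorm{F - \hat F}$.

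\textbf{Step 2 (Covering $\pmarg \circ \cF$).} Using Step 1, composing $\epsilon_i$-covers of each $\cF_i$ in operator norm yields a cover of $m \circ \cF$ in $d_\infty$ at resolution $\|(\alpha_1 \epsilon_1, \ldots, \alpha_k \epsilon_k)\|_p$. Given the $\epsilon^{-2}$ covering condition on each layer, the total log covering number is $\sum_i \comp_{\opnorm{\cdot}}(\cF_i)^2/\epsilon_i^2$. To minimize this subject to $\sum_i \alpha_i^p \epsilon_i^p \le \epsilon^p$, I would use Lagrange multipliers: the optimal choice is $\epsilon_i \propto \comp_{\opnorm{\cdot}}(\cF_i)^{2/(p+2)} \alpha_i^{-p/(p+2)}$. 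Plugging back in gives
\begin{align*}
\log\cover_\infty(\epsilon, \pmarg \circ \cF) \le \left\lfloor \frac{\bigl(\sum_i \alpha_i^{2p/(p+2)} \comp_{\opnorm{\cdot}}(\cF_i)^{2p/(p+2)}\bigr)^{(p+2)/p}}{\epsilon^2}\right\rfloor = \left\lfloor \frac{\comp_{\gnorm{\cdot}}(\cF)^2}{\epsilon^2} \right\rfloor,
\end{align*}
which identifies the composite complexity $\comp_{\gnorm{\cdot}}(\cF)$ exactly as stated in the theorem. This is the general version of Lemma~\ref{lem:simple_m_cover}.

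\textbf{Step 3 ($q$-th moment generalization bound).} The final step is a moment-version of Lemma~\ref{lem:general_margin_main}: with $\log\cover_\infty(\epsilon, \pmarg \circ \cF) \le \lfloor \comp^2/\epsilon^2 \rfloor$, we want to show that any $F$ with zero training error satisfies
\begin{align*}
\E_P[\ellzo \circ F] \lesssim \left(\frac{\|1/m_F\|_{L_q(P_n)}\, \comp}{\sqrt{n}}\right)^{2q/(q+2)} q\log^2 n + \zeta.
\end{align*}
My plan is to introduce, for a level $t > 0$, the smooth ramp $\phi_t$ satisfying $\1[z \le 0] \le \phi_t(z) \le \1[z \le t]$, and to apply Srebro--Sridharan--Tewari optimistic rates (used already for the $q=2$ case of Lemma~\ref{lem:general_margin_main}) to the class $\{\phi_t \circ m_F\}$. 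Optimizing the level $t$ against the empirical $q$-th moment of $1/m_F$ is what converts the standard $n^{-1/2}$ rate into the $n^{-q/(q+2)}$ rate. Concretely, discretizing $t$ on a geometric grid and union-bounding lets one take $t \approx \|1/m_F\|_{L_q(P_n)}^{-1}$ up to logarithmic factors, and balancing the bias $(\comp/(t\sqrt n))^2$ against the truncation error $t^q \E_{P_n}[m_F^{-q}]$ yields the stated exponents.

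\textbf{Main obstacle.} Steps 1 and 2 are essentially bookkeeping, since they just thread an $\ell_p$ aggregator through the Euclidean argument already sketched in Section~\ref{sec:warmup}. The only real work is Step 3: turning the covering bound into a generalization inequality depending on the $q$-th moment (rather than the sup or the second moment) of $1/m_F$, while still producing the $n^{-q/(q+2)}$ rate. This requires invoking sharper tools than Talagrand-style Rademacher complexity --- specifically a localized or smooth-loss bound along the lines of Srebro et al. (2010) --- and carefully choosing the margin threshold $t$ to balance the bias, variance, and truncation contributions. I would defer the detailed execution to an appendix lemma that abstracts this moment-based margin bound and is then applied to $m_F$ via Steps 1 and 2.
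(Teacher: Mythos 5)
Your plan mirrors the paper's proof almost exactly: Step~1 is Claim~\ref{claim:m_lip}, Step~2 is Lemmas~\ref{lem:mcover} and~\ref{lem:gen_covering} (with the same Lagrange-multiplier choice of $\epsilon_i \propto \comp_{\opnorm{\cdot}}(\cF_i)^{2/(p+2)}\alpha_i^{-p/(p+2)}$), and Step~3 is Lemma~\ref{lem:general_margin}, which is indeed obtained from the Srebro--Sridharan--Tewari optimistic rate on a smooth surrogate combined with a geometric grid and union bound over the smoothing scale. The one implementation-level difference is the surrogate: you propose a smoothed ramp $\phi_t$ squeezed between $\1[z\le 0]$ and $\1[z\le t]$ and then pass through a Markov step $\Pr_{P_n}[m_F\le t]\le t^q\E_{P_n}[m_F^{-q}]$, whereas the paper uses a Gaussian-CDF surrogate $\ell_\beta$ (Claim~\ref{claim:ell_beta}) whose $q$-th-power tail decay $\ell_\beta(m)\lesssim (c_2\sqrt{q}/(\sqrt\beta m))^q$ is built in, sidestepping the need to separately verify smoothness of a hand-built ramp; both yield the same exponents and log factors up to constants.
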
 

Note that this recovers Theorem~\ref{thm:simple_generalization} when $\alpha_i = 1$ for all $i$ and $p = 2$. The proof of Theorem~\ref{thm:compl_m_gen} mirrors the plan laid out in Section~\ref{sec:warmup}. As before, the first step of the proof is showing that $m_F$ has low complexity as measured by covering numbers. 

\begin{lemma} \label{lem:mcover}
	Define $\pmarg \circ \cF \triangleq \{(x, y) \mapsto m_F(x, y) : F \in \cF \}$. Then 
	\begin{align}
	\cover_{\infty}(\epsilon, \pmarg \circ \cF) \le \cover_{\gnorm{\cdot}}(\epsilon, \cF) \nonumber
	\end{align}
\end{lemma}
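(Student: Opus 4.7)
The plan is to reduce the lemma to a uniform Lipschitz property of $\pmarg_F$ with respect to the layer functions, generalizing Claim~\ref{claim:simple_m_lip} to the $\gnorm{\cdot}$ setting. Specifically, I would first establish that for any two compositions $F = \{f_1,\ldots,f_k\}$ and $\hat F = \{\hat f_1,\ldots,\hat f_k\}$ in $\cF$ and any $(x,y)$,
\[
|\pmarg_F(x,y) - \pmarg_{\hat F}(x,y)| \le \gnorm{F - \hat F}.
\]
Given this inequality, the lemma follows immediately: if $\{\hat F^{(j)}\}$ is a minimum $\epsilon$-cover of $\cF$ in $\gnorm{\cdot}$, then for any $F \in \cF$ we pick some $\hat F^{(j)}$ with $\gnorm{F - \hat F^{(j)}} \le \epsilon$ and conclude $\sup_{(x,y)} |\pmarg_F(x,y) - \pmarg_{\hat F^{(j)}}(x,y)| \le \epsilon$, so $\{\pmarg_{\hat F^{(j)}}\}$ is an $\epsilon$-cover of $\pmarg \circ \cF$ in the $\infty$-norm of size at most $\cover_{\gnorm{\cdot}}(\epsilon, \cF)$.

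To prove the Lipschitz property, I would mimic the construction in the proof sketch of Claim~\ref{claim:simple_m_lip}, now using the generalized perturbed trajectory $f_{j \ot 1}(x, \delta)$. Let $\delta^\star$ attain the minimum in the definition of $\pmarg_F(x,y)$, and define $\hat\delta_i \triangleq \delta_i^\star + \Delta_i$ with
\[
\Delta_i \triangleq \frac{f_i(f_{i-1 \ot 1}(x, \delta^\star)) - \hat f_i(f_{i-1 \ot 1}(x, \delta^\star))}{\|f_{i-1 \ot 1}(x, \delta^\star)\|}.
\]
By definition of $\opnorm{\cdot}$, $\|\Delta_i\| \le \opnorm{f_i - \hat f_i}$. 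An induction on $i$ verifies the key identity $\hat f_{i \ot 1}(x, \hat\delta) = f_{i \ot 1}(x, \delta^\star)$ at every layer, so in particular $\hat F(x, \hat\delta) = F(x, \delta^\star)$ and $\hat\delta$ satisfies the misclassification constraint in the all-layer margin objective for $\hat F$. Applying the triangle inequality for the weighted $\ell_p$-norm underlying $\gnorm{\cdot}$ gives
\[
\pmarg_{\hat F}(x,y) \le \gnorm{\hat\delta} \le \gnorm{\delta^\star} + \bigl\|(\alpha_i \|\Delta_i\|)_{i=1}^k\bigr\|_p \le \pmarg_F(x,y) + \gnorm{F - \hat F},
\]
where the last step uses $\|\Delta_i\| \le \opnorm{f_i - \hat f_i}$ together with monotonicity of the $\ell_p$-norm. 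Swapping the roles of $F$ and $\hat F$ yields the matching lower bound.

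The main point to watch is the degenerate case $\|f_{i-1 \ot 1}(x, \delta^\star)\| = 0$ in the definition of $\Delta_i$; here I would handle it as a special case, noting that when the previous-layer norm vanishes any choice of $\delta_i$ produces the same layer output, so one may set $\Delta_i = 0$ without disturbing either the trajectory identity or the norm bound. Otherwise the argument is genuinely mechanical: passing from the $p=2$, $\alpha_i=1$ version in Claim~\ref{claim:simple_m_lip} to arbitrary $p$ and weights $\alpha_i$ only changes the final triangle-inequality step and leaves the structural identity $\hat F(x, \hat\delta) = F(x, \delta^\star)$ untouched. I do not anticipate any genuine obstacle; the content of the lemma is entirely captured by the Lipschitz bound, which in turn is the cleanest manifestation of the all-layer margin's distinguishing feature that perturbations are applied simultaneously at every layer.
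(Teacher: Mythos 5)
Your proposal follows exactly the paper's argument: establish the uniform Lipschitz bound $|\pmarg_F - \pmarg_{\hat F}| \le \gnorm{F - \hat F}$ (the paper's Claim~\ref{claim:m_lip}) via the perturbation construction $\hat\delta_i = \delta_i^\star + \Delta_i$ and an induction showing $\hat F(x,\hat\delta) = F(x,\delta^\star)$, then observe that Lipschitzness transports a $\gnorm{\cdot}$-cover of $\cF$ to an $\infty$-cover of $\pmarg\circ\cF$. The only difference is your explicit handling of the degenerate $\|f_{i-1\ot 1}(x,\delta^\star)\| = 0$ case, which the paper glosses over; that is a sound, if minor, refinement.
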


As in Section~\ref{sec:warmup}, we prove Lemma~\ref{lem:mcover} by bounding the error between $m_F$ and $\pmarg_{\hat{F}}$ in terms of the $\gnorm{\cdot}$-norm of the difference between $F$ and $\hat{F}$. 
\begin{claim}
	\label{claim:m_lip}
	For any $x, y \in \cD_0 \times [l]$, and function sequences $F = \{f_i\}_{i = 1}^k$, $\hat{F}= \{\hat{f_i}\}_{i = 1}^k$, we have $|m_F(x, y) - \pmarg_{\hat{F}}(x, y)| \le \gnorm{F - \hat{F}}$. 
\end{claim}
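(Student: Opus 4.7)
The plan is to directly generalize the sketch of Claim~\ref{claim:simple_m_lip} to the $\gnorm{\cdot}$ norm, keeping all the same bookkeeping but replacing the $\ell_2$ triangle inequality with the $\ell_p$ triangle inequality applied coordinatewise to the vector $(\alpha_i \|\delta_i\|)_{i=1}^k$. By symmetry, it suffices to show $m_{\hat F}(x,y) \le m_F(x,y) + \gnorm{F-\hat F}$.

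First, I would take $\delta^\star$ achieving the minimum in the definition of $m_F(x,y)$, so that $\gnorm{\delta^\star} = m_F(x,y)$ and $\gamma(F(x,\delta^\star),y)\le 0$. Writing $h_i \triangleq f_{i\ot 1}(x,\delta^\star)$ for the perturbed intermediate activations under $F$, I would then define the candidate perturbation for $\hat F$ layer by layer:
\begin{align*}
\hat\delta_i \triangleq \delta^\star_i + \Delta_i, \qquad \Delta_i \triangleq \frac{f_i(h_{i-1}) - \hat f_i(h_{i-1})}{\|h_{i-1}\|},
\end{align*}
with $\Delta_i$ set to $0$ if $\|h_{i-1}\|=0$ (in which case $f_i(h_{i-1}) = \hat f_i(h_{i-1}) = 0$ follows from the generalized operator norm definition, so this degenerate case causes no issue).

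The key step is then an induction showing $\hat f_{i\ot 1}(x,\hat\delta) = h_i$ for every $i$. Assuming the claim for $i-1$, we compute
\begin{align*}
\hat f_{i\ot 1}(x,\hat\delta) &= \hat f_i(h_{i-1}) + \hat\delta_i\|h_{i-1}\| \\
&= \hat f_i(h_{i-1}) + \bigl(f_i(h_{i-1}) - \hat f_i(h_{i-1})\bigr) + \delta^\star_i\|h_{i-1}\| \\
&= f_i(h_{i-1}) + \delta^\star_i \|h_{i-1}\| = h_i,
\end{align*}
exactly matching the recursion for $h_i$. In particular $\hat F(x,\hat\delta) = F(x,\delta^\star)$, so $\hat\delta$ is feasible for the minimization defining $m_{\hat F}(x,y)$, giving $m_{\hat F}(x,y) \le \gnorm{\hat\delta}$.

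The remaining step is to bound $\gnorm{\hat\delta}$. By the definition of $\opnorm{\cdot}$ applied to the linear-in-input scaling, $\|\Delta_i\| \le \opnorm{f_i - \hat f_i}$ whenever $\|h_{i-1}\|\neq 0$ (and trivially otherwise). Applying the triangle inequality in $\ell_p$ to the vector $(\alpha_i \|\hat\delta_i\|)_{i=1}^k \le (\alpha_i \|\delta^\star_i\|)_{i=1}^k + (\alpha_i \|\Delta_i\|)_{i=1}^k$ (coordinatewise) yields
\begin{align*}
\gnorm{\hat\delta} \le \gnorm{\delta^\star} + \bigl\|(\alpha_i \|\Delta_i\|)_{i=1}^k\bigr\|_p \le m_F(x,y) + \gnorm{F-\hat F}.
\end{align*}
This gives one direction; interchanging the roles of $F$ and $\hat F$ completes the proof.

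The main obstacle is the inductive identity $\hat f_{i\ot 1}(x,\hat\delta) = f_{i\ot 1}(x,\delta^\star)$; once this telescoping cancellation is set up correctly, everything else is a triangle inequality. The only subtlety is the possibility that $\|h_{i-1}\|=0$, which must be handled by invoking the definition of the generalized operator norm to ensure both $f_i$ and $\hat f_i$ vanish there (so the perturbation at layer $i$ is irrelevant and can be taken to be $\delta^\star_i$).
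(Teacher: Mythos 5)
Your proposal is correct and follows essentially the same argument as the paper's proof: take the optimal $\delta^\star$ for $m_F$, construct $\hat\delta_i = \delta^\star_i + \Delta_i$ so that the perturbed forward passes of $\hat F$ and $F$ coincide layer by layer by induction, and then apply the $\gnorm{\cdot}$ triangle inequality together with $\|\Delta_i\| \le \opnorm{f_i - \hat f_i}$, finishing by symmetry. The only difference is that you explicitly handle the degenerate case $\|h_{i-1}\| = 0$, which the paper glosses over; this is a minor but correct refinement (it relies on the layer functions having finite $\opnorm{\cdot}$, which is implicit throughout the paper).
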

\begin{proof}
	Suppose that $\delta^{\star}$ optimizes equation~\eqref{eq:pmarg_def} used to define $m_F(x, y)$. Now we use the notation $h^\star_i \triangleq f_{i \ot 1}(x, \delta^{\star})$. 
	Define $\hat{\delta}$ as follows: 
	\begin{align}
\hat{\delta}_{i} \triangleq \delta^{\star}_i + \frac{f_i(h^\star_{i - 1}) - \hat{f_i}(h^\star_{i - 1})}{\|h^\star_{i - 1}\|} \nonumber
	\end{align}
	We first argue via induction that $\hat{f}_{i \ot 1}(x, \hat{\delta}) = h_i^\star$. As the base case,  we trivially have $\hat{f}_{0 \ot 1}(x, \hat{\delta}) = x = h^\star_0$.	\begin{align}
		\hat{f}_{i \ot 1}(x, \hat{\delta}) &= \hat{f}_{i}(\hat{f}_{i - 1 \ot 1}(x, \hat{\delta})) + \hat{\delta}_i \|\hat{f}_{i - 1 \ot 1}(x, \hat{\delta})\| \nonumber\\
		&= \hat{f}_i(h_{i - 1}^\star) + \hat{\delta}_i \|h_{i - 1}^\star\| \tag{by the inductive hypothesis}\\
		&= \hat{f}_i(h_{i - 1}^\star) + \left(\delta^{\star}_i + \frac{f_i(h^\star_{i - 1}) - \hat{f_i}(h^\star_{i - 1})}{\|h^\star_{i - 1}\|}\right)\|h^\star_{i - 1}\| \tag{definition of $\hat{\delta}$}\\
		&= f_i(h_{i - 1}^\star) + \delta_i^{\star}\|h^\star_{i -1}\| \nonumber\\
		&= h_i^\star \tag{definition of $g_i^\star$}
	\end{align}
	Thus, we must have $\hat{F}(x, \hat{\delta}) = F(x, \delta^{\star})$, so it follows that $\gamma(\hat{F}(x, \hat{\delta}), y) \le 0$ as well. Furthermore, by triangle inequality
	\begin{align}
		\gnorm{\hat{\delta}} \le \gnorm{\delta^{\star}} + \gnorm{\hat{\delta} - \delta^{\star}} 
		\label{eq:m_lip:1}
	\end{align}
	Now we note that as $\frac{\|f_i(h^\star_{i - 1}) - \hat{f_i}(h^\star_{i - 1})\|}{\|h^\star_{i - 1}\|} \le \opnorm{f_i - \hat{f}_i}$, it follows that 
	\begin{align}
		\gnorm{\hat{\delta} - \delta^{\star}} \le \| (\alpha_1 \opnorm{f_1 - \hat{f}_1}, \ldots, \alpha_k \opnorm{f_k - \hat{f}_k}) \|_p = \gnorm{F - \hat{F}} \nonumber
	\end{align}
	Thus, using~\eqref{eq:m_lip:1} and the definition of $m_{\hat{F}}$, we have
	\begin{align}
		\pmarg_{\hat{F}}(x, y) \le \gnorm{\hat{\delta}} \le m_{F}(x, y) + \gnorm{F - \hat{F}} \nonumber
	\end{align}
	where we relied on the fact that $\gnorm{\delta^{\star}} = m_F(x, y)$. Using the same reasoning, we also obtain the inequality $m_{F}(x, y) \le m_{\hat{F}}(x, y) + \gnorm{F - \hat{F}}$. Combining gives $|\pmarg_{F}(x, y) - \pmarg_{\hat{F}}(x, y)| \le \gnorm{F - \hat{F}}$.
\end{proof}
Lemma~\ref{lem:mcover} now directly follows. 
\begin{proof}[Proof of Lemma~\ref{lem:mcover}]
	As Claim~\ref{claim:m_lip} holds for any choice of $(x, y) \in \cD_0 \times [l]$, it follows that if $\hat{\cF}$ covers $\cF$ in norm $\gnorm{\cdot}$, then $m \circ \hat{\cF}$ will be a cover for $\pmarg \circ \cF$ in the functional $\infty$ norm. 
\end{proof}

Next, we show that when $\cF_i$ satisfies Condition~\ref{cond:eps_sq} in operator norm, the class of compositions $\cF$ satisfies Condition~\ref{cond:eps_sq} with respect to norm $\gnorm{\cdot}$. 

\begin{lemma} \label{lem:gen_covering}
	Suppose that each $\cF_i$ satisfies Condition~\ref{cond:eps_sq} with norm $\opnorm{\cdot}$ and complexity $\comp_{\opnorm{\cdot}}(\cF_i)$. Define the complexity measure $\comp_{\gnorm{\cdot}} (\cF)$ by
	\begin{align} \label{eq:gen_covering:1}
		\comp_{\gnorm{\cdot}} (\cF) \triangleq \left( \sum_i \alpha_i^{2p/(p + 2)} \comp_{\opnorm{\cdot}}(\cF_i)^{2p/(p + 2)}\right)^{(p + 2)/2p}
	\end{align}
	Then we have 
	\begin{align}
		\log \cover_{\infty} (\epsilon, \pmarg \circ \cF) \le \log \cover_{\gnorm{\cdot}} (\epsilon, \cF) \le \left \lfloor \frac{\comp^2_{\gnorm{\cdot}} (\cF)}{\epsilon^2} \right \rfloor \nonumber
	\end{align}
	which by definition implies that $\cF$ satisfies Condition~\ref{cond:eps_sq} with norm $\gnorm{\cdot}$ and complexity $\comp_{\gnorm{\cdot}}(\cF)$. 
\end{lemma}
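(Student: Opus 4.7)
The first inequality $\log \cover_\infty(\epsilon, m\circ\cF) \le \log \cover_{\gnorm{\cdot}}(\epsilon, \cF)$ is already established by Lemma~\ref{lem:mcover}, since Claim~\ref{claim:m_lip} shows $m$ is $1$-Lipschitz from $(\cF, \gnorm{\cdot})$ into $(\pmarg \circ \cF, \|\cdot\|_\infty)$. Thus the only remaining task is to upper bound $\log \cover_{\gnorm{\cdot}}(\epsilon, \cF)$ by $\lfloor \comp^2_{\gnorm{\cdot}}(\cF)/\epsilon^2 \rfloor$. My approach is the standard product-cover argument combined with a power-mean / Lagrange optimization over the per-layer resolutions.

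\textbf{Product cover.} I would choose per-layer resolutions $\epsilon_1, \dots, \epsilon_k > 0$ (to be optimized later) and, for each $i$, take an $\epsilon_i$-cover $\hat{\cF}_i$ of $\cF_i$ in operator norm of size at most $\exp(\lfloor \comp_{\opnorm{\cdot}}(\cF_i)^2/\epsilon_i^2\rfloor)$, which exists by Condition~\ref{cond:eps_sq}. The product set $\hat\cF \triangleq \{\hat f_k \circ \cdots \circ \hat f_1 : \hat f_i \in \hat\cF_i\}$ has log-cardinality at most $\sum_i \lfloor \comp_{\opnorm{\cdot}}(\cF_i)^2/\epsilon_i^2 \rfloor$, and by the definition of $\gnorm{\cdot}$ it is a cover of $\cF$ in $\gnorm{\cdot}$ at resolution $\|(\alpha_1\epsilon_1, \dots, \alpha_k\epsilon_k)\|_p$. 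So given a target resolution $\epsilon$, it suffices to choose $\epsilon_i$ satisfying the constraint $\sum_i \alpha_i^p \epsilon_i^p \le \epsilon^p$ and to minimize $\sum_i \comp_{\opnorm{\cdot}}(\cF_i)^2/\epsilon_i^2$.

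\textbf{Optimization.} Writing $c_i \triangleq \comp_{\opnorm{\cdot}}(\cF_i)$, Lagrange multipliers on the objective $\sum_i c_i^2/\epsilon_i^2$ with the constraint $\sum_i \alpha_i^p \epsilon_i^p = \epsilon^p$ suggest the optimal scaling $\epsilon_i \propto c_i^{2/(p+2)}/\alpha_i^{p/(p+2)}$. Plugging back the normalization forced by the constraint, the optimal value of $\sum_i c_i^2/\epsilon_i^2$ comes out to $\left(\sum_i \alpha_i^{2p/(p+2)} c_i^{2p/(p+2)}\right)^{(p+2)/p}\!\!/\epsilon^2$, which is precisely $\comp^2_{\gnorm{\cdot}}(\cF)/\epsilon^2$ by~\eqref{eq:gen_covering:1}. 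Since $\sum_i \lfloor c_i^2/\epsilon_i^2 \rfloor$ is an integer bounded above by $\sum_i c_i^2/\epsilon_i^2 \le \comp^2_{\gnorm{\cdot}}(\cF)/\epsilon^2$, it is also bounded by $\lfloor \comp^2_{\gnorm{\cdot}}(\cF)/\epsilon^2\rfloor$, which gives the claim.

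\textbf{Anticipated obstacles.} The argument is essentially algebraic, so there is no deep obstacle, but a few mild subtleties deserve care. First, one should verify the exponent bookkeeping in the optimization: the identity $\alpha_i^p \epsilon_i^p \propto \alpha_i^{2p/(p+2)} c_i^{2p/(p+2)}$ under the stationary choice of $\epsilon_i$ is the key cancellation that produces the definition~\eqref{eq:gen_covering:1}. Second, since we only need an inequality, the Lagrange analysis can equivalently be replaced by a single application of Hölder with conjugate exponents $(p+2)/p$ and $(p+2)/2$ applied to the pairing $(\alpha_i \epsilon_i)^p \cdot (c_i/\epsilon_i)^0$ appropriately rescaled, which avoids a separate argument for optimality. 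Finally, the floor-to-floor step uses only that $\sum_i \lfloor x_i \rfloor$ is an integer not exceeding $\sum_i x_i$, so no nonzero measure-zero issue arises.
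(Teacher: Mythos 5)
Your proposal is correct and takes essentially the same route as the paper: the paper also forms the product cover $\hat\cF_k \circ \cdots \circ \hat\cF_1$ and picks exactly the per-layer resolutions $\epsilon_i \propto \comp_{\opnorm{\cdot}}(\cF_i)^{2/(p+2)} \alpha_i^{-p/(p+2)}$ (normalized so that $\|\{\alpha_i\epsilon_i\}\|_p = \epsilon$), then verifies by direct substitution that $\sum_i \lfloor \comp_{\opnorm{\cdot}}(\cF_i)^2/\epsilon_i^2\rfloor \le \lfloor \comp_{\gnorm{\cdot}}^2(\cF)/\epsilon^2\rfloor$. The only cosmetic difference is that you derive this choice via Lagrange multipliers while the paper simply states it and checks it; both rely on the same floor-versus-integer observation at the end.
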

\begin{proof}
	Let $\hat{\cF}_i$ be an $\epsilon_i$-cover of $\cF_i$ in the operator norm $\opnorm{\cdot}$. We will first show that $\hat{\cF} \triangleq \{\hat{F}_k \circ \cdots \circ \hat{F}_1 : \hat{F}_i \in \hat{\cF}_i \}$ is a $\|\{\alpha_i \epsilon_i\}_{i = 1}^k \|_p$-cover of $\cF$ in $\gnorm{\cdot}$. To see this, for any $F = (f_k, \ldots, f_1) \in \cF$, let $\hat{f}_i \in \hat{\cF}_i$ be the cover element for $f_i$, and define $\hat{F} \triangleq (\hat{f}_k, \ldots, \hat{f}_1)$. Then we have 
	\begin{align}
		\gnorm{\hat{F} - F} &= \| \{\alpha_i \opnorm{\hat{f}_i - f_i}\}_{i = 1}^k \|_p \nonumber\\
		&\le \| \{\alpha_i \epsilon_i\}_{i = 1}^k \|_p \nonumber
	\end{align}
	as desired. Furthermore, we note that $\log | \hat{\cF} | \le \sum_i \left \lfloor \frac{\comp^2_{\opnorm{\cdot}} (\cF_i)}{\epsilon_i^2} \right \rfloor$ by Condition~\ref{cond:eps_sq}. Now we will choose 
	\begin{align}
		\epsilon_i = \epsilon \comp_{\gnorm{\cdot}}(\cF)^{-2/(p + 2)} \comp_{\opnorm{\cdot}}(\cF_i)^{2/(p+ 2)}\alpha_i^{-p/(p + 2)} \nonumber
	\end{align}
	We first verify that this gives an $\epsilon$-cover of $\cF$ in $\gnorm{\cdot}$:
	\begin{align}
		\| \{\alpha_i \epsilon_i\}_{i = 1}^k \|_p &= \epsilon \left( \comp_{\gnorm{\cdot}}(\cF)^{-2p/(p + 2)} \sum_i \comp_{\opnorm{\cdot}}(\cF_i)^{2p/(p+ 2)}\alpha_i^{p-p^2/(p + 2)} \right)^{1/p} \nonumber\\
		&= \epsilon \comp_{\gnorm{\cdot}}(\cF)^{-2/(p + 2)}\left(\sum_i \comp_{\opnorm{\cdot}}(\cF_i)^{2p/(p+ 2)}\alpha_i^{2p/(p + 2)}\right)^{1/p} \nonumber\\
		&= \epsilon \comp_{\gnorm{\cdot}}(\cF)^{-2/(p + 2)}\comp_{\gnorm{\cdot}}(\cF)^{2/(p + 2)} = \epsilon \nonumber
	\end{align}
	Next, we check that the covering number is bounded by $\left \lfloor \frac{\comp^2_{\gnorm{\cdot}} (\cF)}{\epsilon^2} \right \rfloor$:
	\begin{align}
		\sum_i \left \lfloor \frac{\comp^2_{\opnorm{\cdot}} (\cF_i)}{\epsilon_i^2} \right \rfloor &\le \left \lfloor \frac{\sum_i \comp^2_{\opnorm{\cdot}} (\cF_i)^2 \comp_{\gnorm{\cdot}}(\cF)^{4/(p + 2)} \comp_{\opnorm{\cdot}}(\cF_i)^{-4/(p+ 2)}\alpha_i^{2p/(p + 2)} }{\epsilon^2}\right \rfloor \nonumber\\
		&= \left \lfloor \frac{\sum_i \comp_{\gnorm{\cdot}}(\cF)^{4/(p + 2)} \comp_{\opnorm{\cdot}}(\cF_i)^{2p/(p+ 2)}\alpha_i^{2p/(p + 2)} }{\epsilon^2}\right \rfloor \nonumber\\
		&=  \left \lfloor \frac{\comp_{\gnorm{\cdot}}(\cF)^{4/(p + 2)} \comp_{\gnorm{\cdot}}(\cF)^{2p/(p + 2)} }{\epsilon^2}\right \rfloor =\left \lfloor \frac{\comp_{\gnorm{\cdot}}(\cF)^2 }{\epsilon^2}\right \rfloor  \nonumber
	\end{align}
	
\end{proof}

Finally, the following lemma bounds the 0-1 test error of classifier $F \in \cF$ in terms of the complexity of some general margin function $g_F$ and is a generalization of Lemma~\ref{lem:general_margin_main} to obtain a bound in terms of arbitrary $q$-th moments of the inverse margin. Theorem~\ref{thm:compl_m_gen} will immediately follow by applying this lemma to the all-layer margin. 
\begin{lemma}\label{lem:general_margin}
	For a function class $\cF$, let $\cG$ be a class of margin functions indexed by $F \in \cF$. Suppose that $\cG$ satisfies Condition~\ref{cond:eps_sq} with respect to $\infty$-norm, i.e. $\log \cN_{\infty}(\epsilon, \cG) \le \lfloor \frac{\comp_{\infty}^2 (\cG)}{\epsilon^2} \rfloor$. Fix any integer $q > 0$. Then with probability $1 - \delta$ over the draw of the training data, all classifiers $F \in \cF$ which achieve training error 0 satisfy
\begin{align}
	\E_P[\ellzo(F(x), y)] \lesssim  \left(\frac{\left\|\frac{1}{g_F} \right\|_{L_q(P_n)} \comp_{\infty} (\cG)}{\sqrt{n}}\right)^{2q/(q + 2)} q\log^2 n + \zeta \nonumber
	\end{align} 
	where  $\zeta \triangleq O\left(\frac{\log (1/\delta) + \log n}{n}\right)$ is a low-order term.
\end{lemma}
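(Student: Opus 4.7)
}

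The plan is to follow a classical surrogate--loss argument, but combined with the ``optimistic rates'' uniform convergence result of \citet{srebro2010optimistic} for smooth non-negative losses in order to extract the dependence on the $q$-th moment $\|1/g_F\|_{L_q(P_n)}$ with a rate that interpolates between $1/\sqrt{n}$ (at $q=2$) and $1/n$ (as $q\to\infty$). First, for each threshold $t>0$ I would construct a smooth surrogate $\phi_t:\R\to[0,1]$ that (i) upper bounds the indicator $\mathbbm{1}[z\le 0]$, (ii) vanishes for $z\ge t$, (iii) is $O(1/t)$-Lipschitz and $O(1/t^2)$-smooth. Then majorization gives the pointwise inequality $\ellzo(F(x),y)\le \phi_t(g_F(x,y))$ for every $F\in\cF$ with $g_F$ a margin function in the sense of Definition~\ref{def:generalized_margin}, so in particular $\E_P[\ellzo(F)]\le \E_P[\phi_t(g_F)]$.

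Next I would bound the empirical surrogate loss by Markov's inequality on the training data: since $\phi_t(g_F(x,y))\le \mathbbm{1}[g_F(x,y)\le t]\le (t/g_F(x,y))^q$ for every training example (using that $g_F>0$ on correctly classified examples and $F$ has zero training error), we obtain
\begin{align*}
\E_{P_n}[\phi_t(g_F)] \;\le\; t^q\, \E_{P_n}\bigl[1/g_F^q\bigr] \;=\; t^q M_F^q, \qquad M_F:=\|1/g_F\|_{L_q(P_n)}.
\end{align*}
For the uniform convergence step I would invoke an optimistic-rate generalization bound of the Srebro--Sridharan--Tewari type, which for any $H$-smooth non-negative loss bounded by $1$ gives, with probability $1-\delta$ and uniformly over the class $\{\phi_t\circ g_F:F\in\cF\}$,
\begin{align*}
\E_P[\phi_t(g_F)] \;\lesssim\; \E_{P_n}[\phi_t(g_F)] + \sqrt{H\,\E_{P_n}[\phi_t(g_F)]}\cdot \mathfrak{R} + H\,\mathfrak{R}^2 + \zeta,
\end{align*}
where $\mathfrak{R}$ is the Rademacher complexity of the base class $\cG$. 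Dudley's entropy integral applied to Condition~\ref{cond:eps_sq} yields $\mathfrak{R}\lesssim \comp_\infty(\cG)\log n/\sqrt{n}$, and the smoothness of $\phi_t$ gives $H=O(1/t^2)$; the Lipschitz factor $1/t$ of $\phi_t$ is absorbed by the contraction principle when passing from $\phi_t\circ\cG$ back to $\cG$.

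Combining the last two displays with the Markov bound yields, for each fixed $t$ and each $F$,
\begin{align*}
\E_P[\ellzo(F)] \;\lesssim\; t^q M_F^q \;+\; \frac{t^{q/2-1}M_F^{q/2}\,\comp_\infty(\cG)\log n}{\sqrt{n}} \;+\; \frac{\comp_\infty(\cG)^2\log^2 n}{n\, t^2} \;+\; \zeta .
\end{align*}
I would then optimize over $t$: balancing the first and third terms gives $t_\star\sim \bigl(\comp_\infty(\cG)^2/(n M_F^q)\bigr)^{1/(q+2)}$, at which point a direct computation shows all three terms equal $\bigl(M_F\,\comp_\infty(\cG)\log n/\sqrt{n}\bigr)^{2q/(q+2)}$ up to constants, matching the claimed bound. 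Because the optimal $t_\star$ depends on $F$ through $M_F$, the last step is to turn this into a uniform-over-$F$ bound by taking a union bound over a dyadic grid of $t$ in, say, $[n^{-c},n^c]$; this grid has size $O(\log n)$ and contributes only the polylogarithmic/$q$ factor present in the statement, with values of $M_F$ outside this range handled by the trivial bound $\E_P[\ellzo(F)]\le 1$.

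\emph{Main obstacle.} The routine calculations (Markov, Dudley, optimizing $t$) are standard; the real work is the invocation of the optimistic-rate bound in a form that both (a) accepts the $\epsilon^{-2}$ covering hypothesis on $\cG$ (rather than a Rademacher hypothesis on $\phi_t\circ\cG$ with its data-dependent Lipschitz/smoothness constants) and (b) produces the multiplicative $\sqrt{\hat L\cdot H\mathfrak{R}^2}$ cross term that is essential for the fast-rate regime $q>2$. Combining this with the peeling/union-bound over the $F$-dependent scale $t_\star(F)$ so as to preserve the exponent $2q/(q+2)$ while paying only the $q\log^2 n$ overhead is where the care is required.
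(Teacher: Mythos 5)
Your proposal is correct and follows essentially the same route as the paper's proof. The paper also uses a smooth surrogate loss (there a Gaussian-CDF-based construction $\ell_\beta$, parameterized by the smoothness $\beta \sim 1/t^2$ rather than the threshold $t$, but with exactly the same three properties you list), bounds the empirical surrogate via the $q$-th moment of $1/g_F$ (Claim~\ref{claim:ell_beta}), invokes the Srebro--Sridharan--Tewari optimistic rate through the covering hypothesis (Claim~\ref{lem:ell_beta_gen}, Propositions~\ref{lem:Hr-bound} and~\ref{claim:Hrcover}), and then union-bounds over a dyadic grid of the scale parameter with geometrically decaying failure probability before optimizing. The only cosmetic differences are the change of variable $\beta \leftrightarrow 1/t^2$ and the fact that the paper first applies AM-GM to absorb the cross term $\sqrt{\hat{L}\cdot H\mathfrak{R}^2}$ into $\frac{3}{2}\hat{L} + cH\mathfrak{R}^2$ before optimizing, whereas you keep all three terms and balance them directly; both give the same exponent $2q/(q+2)$ and the same $q$ factor.
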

We prove Lemma~\ref{lem:general_margin} in Section~\ref{sec:local_rademacher}. We now complete the proof of Theorem~\ref{thm:compl_m_gen} (and as a result, Theorem~\ref{thm:simple_generalization}). 
\begin{proof} [Proof of Theorems~\ref{thm:compl_m_gen} and~\ref{thm:simple_generalization}]
	Note that by selecting $\cG \triangleq m \circ \cF$, the conditions of Lemma~\ref{lem:general_margin} are satisfied with margin family $\cG$ with $\comp_{\infty}(\cG) = \comp_{\gnorm{\cdot}}(\cF)$ defined in Lemma~\ref{lem:gen_covering}. Plugging this value of $\comp_{\infty}(\cG)$ into the bound in Lemma~\ref{lem:general_margin} immediately gives the desired statement. 
\end{proof}

\subsection{Proof of Lemma~\ref{lem:general_margin}} \label{sec:local_rademacher}
The following claim is a straightforward adaption of the theorem in~\citep{srebro2010optimistic} to covering numbers. For minor technical reasons we reprove the result below. 
\begin{claim}[Straightforward adaptation from~\citep{srebro2010optimistic}]\label{lem:ell_beta_gen}
	Suppose that $\ell : \R \rightarrow [0, 1]$ is a $\beta$-smooth loss function taking values in $[0, 1]$. Let $\cG$ be any class of functions mapping inputs $x$ and labels $y$ to values in $\R$ satisfying Condition~\ref{cond:eps_sq} with respect to $\infty$-norm, i.e. $\log \cN_{\infty}(\epsilon, \cG) \le \lfloor \frac{\comp_{\infty}^2(\cG)}{\epsilon^2}\rfloor$. Then with probability $1 - \delta$, for all $g \in \cG$, 
	\begin{align}
		\E_P[\ell(g(x, y))] \le \frac{3}{2} \E_{P_n}[\ell(g(x, y))] + c\left(\frac{\beta\comp_{\infty}^2(\cG) \log^2 n }{n} + \frac{\log (1/\delta) + \log \log n}{n}\right)
	\end{align}
\end{claim}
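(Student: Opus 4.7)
The plan is to derive this claim as a direct consequence of the optimistic-rate theorem of~\citet{srebro2010optimistic} for smooth nonnegative losses, converting the covering-number hypothesis on $\cG$ into the Rademacher-type complexity bound that their theorem requires. Write $\cH \triangleq \ell \circ \cG = \{(x,y) \mapsto \ell(g(x,y)) : g \in \cG\}$. Since $\ell$ is $\beta$-smooth and bounded in $[0,1]$, it satisfies the self-bounding property $(\ell'(t))^2 \le 4\beta\, \ell(t)$, which implies $\mathrm{Var}_P[\ell(g)] \le c\,\beta\,\E_P[\ell(g)]$. Feeding this into the Srebro--Sridharan--Tewari framework yields, uniformly over $h \in \cH$ with high probability,
\begin{align*}
\E_P[h] \;\le\; \E_{P_n}[h] + c_1\sqrt{\beta\,\E_P[h]\, R_n} + c_2\,\beta\, R_n + (\text{concentration remainder}),
\end{align*}
where $R_n$ is a worst-case complexity measure of $\cG$. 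Applying the AM-GM step $\sqrt{ab} \le \tfrac{a}{3} + \tfrac{3b}{4}$ to the middle term and rearranging collapses the $\E_P[h]$ on the right into the $3/2$ factor on $\E_{P_n}[h]$ stated in the claim.

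To extract $R_n$ from Condition~\ref{cond:eps_sq}, I would use Dudley's entropy integral truncated at scale $\alpha = 1/n$ to avoid the log divergence at $0$:
\begin{align*}
\hat{\mathfrak{R}}_{P_n}(\cG) \;\lesssim\; \alpha + \int_\alpha^{1} \sqrt{\frac{\log \cover_{\infty}(\epsilon, \cG)}{n}}\, d\epsilon \;\le\; \frac{1}{n} + \frac{\comp_{\infty}(\cG)}{\sqrt{n}}\int_{1/n}^{1} \frac{d\epsilon}{\epsilon} \;\lesssim\; \frac{\comp_{\infty}(\cG)\log n}{\sqrt{n}}.
\end{align*}
Squaring, as is natural for the localized/fast-rate form of the bound, gives $R_n \lesssim \comp_{\infty}^2(\cG)\log^2 n / n$, which matches the leading term in the claim. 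A standard contraction argument (using that $\ell$ is $O(\sqrt{\beta})$-Lipschitz on $[0,1]$ because of smoothness plus boundedness) transfers the $\infty$-cover of $\cG$ to an $\infty$-cover of $\cH$ without blowing up the complexity, with the $\beta$ dependence already absorbed into the factors above.

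The remaining ingredient is the high-probability concentration producing the $\log(1/\delta) + \log\log n$ term. I would peel $\cH$ into shells according to $\E_P[h]$, which by the self-bounding property also controls the variance $\mathrm{Var}_P[h]$, apply Talagrand's (or a Bernstein-type) concentration inequality inside each shell, and union bound across shells; the doubly logarithmic factor arises because peeling only needs to extend down to the noise floor $\sim 1/n$ when a sharp Bernstein bound is used. The genuinely new step relative to~\citet{srebro2010optimistic} is the cover-to-Rademacher conversion via the truncated Dudley integral above, which accounts for the $\log^2 n$ rather than a single $\log n$. The main source of friction, more bookkeeping than genuine obstacle, is correctly propagating the $\beta$ factor through the self-bounding and contraction steps so that it appears multiplicatively on $R_n$ as in the claim rather than inside a square root.
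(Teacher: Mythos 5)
Your high-level plan — invoking the Srebro--Sridharan--Tewari optimistic-rate framework for smooth losses, converting Condition~\ref{cond:eps_sq} into a Rademacher-type complexity via a truncated Dudley integral (which is exactly where the $\log^2 n$ arises), and union-bounding to produce the $\log\log n$ term — is the same strategy the paper uses. But there is a genuine gap in the middle, and you have mislabeled it as ``bookkeeping.''

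The gap is the ``squaring'' step together with the contraction argument. If you transfer the $\infty$-cover of $\cG$ to a cover of $\cH = \ell\circ\cG$ using only that $\ell$ is $O(\sqrt{\beta})$-Lipschitz (which follows from smoothness plus boundedness), then Dudley gives you the \emph{global} Rademacher complexity of $\cH$, of order $\sqrt{\beta}\,\comp_\infty(\cG)\log n/\sqrt{n}$. That quantity produces a $1/\sqrt{n}$-type slow-rate bound; there is no rigorous step that lets you square it. Squaring a global Rademacher complexity is a mnemonic for where a fast rate lands, not a proof. What the fast rate actually requires is a localized, sub-root complexity: you must control the $L_2(P_n)$ covering number of the shell $\cH(\mu) = \{h\in\ell\circ\cG : \E_{P_n}[h]\le\mu\}$ and show it shrinks with $\mu$. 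This is where the $\beta$-smoothness is used in an essential, non-Lipschitz way: from the self-bounding inequality $|\ell'(t)|\le 2\sqrt{\beta\ell(t)}$, one gets (equation (22) of Srebro et al., reproduced as the paper's Proposition~\ref{claim:Hrcover}) that
\begin{align}
\log \cover_{\empd}(\epsilon, \cH(\mu)) \;\le\; \log\cover_\infty\!\big(\epsilon/\sqrt{48\beta\mu},\, \cG\big),
\end{align}
so the Dudley integral over $\cH(\mu)$ carries the factor $\sqrt{\beta\mu}$ and yields $\psi(\mu)\lesssim \comp_\infty(\cG)\sqrt{\beta\mu}\,\log n/\sqrt{n}$, a sub-root function of $\mu$ (Proposition~\ref{lem:Hr-bound}). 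The fast rate $r_n^\star \lesssim \beta\,\comp_\infty^2(\cG)\log^2 n / n$ then comes from solving the fixed-point equation $\psi(r)=r$, not from squaring. Plugging $r_n^\star$ into Bousquet's localized concentration bound (Theorem~6.1 of~\citep{bousquet2002concentration}), which plays the role of your ``peeling plus Bernstein'' step, gives an inequality with $\sqrt{\E_{P_n}[h]\cdot r_n^\star}$ rather than $\sqrt{\E_P[h]\cdot R_n}$, and your AM-GM step then folds directly into the factor $3/2$ without having to solve a quadratic in $\E_P[h]$.

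So the fix is concrete: replace ``Lipschitz contraction on the $\infty$-cover, then square'' with ``smoothness-driven $\sqrt{\mu}$-scaling of the $L_2(P_n)$ cover of the shell $\cH(\mu)$, then Dudley on that shell, then the fixed-point computation.'' Without that replacement, the argument only produces the slow-rate bound $\E_P[h]\lesssim\E_{P_n}[h]+\sqrt{\beta}\,\comp_\infty(\cG)\log n/\sqrt{n}+\cdots$, not the claimed $\beta\,\comp_\infty^2(\cG)\log^2 n/n$.
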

By applying Claim~\ref{lem:ell_beta_gen} to $\pmarg \circ \cF$ along with Lemma~\ref{lem:mcover}, we can also obtain the following corollary, which will be useful in later sections.
\begin{claim}[Immediate corollary of Claim~\ref{lem:ell_beta_gen} and Lemma~\ref{lem:mcover}] \label{claim:ell_beta_gen_cor}
	Suppose that $\ell : \R \rightarrow [0, 1]$ is a $\beta$-smooth loss function taking values in $[0, 1]$. Suppose that $\cF$ satisfies Condition~\ref{cond:eps_sq} with norm $\gnorm{\cdot}$ and complexity $\comp_{\gnorm{\cdot}}(\cF)$. Then with probability $1 - \delta$, for all $F \in \cF$, it holds that
	\begin{align}
			\E_P[\ell(\pmarg_F(x, y))] \le \frac{3}{2} \E_{P_n}[\ell(\pmarg_F(x, y))] + c\left(\frac{\beta\comp_{\gnorm{\cdot}}^2(\cF) \log^2 n }{n} + \frac{\log (1/\delta) + \log \log n}{n}\right)
	\end{align}
\end{claim}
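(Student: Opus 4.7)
The plan is to show that this claim is essentially a direct substitution: we apply Claim~\ref{lem:ell_beta_gen} to the specific family $\cG = \pmarg \circ \cF = \{\pmarg_F : F \in \cF\}$, and use Lemma~\ref{lem:mcover} to translate the hypothesis about $\cF$ into the hypothesis required by Claim~\ref{lem:ell_beta_gen}. So the proof is more bookkeeping than new argument.

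First I would verify that each $\pmarg_F \in \cG$ satisfies the input/output type required by Claim~\ref{lem:ell_beta_gen}, namely it is a real-valued function of $(x, y)$. Since $\pmarg_F : \cD_0 \times [l] \to \R$ by definition in~\eqref{eq:pmarg_def}, this is fine. Next, I would invoke Lemma~\ref{lem:mcover}, which gives $\cover_{\infty}(\epsilon, \pmarg \circ \cF) \le \cover_{\gnorm{\cdot}}(\epsilon, \cF)$. Combined with the assumption that $\cF$ satisfies Condition~\ref{cond:eps_sq} with norm $\gnorm{\cdot}$ and complexity $\comp_{\gnorm{\cdot}}(\cF)$, this yields
\begin{align*}
\log \cover_{\infty}(\epsilon, \pmarg \circ \cF) \le \log \cover_{\gnorm{\cdot}}(\epsilon, \cF) \le \left\lfloor \frac{\comp_{\gnorm{\cdot}}^2(\cF)}{\epsilon^2} \right\rfloor.
\end{align*}
Thus $\cG$ itself satisfies Condition~\ref{cond:eps_sq} with respect to $\infty$-norm with complexity $\comp_{\infty}(\cG) \le \comp_{\gnorm{\cdot}}(\cF)$.

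Now I would apply Claim~\ref{lem:ell_beta_gen} to this $\cG$ with the given $\beta$-smooth loss $\ell$. Claim~\ref{lem:ell_beta_gen} guarantees that with probability at least $1 - \delta$, uniformly over $g \in \cG$,
\begin{align*}
\E_P[\ell(g(x, y))] \le \tfrac{3}{2}\E_{P_n}[\ell(g(x, y))] + c\left(\frac{\beta\,\comp_{\infty}^2(\cG) \log^2 n}{n} + \frac{\log(1/\delta) + \log\log n}{n}\right).
\end{align*}
Specializing to $g = \pmarg_F$ for each $F \in \cF$ and substituting the upper bound $\comp_{\infty}(\cG) \le \comp_{\gnorm{\cdot}}(\cF)$ into the complexity term gives exactly the claimed inequality.

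There is no real obstacle here since both ingredients are already proved earlier in the excerpt; the only thing worth being careful about is that the quantification in Claim~\ref{lem:ell_beta_gen} is uniform over $\cG$, which translates to uniformity over $F \in \cF$ since the map $F \mapsto \pmarg_F$ is surjective onto $\cG$ by construction. I would also briefly remark that replacing $\comp_{\infty}(\cG)$ by the larger quantity $\comp_{\gnorm{\cdot}}(\cF)$ only weakens the bound and hence preserves the probability statement, and that the constant $c$ here can be taken to be the same as in Claim~\ref{lem:ell_beta_gen}.
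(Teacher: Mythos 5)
Your proposal is correct and matches the paper's reasoning exactly: the paper treats this as an immediate corollary obtained by feeding Lemma~\ref{lem:mcover} into Claim~\ref{lem:ell_beta_gen} applied to $\cG = \pmarg \circ \cF$, which is precisely what you do.
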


To prove Lemma~\ref{lem:general_margin}, we will apply Claim~\ref{lem:ell_beta_gen} on the loss function defined by $\ell_{\beta} (\pmarg) \triangleq 1 + 2\min\{0, \Pr_{Z \sim \cN(0, 1)}(Z/\sqrt{\beta} \ge m)-0.5\}$. We prove several properties of $\ell_\beta$ below.  
\begin{claim}\label{claim:ell_beta}
	For $\beta > 0$, define the loss function $\ell_{\beta} (\pmarg) \triangleq 1 + 2\min\{0, \Pr_{Z \sim \cN(0, 1)}(Z/\sqrt{\beta} \ge m)-0.5\}$. Then $\ell_{\beta}$ satisfies the following properties: 
	\begin{enumerate}
		\item For all $m \in \R$, $\ell_{\beta}(m) \in [0, 1]$, and for $m \le 0$, $\ell_{\beta}(m) = 1$. 
		\item The function $\ell_{\beta}$ is $c_1\beta$-smooth for some constant $c_1$ independent of $\beta$. 
		\item For any integer $q > 0$ and $m > 0$, $\ell_{\beta}(m) \le \frac{ q^{q/2} c_2^q}{\beta^{q/2} m^q}$ for some constant $c_2$ independent of $q$.
	\end{enumerate}
\end{claim}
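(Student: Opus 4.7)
Let $\bar\Phi(t) \triangleq \Pr_{Z \sim \cN(0,1)}(Z \ge t)$ and note that $\bar\Phi(0) = 1/2$, so the inner quantity $\bar\Phi(\sqrt{\beta}m) - 1/2$ is nonpositive exactly when $m \ge 0$. This immediately yields the explicit piecewise form
\begin{align}
\ell_\beta(m) = \begin{cases} 1, & m \le 0, \\ 2\bar\Phi(\sqrt{\beta}m), & m \ge 0, \end{cases} \nonumber
\end{align}
which is continuous at $m = 0$ since $2\bar\Phi(0) = 1$. I would establish this form first, since all three items then follow by direct computation from it.

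Property 1 is then immediate: on $m \le 0$ we have $\ell_\beta(m) = 1$ by construction, and on $m > 0$ we have $\bar\Phi(\sqrt{\beta}m) \in (0, 1/2]$, so $\ell_\beta(m) \in (0, 1]$. For Property 2, I would differentiate on $(0, \infty)$ using $\bar\Phi'(t) = -\phi(t)$ and $\phi'(t) = -t\phi(t)$, obtaining $\ell_\beta'(m) = -2\sqrt{\beta}\,\phi(\sqrt{\beta}m)$ and $\ell_\beta''(m) = 2\beta^{3/2} m\,\phi(\sqrt{\beta}m) = 2\beta \cdot t\phi(t)$ with $t = \sqrt{\beta}m$. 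Since the dimensionless map $t \mapsto t\phi(t)$ is uniformly bounded on $[0, \infty)$ (with maximum at $t = 1$), this gives $|\ell_\beta''(m)| \le c_1 \beta$ on $(0, \infty)$ for an absolute constant $c_1$; on $(-\infty, 0)$ the second derivative vanishes identically.

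For Property 3, I would apply Markov's inequality to $|Z|^q$: for any $m > 0$, writing $t = \sqrt{\beta}m$,
\begin{align}
\ell_\beta(m) \;=\; 2\bar\Phi(t) \;\le\; 2\,\Pr(|Z| \ge t) \;\le\; \frac{2\,\E|Z|^q}{t^q} \;=\; \frac{2\,\E|Z|^q}{\beta^{q/2} m^q}. \nonumber
\end{align}
The standard formula $\E|Z|^q = 2^{q/2}\Gamma((q+1)/2)/\sqrt{\pi}$ combined with Stirling's estimate $\Gamma((q+1)/2) \le (c'q)^{q/2}$ for an absolute $c'$ gives $\E|Z|^q \le (c''q)^{q/2}$, and absorbing the factor of $2$ into the base yields $\ell_\beta(m) \le c_2^q q^{q/2}/(\beta^{q/2} m^q)$ for a suitable constant $c_2$, as claimed.

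The one technical subtlety, and the only nontrivial part of the argument, is the kink at $m = 0$: the right derivative is $-\sqrt{2\beta/\pi}$ while the left derivative is $0$, so $\ell_\beta$ is not twice differentiable there in the classical sense, and the literal statement ``$c_1\beta$-smooth'' fails pointwise at the origin. I would handle this in one of two standard ways. The cleanest is to observe that the generalization bound of~\citep{srebro2010optimistic} actually uses smoothness only through the self-bounding inequality $|\ell_\beta'(m)|^2 \le C\beta\,\ell_\beta(m)$, which reduces (for $m > 0$) to the Mills-ratio style bound $\phi(t)^2 \le C'\bar\Phi(t)$; this ratio is continuous, equals $1/\pi$ at $t = 0$, and decays like $t\phi(t)$ as $t \to \infty$ (verified via the Mills inequality $\bar\Phi(t) \ge \phi(t)/(t + \sqrt{t^2+4/\pi})$), hence is uniformly bounded. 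Alternatively, one can replace $\ell_\beta$ with a $\Theta(1/\sqrt{\beta})$-width mollification that agrees outside a neighborhood of $0$; this does not affect Properties 1 or 3 and upgrades Property 2 to classical $O(\beta)$-smoothness. Everything else in the claim reduces to the elementary computations above.
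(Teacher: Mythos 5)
Your proof follows the same route as the paper's: establish the explicit piecewise form $\ell_\beta(m)=1$ for $m\le 0$ and $\ell_\beta(m)=2\bar\Phi(\sqrt{\beta}m)$ for $m\ge 0$, bound the second derivative via $\sup_{t\ge 0}t\phi(t)$, and prove the tail bound by Markov on $|Z|^q$ together with the Gaussian moment bound $\E|Z|^q\le(cq)^{q/2}$. Those parts are all correct and match the paper.

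Where you go beyond the paper is in noticing the kink at $m=0$, and you are right that this is a genuine bug in the paper's argument. The paper computes $\frac{d^2}{dm^2}\bar\Phi(\sqrt{\beta}m)=\frac{m\beta^{3/2}}{\sqrt{2\pi}}e^{-\beta m^2/2}$, observes that it vanishes at $m=0$, and concludes that $\ell_\beta$ ``has a second derivative everywhere.'' That inference is invalid: what determines differentiability of the spliced function at the switching point $m=0$ is the \emph{first} derivative, not the second, and the one-sided first derivatives disagree. As you compute, $\ell_\beta'(0^-)=0$ while $\ell_\beta'(0^+)=-2\sqrt{\beta}\,\phi(0)=-\sqrt{2\beta/\pi}\neq 0$, so $\ell_\beta$ is not even once differentiable at the origin and the literal statement of Property~2 fails there. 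The paper's second-derivative observation is a red herring --- it shows the \emph{curvatures} of the two pieces match at the joint, but the \emph{slopes} do not.

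Both of your proposed repairs are sound, and the first is the one that matters for the downstream use of this claim in Claim~\ref{lem:ell_beta_gen}. The bound borrowed from~\cite{srebro2010optimistic} (their equation~(22), invoked in Proposition~\ref{claim:Hrcover}) enters only through a self-bounding property of the form $|\ell_\beta'(z)|\lesssim\sqrt{\beta\,\ell_\beta(z)}$, which together with absolute continuity yields the Lipschitz-in-$L_2(P_n)$ estimate underlying the covering-number comparison. Since $\ell_\beta$ is globally Lipschitz and piecewise $C^\infty$, the integral form of the self-bounding inequality holds across $m=0$ even though $\ell_\beta'$ jumps there, so the kink is harmless to the generalization bound. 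Your Mills-ratio verification that $\phi(t)^2/\bar\Phi(t)$ is uniformly bounded on $t\ge 0$ confirms the self-bounding constant is $O(1)$. The mollification alternative also works and has the advantage of making Property~2 literally true, at the cost of an extra approximation step that the paper would then need to propagate. Either way the claim as used is correct; it is only the paper's stated proof of Property~2 that has a hole, which your write-up correctly identifies and closes.
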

\begin{proof}
	The first property follows directly from the construction of $\ell_\beta$. For the second property, we first note that 
	\begin{align}
		\frac{d^2}{dm^2} \Pr_{Z\sim \cN(0, 1)}(Z/\sqrt{\beta} \ge m) = \frac{m\beta^{3/2} }{ \sqrt{2\pi}}\exp(-m^2\beta/2) \nonumber
	\end{align}
	Now first note that at $m = 0$, the above quantity evaluates to $0$, and thus $\ell_\beta$ has a second derivative everywhere (as $m = 0$ is the only point where the function switches). Furthermore, $\max_m m\sqrt{\beta} \exp(-m^2\beta /2)  = \max_y y\exp(-y^2/2)\le c'$ for some constant $c'$ independent of $\beta$. Thus, the above expression is upper bounded by $\frac{\beta}{ \sqrt{2\pi}} c'$, giving the second property. 
	
	For the third property, we note that for $m > 0$, $\ell_\beta(m) = 2\Pr_{Z\sim \cN(0, 1)}( Z/\sqrt{\beta} \ge m) $. As the $q$-th moment of a Gaussian random variable with variance $1$ is upper bounded by $q^{q/2} c_2^q$ for all $q$ and some $c_2$ independent of $q$, Markov's inequality gives the desired result. 
\end{proof}

Now we complete the proof of Lemma~\ref{lem:general_margin}. 

\begin{proof}[Proof of Lemma~\ref{lem:general_margin}]
	Define $\ell_{\beta}$ as in Claim~\ref{claim:ell_beta}. By Claim~\ref{claim:ell_beta}, this loss is $c_1\beta$ smooth and for $g_F(x ,y) > 0$ satisfies 
	\begin{align}
		\ell_\beta(g_F(x, y)) \le \left(\frac{ c_2\sqrt{q}}{\sqrt{\beta} g_F(x, y)}\right)^{q} \nonumber
	\end{align} 
	for universal constants $c_1, c_2$. We additionally have $\ellzo(F(x), y) \le \ell_{\beta}(g_F(x ,y))$. Thus, applying Claim~\ref{lem:ell_beta_gen} to $\cG$ with smooth loss $\ell_\beta$ gives with probability $1 - \delta$, for all $F \in \cF$ with training error 0
	\begin{align}		\E_P[\ellzo(F(x), y)] \lesssim \tilde{E}_F(\beta)  + \frac{\log(1/\delta) + \log \log n}{n} \nonumber
	\end{align}
	where $\tilde{E}_F(\beta)$ is defined by
	\begin{align}
		\tilde{E}_F(\beta) \triangleq \frac{1}{n} \sum_{(x, y) \in P_n} \left(\frac{ c_2\sqrt{q}}{\sqrt{\beta} g_F(x, y)}\right)^{q} + \frac{\beta \comp^2_{\infty}(\cG) \log^2 n}{n}\nonumber
	\end{align}
	Choosing $\beta$ to minimize the above expression would give the desired bound -- however, such a post-hoc analysis cannot be performed because the optimal $\beta$ depends on the training data, and the loss class has to be fixed before the training data is drawn. 
	
	Instead, we utilize the standard technique of union bounding over a grid of $\hat{\beta}$ in log-scale. Let $\xi \triangleq \comp_{\infty}^2(\cG)\poly(n^{-1})$ denote the minimum  choice of $\hat{\beta}$ in this grid, and select in this grid all choices of $\hat{\beta}$ in the form $\xi 2^j$ for $j \ge 0$. For a given choice of $\hat{\beta}$, we assign it failure probability $\hat{\delta} = \frac{\delta}{2\hat{\beta}/\xi}$, such that by design $\sum \hat{\delta} = \delta$. Thus, applying Claim~\ref{lem:ell_beta_gen} for each choice of $\hat{\beta}$ with corresponding failure probability $\hat{\delta}$, we note with probability $1 - \delta$,
	\begin{align}	\E_P[\ellzo(F(x), y)] \lesssim \tilde{E}_F(\hat{\beta})  + \frac{\log(1/\delta) + \log(\hat{\beta}/\xi)+ \log \log n}{n} \nonumber
	\end{align}
	holds for all $\hat{\beta}$ and $F \in \cF$. 
	
	Now for fixed $F \in \cF$, let $\beta^\star_F$ denote the optimizer of $\tilde{E}_F(\beta)$. We claim either there is some choice of $\hat{\beta}$ with 
	\begin{align}\label{eq:compl_m_gen:1}
		\tilde{E}_F(\hat{\beta}) + \frac{\log(1/\delta) + \log(\hat{\beta}/\xi)+ \log \log n}{n} \lesssim \tilde{E}_F(\beta^\star_F) + O\left(\frac{\log n + \log (1/\delta)}{n}\right) 
	\end{align}
	or $\tilde{E}_F(\beta^\star_F) \gtrsim 1$, in which case the generalization guarantees of Lemma~\ref{lem:general_margin} for this $F$ anyways trivially hold. To see this, we note that there is $\hat{\beta}$ in the grid such that $\hat{\beta} \in [\beta^\star_F, 2\beta^\star_F + \xi]$. Then 
	\begin{align}
		\tilde{E}_F(\hat{\beta}) &\le \frac{1}{n} \sum_{(x, y) \in P_n} \left(\frac{ c_2\sqrt{q}}{\sqrt{\beta^\star_F} g_F(x, y)}\right)^{q} + \frac{4\beta^\star_F \comp^2_{\infty}(\cG) \log^2 n}{n} + \poly(n^{-1}) \nonumber\\
		&\le 4\tilde{E}_F(\beta^\star_F) + \poly(n^{-1}) \nonumber
	\end{align} 
	Furthermore, we note that if $\beta^\star_F > \poly(n) \xi$, then $\tilde{E}_F(\beta^\star_F) \gtrsim 1$. This allows to only consider $\hat{\beta} < \poly(n) \xi$, giving~\eqref{eq:compl_m_gen:1}. 
	
	Thus, with probability $1 - \delta$, for all $F \in \cF$, we have
	\begin{align}
		\E_P[\ellzo(F(x), y)] \lesssim \tilde{E}_F(\beta^\star_F) + O\left(\frac{\log n + \log (1/\delta)}{n}\right) \nonumber
	\end{align} 
	It remains to observe that setting $\beta^\star_F = \Theta\left(q\left( \frac{n \left \| \frac{1}{g_F}\right\|_{L_q(P_n)}^{q}}{\comp_{\infty}(\cG)^2\log^2 n}\right)^{2/(q + 2)}\right)$ gives us the theorem statement. 
\end{proof}

It suffices the complete the proof of Claim~\ref{lem:ell_beta_gen}. For this, we closely follow the proof of Theorem 1 in~\citep{srebro2010optimistic}, with the only difference that we replace their Rademacher complexity term with our complexity function $\comp_{\infty}(\cG)$. For completeness, we outline the steps here. 

\begin{proof}[Proof of Claim~\ref{lem:ell_beta_gen}]
	Define $\cH (\mu) = \{h \in \ell \circ \cG : \E_{P_n}[h] \le \mu\}$ to be the class of functions in $\ell \circ \cG$ with empirical loss at most $\mu$. Define $\psi(\mu) \triangleq \frac{\comp_{\infty}(\cG) \sqrt{48  \beta\mu}}{\sqrt{n}} \log n$. By Proposition~\ref{lem:Hr-bound}, the following holds for all $\mu$:
		\begin{align}
	\E_{\sigma} \left[\sup_{h\in\cH(\mu)} \sum_i \sigma_i h(x_i, y_i) \right] \le \psi(\mu) \nonumber
	\end{align}
	Now using the same steps as~\citep{srebro2010optimistic} (which relies on applying Theorem 6.1 of~\citep{bousquet2002concentration}), we obtain for all $g \in \cG$, with probability $1 - \delta$
	\begin{align} \label{eq:ell_beta_gen:1}
	\begin{split}
		\E_P[\ell \circ g]&\le \E_{P_n}[\ell \circ g] + 106 r_n^\star \\&+ \frac{48}{n}(\log 1/\delta + \log \log n) + \sqrt{\E_{P_n}[\ell \circ g](8r_n^\star + \frac{4}{n}(\log 1/\delta + \log \log n))} 
		\end{split}
	\end{align}
	where $r_n^\star$ is the largest solution of $\psi(\mu) = \mu$. We now plug in $r_n^\star \lesssim \frac{\beta \log^2 n \comp_{\infty}^2(\cG)}{n}$ and use the fact that $\sqrt{c_1 c_2} \le (c_1 + c_2)/2$ for any $c_1, c_2 > 0$ to simplify the square root term in~\ref{eq:ell_beta_gen:1}. 	\begin{align}
			\E_P[\ell \circ g]\le \frac{3}{2} \E_{P_n}[\ell \circ g]  + c\left(\frac{\beta \comp_{\infty}^2(\cG)\log^2 n  }{n} + \frac{\log (1/\delta) + \log \log n}{n}\right) \nonumber
	\end{align}
	for some universal constant $c$. 
\end{proof}
\begin{proposition}\label{lem:Hr-bound}
	In the setting above, for all $\mu > 0$, we have
	\begin{align}
	\E_{\sigma} \left[\sup_{h\in\cH(\mu)} \sum_i \sigma_i h(x_i, y_i) \right] \le \frac{\comp_{\infty}(\cG)  \sqrt{48 \beta\mu}}{\sqrt{n}} \log n \nonumber
	\end{align}
	where $\{\sigma_i\}_{i = 1}^n$ are i.i.d. Rademacher random variables. 
\end{proposition}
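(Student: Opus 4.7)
The plan is to translate the $L_\infty$-covering bound on $\cG$ into an $L_2(P_n)$-covering bound on $\cH(\mu)$, and then apply Dudley's entropy integral to control the Rademacher average. The key structural input is the self-bounding property of nonnegative smooth losses, which lets us exploit the low empirical mean $\mu$ to sharpen the covering bound below what Lipschitzness alone would give.

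First, I would establish the self-bounding inequality: if $\ell:\R\to[0,1]$ is $\beta$-smooth and nonnegative, then $|\ell'(t)|\le\sqrt{2\beta\,\ell(t)}$ for all $t$. This follows because the quadratic lower bound $\ell(t)+\ell'(t)u-\tfrac{\beta}{2}u^2$ must remain nonnegative in $u$, forcing its discriminant to be nonpositive. Combined with the second-order Taylor estimate $|\ell(a)-\ell(b)|\le |\ell'(b)|\,|a-b|+\tfrac{\beta}{2}(a-b)^2$, this yields the pointwise inequality
\begin{align*}
|\ell(a)-\ell(b)|^2 \;\le\; 4\beta\,\min\{\ell(a),\ell(b)\}\,(a-b)^2 \;+\; \tfrac{\beta^2}{2}(a-b)^4.
\end{align*}
Applying this with $a=g(x_i,y_i)$ and $b=\hat g(x_i,y_i)$ for any $g\in\cG$ with $\E_{P_n}[\ell\circ g]\le\mu$ and its closest $L_\infty$-cover element $\hat g$, then averaging over $P_n$ and using the $\min$-form to dominate by $\E_{P_n}[\ell\circ g]\le\mu$, gives
\begin{align*}
\|\ell\circ g-\ell\circ\hat g\|_{L_2(P_n)}^2 \;\le\; 4\beta\mu\,\|g-\hat g\|_\infty^2 \;+\; O\!\left(\beta^2\|g-\hat g\|_\infty^4\right).
\end{align*}
Therefore an $\epsilon$-cover of $\cG$ in $\|\cdot\|_\infty$ induces a $(2\sqrt{\beta\mu}\,\epsilon + O(\beta\epsilon^2))$-cover of $\cH(\mu)$ in $L_2(P_n)$, and Condition~\ref{cond:eps_sq} on $\cG$ yields $\log\cover_{L_2(P_n)}(\delta,\cH(\mu))\lesssim 4\beta\mu\,\comp_\infty^2(\cG)/\delta^2$ for $\delta$ not too small.

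Second, I would invoke a Dudley/chaining bound for the Rademacher average. Since every $h\in\cH(\mu)$ lies in $[0,1]$ and satisfies $\|h\|_{L_2(P_n)}^2\le\E_{P_n}[h]\le\mu$, the $L_2(P_n)$-diameter of $\cH(\mu)$ is at most $D=\sqrt{\mu}$. Chaining then gives
\begin{align*}
\E_\sigma\!\left[\sup_{h\in\cH(\mu)}\sum_{i}\sigma_i h(x_i,y_i)\right] \;\lesssim\; n\alpha \;+\; \sqrt{n}\int_{\alpha}^{D}\sqrt{\log\cover_{L_2(P_n)}(\delta,\cH(\mu))}\,d\delta
\end{align*}
for any truncation $\alpha>0$. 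Substituting the covering bound, the integrand is $\lesssim 2\sqrt{\beta\mu}\,\comp_\infty(\cG)/\delta$, whose antiderivative gives $2\sqrt{\beta\mu}\,\comp_\infty(\cG)\log(D/\alpha)$; choosing $\alpha=\Theta(1/n)$ absorbs the $n\alpha$ term into lower-order contributions and yields a single $\log n$ factor (using $D\le 1$ in the relevant regime $\mu\le 1$). Tracking constants then recovers the stated $\sqrt{48\beta\mu}\,\comp_\infty(\cG)\log n$ coefficient (matching the normalization used by the proposition).

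The main obstacle is the mismatch that the $L_\infty$-cover element $\hat g$ of a function $g\in\cG$ need not itself satisfy $\E_{P_n}[\ell\circ\hat g]\le\mu$, so $\ell\circ\hat g$ may lie outside $\cH(\mu)$. The clean fix is precisely to use the $\min$-form of the self-bounding inequality derived above: it only requires $\E_{P_n}[\ell\circ g]\le\mu$ for the reference $g$ and imposes no control on $\ell\circ\hat g$. A secondary technical point is the $O(\beta^2\epsilon^4)$ remainder in the covering estimate; its contribution to the entropy integral is dominated by the leading $\sqrt{\beta\mu}/\delta$ term once $\delta\gg\beta/n$, and the residual is absorbed by the $1/n$ truncation.
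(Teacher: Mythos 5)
Your approach is essentially the same as the paper's: transfer the $\|\cdot\|_\infty$ cover of $\cG$ to an $L_2(P_n)$ cover of the loss class $\cH(\mu)$ at a scale that shrinks with $\sqrt{\beta\mu}$, then run Dudley's entropy integral with a truncation chosen so that the $\alpha$ term is lower order. The only real difference is that you re-derive the covering transfer from scratch via the self-bounding inequality for nonnegative $\beta$-smooth losses, whereas the paper simply cites it as equation~(22) of \citet{srebro2010optimistic} inside Proposition~\ref{claim:Hrcover}; your argument is exactly the one underlying that lemma, and your attention to the fact that the cover element $\hat g$ need not lie in $\cH(\mu)$ (handled by the $\min$ in the self-bounding estimate) is a genuine subtlety that the citation hides. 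Two small issues worth fixing. First, a sign slip: smoothness gives the \emph{upper} bound $\ell(t+u) \le \ell(t) + \ell'(t)u + \tfrac{\beta}{2}u^2$, and it is this quadratic (dominating $\ell(t+u)\ge 0$) whose discriminant must be $\le 0$ to yield $|\ell'(t)|\le\sqrt{2\beta\ell(t)}$; the lower-bound quadratic you wrote need not be nonnegative. Second, a normalization mismatch: your Dudley form $n\alpha + \sqrt{n}\int_\alpha^D\sqrt{\log\cover}\,d\delta$ controls the un-normalized sum $\sup_h\sum_i\sigma_i h(x_i,y_i)$ and so produces a bound of order $\sqrt{n}\,\comp_\infty(\cG)\sqrt{\beta\mu}\log n$, which is $n$ times the RHS in the proposition statement; the paper uses the Dudley bound in the form $\alpha + \tfrac{1}{\sqrt{n}}\int_\alpha^\infty$, which is the version for the $\tfrac1n$-normalized empirical Rademacher complexity (the proposition's left side omits the $\tfrac1n$ but it is treated as the normalized quantity downstream, where $\psi(\mu)$ is compared to $\mu$ in the fixed-point analysis of \citet{bousquet2002concentration}). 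Once you account for that normalization, your truncation choice $\alpha=\Theta(1/n)$ gives the same $\log n$ factor as the paper's choice $\alpha = \comp_\infty(\cG)\sqrt{48\beta\mu}/\sqrt{n}$, which is slightly slicker because the post-change-of-variables integrand vanishes beyond $\comp_\infty(\cG)$ and no separate diameter argument is needed.
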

\begin{proof}
	First, by Dudley's Theorem~\citep{dudley1967sizes}, we have
	\begin{align}
		\E_{\sigma} \left[\sup_{h\in\cH(\mu)} \sum_i \sigma_i h(x_i, y_i) \right] &\le \inf_{\alpha > 0} \left(\alpha + \frac{1}{\sqrt{n}}\int_{\alpha}^{\infty} \sqrt{\log \cover_{\empd}(\epsilon, \cH(\mu))} d\epsilon\right) \nonumber
		\end{align}
		Now by Proposition~\ref{claim:Hrcover}, we obtain
		\begin{align}
		\E_{\sigma} \left[\sup_{h\in\cH(\mu)} \sum_i \sigma_i h(x_i, y_i) \right] &\le  \inf_{\alpha > 0} \left(\alpha +  \frac{1}{\sqrt{n}}\int_{\alpha}^{\infty} \sqrt{\left\lfloor 48\beta \mu \frac{\comp^2_{\infty}(\cG)}{\epsilon^2} \right \rfloor} d\epsilon\right) \tag{by Proposition~\ref{claim:Hrcover}}\\		&\le \inf_{\alpha > 0} \left( \alpha + \frac{ \sqrt{48\beta\mu}}{\sqrt{n}}\int_{\alpha /\sqrt{48\beta\mu}}^{\infty} \sqrt{\left\lfloor  \frac{\comp^2_{\infty}(\cG)}{{\epsilon'}^2} \nonumber \right \rfloor} d\epsilon'\right)\\		\end{align}
		We obtained the last line via change of variables to $\epsilon' = \epsilon/\sqrt{48\beta \mu}$. Now we substitute $\alpha = \frac{\comp_{\infty}(\cG)  \sqrt{48\beta \mu}}{\sqrt{n}}$ and note that the integrand is $0$ for $\epsilon' > \comp_{\infty}(\cG)$ to get
		\begin{align}
		\E_{\sigma} \left[\sup_{h\in\cH(\mu)} \sum_i \sigma_i h(x_i, y_i) \right] &\le \frac{\comp_{\infty}(\cG) \sqrt{48\beta  \mu}}{\sqrt{n}} \left(1 + \int_{\comp_{\infty}(\cG)/\sqrt{n}}^{\comp_{\infty}(\cG)} \frac{1}{\epsilon'}d\epsilon' \right) \nonumber\\
		&\le \frac{\comp_{\infty}(\cG) \sqrt{48\beta \mu}}{\sqrt{n}} \log n \nonumber
	\end{align}
\end{proof}
The following proposition bounds the covering number of $\cH(\mu)$ in terms of $\comp_{\infty}(\cG)$. 
\begin{proposition} \label{claim:Hrcover}
 In the setting of Claim~\ref{lem:ell_beta_gen}, we have the covering number bound
	\begin{align}
\log \cover_{\empd}(\epsilon, \cH(\mu)) \le \left\lfloor 48\beta \mu \frac{\comp_{\infty}^2(\cG)}{\epsilon^2} \right \rfloor \nonumber
	\end{align}
\end{proposition}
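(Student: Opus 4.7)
The plan is to reduce the $L_2(P_n)$ covering of $\cH(\mu)$ to an $L_\infty$ covering of $\cG$, leveraging the \emph{self-bounding property} of non-negative smooth functions so that the effective local Lipschitz constant of $\ell$ scales with $\sqrt{\beta \cdot \ell}$ rather than an absolute constant. This is what allows the bound to scale with $\mu$.

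First I would establish the self-bounding lemma: if $\ell \colon \R \to [0,1]$ is $\beta$-smooth, then $|\ell'(a)|^2 \le 2\beta \, \ell(a)$ for all $a$. The proof is standard: apply the smoothness upper bound $\ell(a+h) \le \ell(a) + \ell'(a)h + \tfrac{\beta}{2}h^2$, minimize the right side over $h$ (at $h = -\ell'(a)/\beta$), and use $\ell(a+h) \ge 0$. From this I derive the quadratic comparison
\begin{equation*}
(\ell(a) - \ell(b))^2 \;\le\; 4\beta\,\ell(a)\,(a-b)^2 \;+\; \tfrac{1}{2}\beta^{2}(a-b)^4,
\end{equation*}
obtained by squaring $|\ell(b) - \ell(a)| \le |\ell'(a)|\,|b-a| + \tfrac{\beta}{2}(b-a)^2$ and applying the self-bounding inequality to $\ell'(a)$.

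Next I would take an $\epsilon'$-cover $\widehat{\cG}$ of $\cG$ in the $\infty$-norm with $|\widehat{\cG}| \le \exp(\lfloor \comp_\infty^2(\cG)/(\epsilon')^2\rfloor)$, guaranteed by Condition~\ref{cond:eps_sq}. For any $h = \ell \circ g \in \cH(\mu)$, pick $\hat{g} \in \widehat{\cG}$ with $\|g - \hat{g}\|_\infty \le \epsilon'$ and propose $\hat h = \ell \circ \hat g$ as the cover element for $h$. Applying the pointwise comparison and taking the empirical average over $P_n$ gives
\begin{equation*}
\|\ell\circ g - \ell\circ\hat g\|_{L_2(P_n)}^2 \;\le\; 4\beta \,\E_{P_n}[\ell\circ g]\,(\epsilon')^2 + \tfrac{1}{2}\beta^2(\epsilon')^4 \;\le\; 4\beta\mu(\epsilon')^2 + \tfrac{1}{2}\beta^2(\epsilon')^4,
\end{equation*}
where the last step uses the membership $g \in \cH(\mu)$.

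Finally I would tune $\epsilon'$. Choosing $\epsilon' = \epsilon/\sqrt{48\beta\mu}$ makes the leading term at most $\epsilon^2/12$, and provided $\beta(\epsilon')^2 \lesssim \mu$ (which is the relevant regime; otherwise the stated covering-number bound is vacuous since the radius already exceeds the diameter of $\cH(\mu)$), the quartic term is absorbed into the same order, yielding $\|\ell\circ g - \ell\circ\hat g\|_{L_2(P_n)} \le \epsilon$. Hence $\log \cN_{L_2(P_n)}(\epsilon,\cH(\mu)) \le \log \cN_\infty(\epsilon',\cG) \le \lfloor 48\beta\mu\,\comp_\infty^2(\cG)/\epsilon^2\rfloor$. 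The main technical subtlety — which is also the only real content beyond bookkeeping — is invoking the self-bounding property so that the factor in front of $(\epsilon')^2$ is $\mu$ rather than $1$; without it one would only obtain $\log \cN_\infty^2/(\epsilon^2/\beta)$ with no improvement from the empirical loss constraint, which is too weak to give the optimistic rate used downstream in Proposition~\ref{lem:Hr-bound}.
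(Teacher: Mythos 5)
Your route --- reducing an $L_2(P_n)$-cover of $\cH(\mu)$ to an $\infty$-cover of $\cG$ via the self-bounding inequality $|\ell'|^2 \le 2\beta\ell$ --- is essentially the correct one and is the content of the result the paper simply cites from \citet{srebro2010optimistic}. However, the way you dispose of the residual quartic term does not hold up. You discard $\tfrac12\beta^2(\epsilon')^4$ on the grounds that once $\beta(\epsilon')^2 \gtrsim \mu$ (equivalently $\epsilon \gtrsim \mu$) the claimed bound is vacuous because $\epsilon$ exceeds the diameter of $\cH(\mu)$. But the $L_2(P_n)$-diameter of $\cH(\mu)$ is of order $\sqrt{\mu}$, not $\mu$: for $h_1, h_2 \in [0,1]$ with $\E_{P_n}[h_i] \le \mu$, one has $(h_1 - h_2)^2 \le h_1 + h_2$ pointwise, so $\|h_1 - h_2\|_{L_2(P_n)} \le \sqrt{2\mu}$. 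Since $\mu \le 1$, this leaves a window $\mu \lesssim \epsilon \lesssim \min\{\sqrt{\mu},\, \comp_{\infty}(\cG)\sqrt{48\beta\mu}\}$ in which the claimed bound is non-vacuous and your argument does not close it. That window is exactly where the bound is used: in Proposition~\ref{lem:Hr-bound} the Dudley integral runs up to $\epsilon \approx \comp_{\infty}(\cG)\sqrt{48\beta\mu}$, which at the relevant fixed point $\mu = r_n^\star \approx \beta\comp_{\infty}^2(\cG)\log^2 n/n$ is far larger than $\mu$.

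The missing idea is that one must control $\E_{P_n}[\ell\circ\hat{g}]$ as well as $\E_{P_n}[\ell\circ g]$, which means the cover must be taken internal to $\cH(\mu)$. Two modifications close the gap. First, use the symmetric form of the self-bounding comparison: since $|\ell'| \le \sqrt{2\beta\ell}$, the map $\sqrt{\ell}$ is $\sqrt{\beta/2}$-Lipschitz, and factoring the difference of squares gives
\begin{align}
\bigl(\ell(a) - \ell(b)\bigr)^2 \;\le\; 2\beta\bigl(\ell(a) + \ell(b)\bigr)\,(a - b)^2, \nonumber
\end{align}
with no additive remainder. Second, rather than use $\ell\circ\hat{g}$ directly as the cover element, group the $g \in \cG$ with $\ell\circ g \in \cH(\mu)$ by their nearest $\hat{g}$ in an $\epsilon'$-cover of $\cG$, and select one representative $g^*$ per non-empty group with $\ell\circ g^* \in \cH(\mu)$; then $\|g - g^*\|_\infty \le 2\epsilon'$ and \emph{both} empirical losses are at most $\mu$. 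Averaging the display over $P_n$ gives $\|\ell\circ g - \ell\circ g^*\|_{L_2(P_n)}^2 \le 2\beta(2\epsilon')^2\cdot 2\mu = 16\beta\mu(\epsilon')^2$, and taking $\epsilon' = \epsilon/(4\sqrt{\beta\mu})$ yields $\log\cover_{\empd}(\epsilon, \cH(\mu)) \le \lfloor 16\beta\mu\,\comp_{\infty}^2(\cG)/\epsilon^2\rfloor$ for every $\epsilon > 0$, with no regime restriction and in fact a better constant than the stated $48$.
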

\begin{proof}
	As $\ell \circ \cG$ is the composition of a $\beta$-smooth loss $\ell$ with the function class $\cG$, by equation (22) of~\citep{srebro2010optimistic} we have
	\begin{align}
	\log \cover_{\empd}(\epsilon, \cH(\mu)) &\le \log \cover_{\infty}(\epsilon/\sqrt{48\beta \mu}, \cG) \nonumber \\
	&\le  \left\lfloor 48\beta\mu  \frac{\comp_{\infty}^2(\cG)}{\epsilon^2} \right \rfloor \tag{as $\cG$ satisfies Condition~\ref{cond:eps_sq}}
	\end{align}
\end{proof}

\section{Proofs for Neural Net Generalization}
\label{sec:nn_app}
This section will derive the generalization bounds for neural nets in Theorem~\ref{thm:nn_gen} by invoking the more general results in Section~\ref{sec:gen_bound_smooth}. Theorem~\ref{thm:nn_gen} applies to all neural nets, but to obtain it, we first need to bound generalization for neural nets with \textit{fixed} norm bounds on their weights (this is a standard step in deriving generalization bounds). The lemma below states the analogue of Theorem~\ref{thm:nn_gen}, for all neural nets satisfying fixed norm bounds on their weights.
\begin{lemma}\label{lem:nn_gen_no_ub}
	In the neural network setting, suppose that the activation $\phi$ has a $\smooth_{\phi}$-Lipschitz derivative. For parameters $\{a_{(i)}\}_{i = 1}^r$ meant to be norm constraints for the weights, define the class of neural nets with bounded weight norms with respect to reference matrices $\{A_{(i)}, B_{(i)}\}$ as follows:
	\begin{align}
		\cF \triangleq \{x \mapsto F(x) :\min\{\sqrt{d} \fronorm{W_{(i)} - A_{(i)}}, \|W_{(i)} - B_{(i)}\|_{1, 1} \} \sqrt{\log d} \le a_{(i)} \ \forall i\} \nonumber
	\end{align}
	Then with probability $1-\delta$, for any $q > 0$ and for all $F \in \cF$, we have
	\begin{align}
	&\E_{P}[\ellzo(F(x), y)] \nonumber\\ &\le \frac{3}{2} \E_{P_n}[\ellzo(F(x), y)] + O\left(\frac{{r \log n} + \log(1/\delta)}{n}\right)\nonumber\\ &+\left(1 - \E_{P_n}[\ellzo(F(x), y)] \right)^{2/(q + 2)} O\left(q \left(\frac{\log^2 n}{n}\right)^{\frac{q}{(q + 2)}} \left(\sum_i (\|\lip^{\nn}_{(i)}\|_{L_q(\correct)} a_{(i)})^{2/3} \right)^{\frac{3q}{(q + 2)}} \right) \nonumber
	\end{align}
	where $\correct$ denotes the subset of examples classified correctly by $F$ and $\lip^{\nn}_{(i)}$ is defined as in~\eqref{eq:lip_nn}. 
\end{lemma}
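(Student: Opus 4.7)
The plan is to apply Claim~\ref{claim:ell_beta_gen_cor} to the class $\cF$, viewing each network as a composition of $k = 2r - 1$ alternating linear and activation layers, and using the generalized all-layer margin of Section~\ref{sec:all_layer_marg_app} with $p = 2$ and weights $\alpha_{(i)}$ on the linear-layer perturbations (activation layers get no perturbation, as in the discussion after Theorem~\ref{thm:nn_gen}). Standard Maurey-style covering bounds for classes of matrix multiplications give Condition~\ref{cond:eps_sq} with $\comp_{\opnorm{\cdot}}(\cF_{(i)}) \lesssim a_{(i)}$, and activation singletons contribute zero; Lemma~\ref{lem:gen_covering} then yields the composite complexity $\comp_{\gnorm{\cdot}}(\cF) = \sum_i \alpha_{(i)} a_{(i)}$.

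The next step is to generalize Lemma~\ref{lem:simple_m_lb} to the weighted norm: using that $\ell_2$ is self-dual together with the heuristic derivation in Section~\ref{sec:neural_net_main}, and making the first-order argument rigorous via a second-order Taylor expansion that controls the curvature of $F(x,\delta)$ in $\delta$ through the interlayer Jacobian and Hessian terms appearing in $\psi_{(i)}$ in~\eqref{eq:lip_nn}, one obtains $m_F(x,y) \ge 1/\sqrt{\sum_i (\lip^{\nn}_{(i)}(x,y)/\alpha_{(i)})^2}$. This is where the smoothness hypothesis $\smooth_{\phi}$ enters and where the full expression for $\psi_{(i)}$ is produced.

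I then apply Claim~\ref{claim:ell_beta_gen_cor} with the smooth surrogate $\ell_\beta$ of Claim~\ref{claim:ell_beta}. Since $\ell_\beta(m_F(x,y)) = 1$ whenever $m_F = 0$, i.e.\ on misclassified training points, I split the empirical expectation into a misclassified part contributing $\E_{P_n}[\ellzo]$ and a correctly-classified part bounded by $\frac{|\correct|}{n}(c\sqrt{q}/\sqrt{\beta})^q\|1/m_F\|_{L_q(\correct)}^q$. Combining with the margin lower bound and Minkowski's inequality (for integer $q \ge 2$; smaller $q$ follow by monotonicity of $L_q$-norms) gives $\|1/m_F\|_{L_q(\correct)} \le (\sum_i \|\lip^{\nn}_{(i)}\|_{L_q(\correct)}^2/\alpha_{(i)}^2)^{1/2}$, so Claim~\ref{claim:ell_beta_gen_cor} yields
\begin{align*}
\E_P[\ellzo] \lesssim \tfrac{3}{2}\E_{P_n}[\ellzo] + \tfrac{|\correct|}{n}\tfrac{(c\sqrt{q})^q}{\beta^{q/2}}\Big(\sum_i \tfrac{\|\lip^{\nn}_{(i)}\|_{L_q(\correct)}^2}{\alpha_{(i)}^2}\Big)^{q/2} + \tfrac{\beta\big(\sum_i \alpha_{(i)} a_{(i)}\big)^2 \log^2 n}{n} + \tfrac{\log(1/\delta) + \log\log n}{n}.
\end{align*}

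Finally I optimize: balancing the $\beta^{-q/2}$ and $\beta$ terms yields the prefactor $(|\correct|/n)^{2/(q+2)} = (1 - \E_{P_n}[\ellzo])^{2/(q+2)}$ and the exponent $q/(q+2)$ in $n$, while choosing $\alpha_{(i)} \propto \|\lip^{\nn}_{(i)}\|_{L_q(\correct)}^{2/3}/a_{(i)}^{1/3}$ collapses the product $\sqrt{\sum_i \|\lip^{\nn}_{(i)}\|_{L_q(\correct)}^2/\alpha_{(i)}^2}\cdot\sum_j \alpha_{(j)} a_{(j)}$ into $(\sum_i (\|\lip^{\nn}_{(i)}\|_{L_q(\correct)} a_{(i)})^{2/3})^{3/2}$, which after raising to $2q/(q+2)$ matches the claimed exponent $3q/(q+2)$. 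Since the optimal $\beta$ and $\alpha_{(i)}$ depend on quantities not known a priori, I will union-bound over a logarithmic grid of $r+1$ parameters, contributing the $O((r\log n)/n)$ piece of $\zeta$. The main obstacle is the second step: converting the heuristic first-order lower bound on $m_F$ into a clean rigorous bound whose smoothness correction is exactly captured by the terms assembled in $\psi_{(i)}$.
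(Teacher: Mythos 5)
Your proposal follows the same underlying route as the paper's proof, but unrolls it rather than invoking the general Theorem~\ref{thm:generalization}: cover the linear layers in $\opnorm{\cdot}$ with complexity $\lesssim a_{(i)}$ and activation singletons with zero complexity, compose via Lemma~\ref{lem:gen_covering}, lower-bound the all-layer margin by a weighted combination of the $\lip^{\nn}_{(i)}$ (the content of Lemma~\ref{lem:m_lb}), feed this into the smooth-surrogate bound of Claim~\ref{claim:ell_beta_gen_cor}, split the empirical loss over $\correct$ and its complement, optimize the free scales, and union-bound over a grid. The paper does exactly this, just packaged as Theorem~\ref{thm:generalization} and then matching $\lip^\star_{2i-1} = \lip^{\nn}_{(i)}$ (linear layers have zero-Lipschitz derivative, so those curvature terms vanish). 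Your note that the paper fixes $\beta=1$ and optimizes only $\alpha$ is correct, and this is WLOG: rescaling $\alpha\mapsto c\alpha$ is the same as $\beta\mapsto c^2\beta$ in $\ell_\beta$.

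The one genuine gap is $q=1$. You fix $p=2$ in $\gnorm{\cdot}$, so the margin lower bound has the $\ell_2$ form $m_F^{-1}\le\bigl(\sum_i(\lip^{\nn}_{(i)}/\alpha_{(i)})^2\bigr)^{1/2}$, and then you need Minkowski in $L_{q/2}(\correct)$, which requires $q\ge2$. Your fallback --- ``smaller $q$ follow by monotonicity of $L_q$-norms'' --- runs in the wrong direction: monotonicity gives $\|\lip^{\nn}_{(i)}\|_{L_1(\correct)}\le\|\lip^{\nn}_{(i)}\|_{L_2(\correct)}$, so the $q=2$ result cannot be downgraded to a bound in the strictly smaller $L_1$ norms. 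If you instead use $\ell_2\le\ell_1$ pointwise to avoid Minkowski at $q=1$, the $\alpha$-optimization yields $\bigl(\sum_i(\|\lip^{\nn}_{(i)}\|_{L_1}a_{(i)})^{1/2}\bigr)^{4/3}$, strictly weaker than the claimed $\sum_i(\|\lip^{\nn}_{(i)}\|_{L_1}a_{(i)})^{2/3}$ (compare at equal summands). The paper sidesteps this by setting $p=q/(q-1)$ in Theorem~\ref{thm:generalization}: then the dual norm in Lemma~\ref{lem:m_lb} is exactly $\ell_q$, $\E_{\correct}[m_F^{-q}]\le\sum_i\alpha_{(i)}^{-q}\|\lip^{\nn}_{(i)}\|_{L_q(\correct)}^q$ needs no cross-norm inequality, and the composite complexity in Lemma~\ref{lem:gen_covering} deforms with $p$ so that Lemma~\ref{lem:E_upper_bound} gives the stated exponent for every integer $q\ge1$. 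Adopting $p=q/(q-1)$ from the start closes the gap with no other change to your argument.

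Two small notes: the Frobenius-ball spectral cover (Lemma~\ref{lem:spectral_cover}) is a singular-value truncation argument, not Maurey-style; only the $\|\cdot\|_{1,1}$-ball cover is. And your identification of the hard step --- making the first-order margin expansion rigorous via the curvature terms collected in $\psi_{(i)}$ --- is right; the paper supplies it as Lemma~\ref{lem:m_lb}, proved in Section~\ref{sec:m_F_lb}, and Lemma~\ref{lem:nn_gen_no_ub} consumes it through Theorem~\ref{thm:generalization}.
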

\begin{proof}
		We will identify the class of neural nets with matrix norm bounds $\{a_{(i)}\}_{i = 1}^r$ with a sequence of function families 
		\begin{align}
		\cF_{2i-1} &\triangleq \{h \mapsto Wh : W \in \R^{d\times d}, \min\{\sqrt{d} \|W - A_{(i)}\|_F, \|W - B_{(i)}\|_{1, 1} \}\sqrt{\log d} \le a_{(i)}\} \nonumber\\ \cF_{2i} &\triangleq \{h \mapsto \phi(h)\} \nonumber
		\end{align}
		and let $\cF \triangleq \cF_{2r - 1} \circ \cdots \circ \cF_{1}$ denote all possible parameterizations of neural nets with norm bounds $\{a_{(i)}\}_{i = 1}^r$. Let $\opnorm{\cdot}$ be defined with respect to Euclidean norm $\|\cdot\|_2$ on the input and output spaces, which coincides with matrix operator norm for linear operators. We first claim that 
		\begin{align}
			\log \cover_{\opnorm{\cdot}} (\epsilon, \cF_{2i - 1}) \lesssim \left \lfloor \frac{{a_{(i)}}^2}{\epsilon^2}\right\rfloor \nonumber
		\end{align}
		This is because we can construct two covers: one for $\{h \mapsto Wh : \sqrt{d} \|W\|_F/\sqrt{\log d} \le a_{(i)}\}$, and one for $\{h \mapsto Wh : \|W\|_{1, 1}/\sqrt{\log d} \le a_{(i)}\}$, each of which has log size bounded by $O(\lfloor {a_{(i)}}^2/\epsilon^2\rfloor)$ by Lemma~\ref{lem:spectral_cover} and Claim~\ref{claim:ell1_cover}. Now we offset the first cover by the linear operator $A_{(i)}$ and the second by $B_{(i)}$ and take the union of the two, obtaining an $\epsilon$-cover for $\cF_{2i - 1}$ in operator norm. Furthermore, $\log \cover_{\opnorm{\cdot}} (\epsilon, \cF_{2i}) = 0$ simply because $\cF_{2i}$ is the singleton function. 
		
		Thus, $\cF_{2i - 1}, \cF_{2i}$ satisfy Condition~\ref{cond:eps_sq} with norm $\opnorm{\cdot}$ and complexity functions $\comp_{\opnorm{\cdot}}(\cF_{2i - 1}) \lesssim a_{(i)}$ and  $\comp_{\opnorm{\cdot}}(\cF_{2i})= 0$, so we can apply Theorem~\ref{thm:generalization}. It remains to argue that $\lip^\star_{2i-1}(x, y)$ as defined for Theorem~\ref{thm:generalization} using standard Euclidean norm $\|\cdot\|_2$ is equivalent to $\lip^{\nn}_{(i)}(x, y)$ defined in~\eqref{eq:lip_nn}. To see this, we note that functions in $\cF_{2j - 1}$ have 0-Lipschitz derivative, leading those terms with a coefficient of $\kappa'_{2j - 1}$ to cancel in the definition of $\kappa_i^\star(x, y)$. There is a 1-1 correspondence between the remaining terms of $\lip^\star_{2i-1}(x, y)$ and $\lip^{\nn}_{(i)}(x, y)$, so we can substitute $\lip^{\nn}_{(i)}(x, y)$ into Theorem~\ref{thm:generalization} in place of $\lip^\star_{2i - 1}(x, y)$. Furthermore, as we have $\comp_{\opnorm{\cdot}}(\cF_{2i})= 0$, the corresponding terms disappear in the bound of Theorem~\ref{thm:generalization}, finally giving the desired result.
\\\\	\end{proof}
Now we obtain Theorem~\ref{thm:nn_gen} by union bounding Lemma~\ref{lem:nn_gen_no_ub} over choices of $\{a_{(i)}\}_{i = 1}^r$.
\begin{proof}[Proof of Theorem~\ref{thm:nn_gen}]
	We will use the standard technique of applying Lemma~\ref{lem:nn_gen_no_ub} over many choices of $\{a_{(i)}\}$, and union bounding over the failure probability. Choose $\xi = \poly(n^{-1})$ and consider a grid of $\{\hat{\alpha}_{(i)}\}$ with $\hat{a}_{(i)} = \xi 2^{j_i}$ for $j_i \ge 1$.	We apply Lemma~\ref{lem:nn_gen_no_ub} with for all possible norm bounds $\{\hat{\alpha}_{(i)}\}$ in the grid, using failure probability $\hat{\delta} = \delta/(\prod_i \hat{a}_{(i)}/\xi)$ for a given choice of $\{\hat{\alpha}_{(i)}\}$. By union bound, with probability $1 - \sum \hat{\delta} = 1- \delta$, the bound of Lemma~\ref{lem:nn_gen_no_ub} holds simultaneously for all choices of $\{\hat{\alpha}_{(i)}\}$. In particular, for the neural net $F$ with parameters $\{W_{(i)}\}$, there is a choice of $\{\hat{\alpha}_{(i)}\}$ satisfying 
	{\small
	\begin{align}
	\min\{\sqrt{d} \fronorm{W_{(i)} - A_{(i)}}, \|W_{(i)} - B_{(i)}\|_{1, 1} \} \sqrt{\log d} \nonumber \\ \le \hat{a}_{(i)} \le 	2\min\{\sqrt{d} \fronorm{W_{(i)} - A_{(i)}}, \|W_{(i)} - B_{(i)}\|_{1, 1} \} \sqrt{\log d} + \xi \nonumber
	\end{align}}
	for all $i$. The application of Lemma~\ref{lem:nn_gen_no_ub}~for this choice of $\hat{\alpha}_{(i)}$ gives us the desired generalization bound.
\end{proof}
\subsection{Generalization Bound for Relu Networks}\label{sec:relu_app}
In the case where $\phi$ is the relu activation, we can no longer lower bound the all-layer margin $m_F(x, y)$ using the techniques in Section~\ref{sec:gen_bound_smooth}, which rely on smoothness. However, we can still obtain a generalization bound in terms of the distribution of $1/m_F(x, y)$ on the training data. We can expect $1/m_F(x, y)$ to be small in practice because relu networks typically exhibit stability to perturbations. Prior bounds for relu nets suffer from some source of looseness: the bounds of~\citep{bartlett2017spectrally,neyshabur2017pac} depended on the product of weight norms divided by margin, and the bounds of~\citep{nagarajan2019deterministic} depended on the inverse of the pre-activations, observed to be large in practice. Our bound avoids these dependencies, and in fact, it is possible to upper bound our dependency on $1/m_F(x, y)$ in terms of both these quantities.

For this setting, we choose a fixed $\gnorm{\cdot}$ defined as follows: if $i$ corresponds to a linear layer in the network, set $\alpha_i = 1$, and for $i$ corresponding to activation layers, set $\alpha_i = \infty$ (in other words, we only allow perturbations after linear layers). We remark that we could use alternative definitions of $\gnorm{\cdot}$, but because we do not have a closed-form lower bound on $m_F$, the tradeoff between these formulations is unclear.

\begin{theorem}\label{thm:relu_gen}
	In the neural network setting, suppose that $\phi$ is any activation (such as the relu function) and $m_F$ is defined using $\gnorm{\cdot}$ as described above. Fix any integer $q > 0$. Then with probability $1 - \delta$, for all relu networks $F$ parameterized by weight matrices $\{W_{(i)}\}_{i = 1}^r$ that achieve training error 0, we have
	\begin{align}
		\E_{P}[\ellzo \circ F] \le O\left(\log^2 n \left(\frac{\left\|\frac{1}{m_F}\right\|_{L_q(P_n)} \left(\sum_i a_{(i)}\right)}{\sqrt{n}}\right)^{2q/(q + 2)} \right) + \zeta \nonumber
	\end{align}
	where $a_{(i)}$ is defined as in Theorem~\ref{thm:nn_gen}, and $\zeta \triangleq O\left(\frac{\log(1/\delta)+ r \log n + \sum_i (a_{(i)} + 1)}{n} \right)$ is a low-order term.
\end{theorem}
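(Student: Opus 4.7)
The plan is to apply Theorem~\ref{thm:compl_m_gen} to a function class decomposition and then union bound over weight norm levels, essentially mirroring the proof of Theorem~\ref{thm:nn_gen} in Lemma~\ref{lem:nn_gen_no_ub} but stopping before we lower bound $m_F$. First I would identify the neural network as a composition of $k = 2r-1$ layers alternating between linear maps $\cF_{2i-1}$ (matrix multiplications obeying the stated Frobenius/$\ell_{1,1}$ bounds around $A_{(i)}, B_{(i)}$) and singleton activation layers $\cF_{2i} = \{\phi\}$. The choice $\alpha_i = 1$ on linear indices and $\alpha_i = \infty$ on activation indices in $\gnorm{\cdot}$ forces perturbations to be zero on activation layers and so realizes the $m_F$ described in the theorem.

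Next I would verify that each factor satisfies Condition~\ref{cond:eps_sq} with the right complexity. Exactly as in the proof of Lemma~\ref{lem:nn_gen_no_ub}, covering the Frobenius ball and the $\ell_{1,1}$ ball separately and translating by $A_{(i)}$ and $B_{(i)}$ gives $\log \cover_{\opnorm{\cdot}}(\epsilon, \cF_{2i-1}) \lesssim \lfloor a_{(i)}^2/\epsilon^2 \rfloor$, while $\log \cover_{\opnorm{\cdot}}(\epsilon, \cF_{2i}) = 0$. Plugging these into Lemma~\ref{lem:gen_covering} with $p = 2$ gives, using the identities $2p/(p+2) = 1$ and $(p+2)/(2p) = 1$,
\begin{align}
\comp_{\gnorm{\cdot}}(\cF) = \sum_{i=1}^{r} a_{(i)}, \nonumber
\end{align}
where the singleton activation factors simply drop out (formally, the $\alpha_i = \infty$ and $\comp_{\opnorm{\cdot}}(\cF_{2i}) = 0$ are handled by the $0 \cdot \infty = 0$ convention, or equivalently by redefining the composition with $r$ effective layers and noting that an $\epsilon$-cover of each $\cF_{2i-1}$ yields an $\epsilon$-cover of the whole composition by Claim~\ref{claim:m_lip}). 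Applying Theorem~\ref{thm:compl_m_gen} immediately gives the desired bound for the subclass of networks with weights satisfying the fixed norm bounds $\{a_{(i)}\}$.

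Finally, I would perform the standard dyadic union bound over $\{a_{(i)}\}$ to remove the a priori norm constraints, exactly as in the proof of Theorem~\ref{thm:nn_gen}: choose a grid $\hat a_{(i)} = \xi 2^{j_i}$ with $\xi = \poly(n^{-1})$, apply the fixed-norm bound with failure probability $\hat\delta = \delta / \prod_i (\hat a_{(i)}/\xi)$, and observe that for any realized network there is a grid point within a factor of two. The resulting $\log(\hat a_{(i)}/\xi)$ overhead per layer contributes $O(r \log n + \sum_i \log(a_{(i)} + 1))/n)$ which is absorbed into $\zeta$.

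The only subtle point, and the one I would scrutinize, is the treatment of the singleton activation layers in $\gnorm{\cdot}$: since the theorem's $m_F$ formally only permits perturbations at linear layers, I must either adopt the $0 \cdot \infty = 0$ convention in Lemma~\ref{lem:gen_covering}, or equivalently reprove Claim~\ref{claim:m_lip} for a $\gnorm{\cdot}$ summed over linear layers only, noting that any candidate $\hat F$ differing from $F$ at an activation layer can be ignored since we only cover over the linear factors. Either path is routine, but it is the only place where the argument deviates from a direct quotation of the smooth-activation proof.
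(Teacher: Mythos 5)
Your proposal is correct and follows essentially the same route as the paper: the paper's proof of Theorem~\ref{thm:relu_gen} is precisely a direct application of Theorem~\ref{thm:compl_m_gen}, using the matrix covering arguments from Lemma~\ref{lem:nn_gen_no_ub} to bound $\comp_{\opnorm{\cdot}}(\cF_{2i-1})\lesssim a_{(i)}$ while the singleton activation layers contribute zero complexity, followed by the same dyadic union bound over $\{a_{(i)}\}$. The subtlety you flag about $\alpha_i=\infty$ on activation layers is indeed the one place to be careful, but it is handled exactly as you suggest by the $\comp_{\opnorm{\cdot}}(\cF_{2i})=0$ observation.
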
 
The proof follows via direct application of Theorem~\ref{thm:compl_m_gen} and the same arguments as Lemma~\ref{lem:nn_gen_no_ub} relating matrix norms to covering numbers. 
We remark that in the case of relu networks, we can upper bound $\frac{1}{m_F(x, y)}$ via a quantity depending on the inverse pre-activations that mirrors the bound of~\cite{nagarajan2019deterministic}. However, as mentioned earlier, this is a pessimistic upper bound as~\citet{nagarajan2019deterministic} show that the inverse preactivations can be quite large in practice. \subsection{Matrix Covering Lemmas}
In this section we present our spectral norm cover for the weight matrices, which is used in Section~\ref{sec:nn_app} to prove our neural net generalization bounds. 
\begin{lemma}\label{lem:spectral_cover}
	Let $\cM_\textup{fro}(B)$ denote the set of $d_1 \times d_2$ matrices with Frobenius norm bounded by $B$, i.e. 
	\begin{align}
		\cM_{\textup{fro}}(B) \triangleq \{M \in \R^{d_1 \times d_2}: \fronorm{M} \le B\} \nonumber
	\end{align}
	Then letting $d \triangleq \max\{d_1, d_2\}$ denote the larger dimension, for all $\epsilon > 0$, we have
	\begin{align}
	\log \cover_{\opnorm{\cdot}}(\epsilon, \cM_{\textup{fro}}(B) ) \le \left\lfloor \frac{36 d B^2 \log(9d)}{\epsilon^2}\right \rfloor  \nonumber
	\end{align}
\end{lemma}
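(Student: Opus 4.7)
The plan is to combine rank-$k$ truncation of $M$ with an explicit parametric cover of low-rank matrices. Given $M \in \cM_{\textup{fro}}(B)$ with singular value decomposition $M = \sum_{i \ge 1}\sigma_i u_i v_i^T$ and $\sum_i \sigma_i^2 \le B^2$, the rank-$k$ truncation $M_k = \sum_{i \le k}\sigma_i u_i v_i^T$ satisfies $\opnorm{M - M_k} = \sigma_{k+1} \le B/\sqrt{k+1}$, by the Eckart--Young theorem together with the pigeonhole bound $\sigma_{k+1}^2 \le (\sum_{i\le k+1}\sigma_i^2)/(k+1) \le B^2/(k+1)$. Choosing $k = \lceil 9B^2/\epsilon^2\rceil$ thus makes the truncation error at most $\epsilon/3$.

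Next I would cover the set of rank-$k$ matrices $U\Sigma V^T$ where $U \in \R^{d_1\times k}$ and $V \in \R^{d_2\times k}$ have orthonormal columns and $\Sigma$ is diagonal with $\|\Sigma\|_F \le B$. Covering each column of $U$ on $S^{d_1 - 1}$ in $\ell_2$ at resolution $\epsilon'$ gives at most $(3/\epsilon')^{d_1}$ choices per column, and hence $(3/\epsilon')^{kd_1}$ for all of $U$, and analogously $(3/\epsilon')^{kd_2}$ for $V$. Covering the singular value vector in $B_2^k(B)$ at $\ell_2$-resolution $\epsilon''$ gives $(3B/\epsilon'')^k$ choices. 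Using the telescoping identity
\[
\hat U \hat\Sigma \hat V^T - U\Sigma V^T = (\hat U - U)\hat\Sigma \hat V^T + U(\hat\Sigma - \Sigma)\hat V^T + U\Sigma(\hat V - V)^T
\]
together with $\opnorm{\cdot} \le \fronorm{\cdot}$ and $\opnorm{\hat U - U} \le \sqrt{k}\epsilon'$ (and similarly for $V$ and $\Sigma$), the operator-norm error of the low-rank cover is bounded by $O(\sqrt{k}(B\epsilon' + \epsilon''))$. Setting $\epsilon' = \Theta(\epsilon/(B\sqrt{k}))$ and $\epsilon'' = \Theta(\epsilon/\sqrt{k})$ keeps this below $\epsilon/3$, so combined with the truncation error we obtain an $\epsilon$-cover of $\cM_{\textup{fro}}(B)$ in operator norm by the triangle inequality.

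Taking logarithms, the cover size satisfies $\log N \lesssim k(d_1 + d_2)\log(3/\epsilon') + k\log(3B/\epsilon'') \lesssim kd\log(B\sqrt{k}/\epsilon)$, which on substituting $k = \Theta(B^2/\epsilon^2)$ yields $\log N \lesssim dB^2\log(B/\epsilon)/\epsilon^2$. To match the stated form with $\log(9d)$ in place of $\log(B/\epsilon)$, I would split into two regimes: when $B/\epsilon \le 9d$ we directly have $\log(B/\epsilon) \le \log(9d)$; when $B/\epsilon > 9d$ the cover resolution is so fine that the trivial Frobenius volumetric cover of size $(3B/\epsilon)^{d_1 d_2}$ already has log-size dominated by $36dB^2\log(9d)/\epsilon^2$, and can be used instead. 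The main technical obstacle is the careful bookkeeping of constants through the truncation and parameterization steps to extract the precise coefficient $36$ and the clean $\log(9d)$ factor rather than a $\log(B/\epsilon)$ factor, but the underlying strategy is routine.
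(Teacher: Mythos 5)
Correct, and this is essentially the paper's proof: both truncate $M$ to its top $\approx 9B^2/\epsilon^2$ singular values using the pigeonhole bound $\sigma_{k+1}^2 \le B^2/(k+1)$, cover the factors of the resulting low-rank decomposition separately and combine via a telescoping/triangle-inequality argument, and fall back to a direct Frobenius volumetric cover in the regime where $B^2/\epsilon^2$ is large relative to $d$. The only cosmetic differences are that the paper covers the two factors $U$ and $SV$ as Frobenius balls (via Claim~\ref{claim:fro_cover}) rather than your columnwise sphere covers plus a separate cover of $\Sigma$, and its case split compares $\lfloor 9B^2/\epsilon^2 \rfloor$ against $d$ rather than $B/\epsilon$ against $9d$.
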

\begin{proof}
	The idea for this proof is that since the cover is in spectral norm, we only need to cover the top $d' \triangleq \lfloor B^2/\epsilon^2 \rfloor$ singular vectors of matrices $M \in \cM$. 
	
	First, it suffices to work with square matrices, as a spectral norm cover of $\max\{d_1, d_2\} \times \max \{d_1, d_2\}$ matrices will also yield a cover of $d_1 \times d_2$ matrices in spectral norm (as we can extend a $d_1 \times d_2$ matrices to a larger square matrix by adding rows or columns with all 0). Thus, letting $d \triangleq \max\{d_1, d_2\}$, we will cover $\cM_{\textup{fro}}(B)$ defined with respect to $d \times d$ matrices. 
	
	Let $d' \triangleq \lfloor 9B^2/\epsilon^2 \rfloor$. We first work in the case when $d' \le d$. Let $\hat{\cU}$ be a $\epsilon_U$ Frobenius norm cover of $d \times d'$ matrices with Frobenius norm bound $d'$. Let $\hat{\cV}$ be the cover of $d' \times d$ matrices with Frobenius norm bound $B$ in Frobenius norm with resolution $\epsilon_V$. 	
	We construct a cover $\hat{\cM}$ for $\cM_{\textup{fro}}(B)$ as follows: take all possible combinations of matrices $\hat{U}, \hat{V}$ from $\hat{\cU}, \hat{\cV}$, and add $\hat{U}\hat{V}$ to $\hat{\cM}$. First note that by Claim~\ref{claim:fro_cover}, we have 
	\begin{align}
		\log | \hat{\cM}| \le d d' (\log(3d'/\epsilon_U) + \log(3B/\epsilon_V)) \nonumber
	\end{align}
	Now we analyze the cover resolution of $\hat{\cM}$: for $M \in \cM$, first let $\textup{trunc}_{d'}(M)$ be the truncation of $M$ to its $d'$ largest singular values. Note that as $M$ has at most $d'$ singular values with absolute value greater than $\epsilon/3$, $\opnorm{M- \textup{trunc}_{d'}(M)} \le \epsilon/3$. Furthermore, let $U S V = \textup{trunc}_{d'}(M)$ be the SVD decomposition of this truncation, where $U \in \R^{d \times d'}, \fronorm{U} \le d'$ and $SV \in \R^{d' \times d}, \fronorm{SV} \le B$. Let $\hat{U} \in \hat{\cU}$ satisfy $\fronorm{\hat{U}- U} \le \epsilon_U$, and $\hat{V} \in \hat{\cV}$ satisfy $\fronorm{\hat{V} - SV} \le \epsilon_V$. Let $\hat{M} = \hat{U} \hat{V}$. Then we obtain 
	\begin{align}
		\opnorm{M- \hat{M}} &\le \opnorm{M - \textup{trunc}_{d'}(M)} + \opnorm{\textup{trunc}_{d'}(M) - \hat{M}} \nonumber\\
		&\le \epsilon/3 + \opnorm{USV- \hat{U} SV} + \opnorm{\hat{U}SV - \hat{U}\hat{V}} \nonumber\\
		&\le \epsilon + \epsilon_U B + \epsilon_V d' \nonumber
	\end{align}
	Thus, setting $\epsilon_U = \epsilon/3B$, $\epsilon_V = \epsilon/3d'$, then we get a $\epsilon$-cover of $\cM$ with log cover size $\lfloor 9d B^2/\epsilon^2 \rfloor (\log 81 {d'}^2 B^2/\epsilon^2)$. As $d' \le d$, this simplifies to $ \lfloor 36 dB^2 \log(9d)/\epsilon^2 \rfloor$. 
	
	Now when $d' \ge d$, we simply take a Frobenius norm cover of $d \times d$ matrices with Frobenius norm bound $B$, which by Claim~\ref{claim:fro_cover} has log size at most $d^2 \log(3 B/\epsilon) \le \lfloor 36d B^2 \log(9d)/\epsilon^2 \rfloor $, where the inequality followed because $9B^2/\epsilon^2 \ge d$. 
	
	Combining both cases, we get for all $\epsilon > 0$,
	\begin{align}
		\log \cover_{\opnorm{\cdot}}(\epsilon, \cM_{\textup{fro}}(B) ) \le \left\lfloor \frac{36d B^2 \log(9d)}{\epsilon^2}\right \rfloor  \nonumber
	\end{align}
\end{proof}

The following claims are straightforward and follow from standard covering number bounds for $\|\cdot\|_2$ and $\|\cdot\|_1$ balls. 
\begin{claim}\label{claim:fro_cover}
	Let $\cM_\textup{fro}(B)$ denote the class of $d_1 \times d_2$ matrices with Frobenius norm bounded by $B$. Then for $0 < \epsilon < B$, $\log \cover_{\fronorm{\cdot}}(\epsilon, \cM_{\textup{fro}}(B)) \le d_1 d_2 \log(3B/\epsilon)$. 
\end{claim}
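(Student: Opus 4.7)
The claim is a standard covering-number bound for a Euclidean ball, and my plan is to reduce it to exactly that. First I would observe that the Frobenius norm makes the space of $d_1\times d_2$ matrices isometric (as a normed space) to $(\R^{d_1 d_2}, \|\cdot\|_2)$ via the usual vectorization map. Under this identification, $\cM_{\textup{fro}}(B)$ is exactly the closed Euclidean ball of radius $B$ in dimension $D \triangleq d_1 d_2$, and $\fronorm{\cdot}$-covers of $\cM_{\textup{fro}}(B)$ correspond bijectively to $\|\cdot\|_2$-covers of this ball. So it suffices to prove that a Euclidean ball of radius $B$ in $\R^D$ admits an $\epsilon$-cover of size at most $(3B/\epsilon)^D$.

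Next I would apply the standard volume argument. Let $\cN \subseteq B(0,B)$ be a maximal $\epsilon$-separated set in $\|\cdot\|_2$. Since the balls $\{B(z,\epsilon/2) : z \in \cN\}$ are pairwise disjoint and all contained in $B(0, B + \epsilon/2)$, volume comparison gives
\begin{equation}
|\cN|\,(\epsilon/2)^D \,\mathrm{vol}(B(0,1)) \;\le\; (B + \epsilon/2)^D \,\mathrm{vol}(B(0,1)),
\end{equation}
so $|\cN| \le (2B/\epsilon + 1)^D \le (3B/\epsilon)^D$ whenever $\epsilon \le B$. A maximal $\epsilon$-separated set is automatically an $\epsilon$-cover (else one could extend it), so $\cover_{\|\cdot\|_2}(\epsilon, B(0,B)) \le (3B/\epsilon)^D$.

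Taking logarithms and translating back to matrices yields $\log \cover_{\fronorm{\cdot}}(\epsilon, \cM_{\textup{fro}}(B)) \le d_1 d_2 \log(3B/\epsilon)$, as required. There is no real obstacle here: the only mild subtlety is ensuring that the $+1$ from the volume argument gets absorbed into the constant $3$, which is exactly what the assumption $\epsilon < B$ is used for (it guarantees $2B/\epsilon + 1 \le 3B/\epsilon$). Everything else is routine.
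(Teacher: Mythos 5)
Your proof is correct, and it is exactly the standard volume-comparison argument for Euclidean balls that the paper invokes (the paper omits the proof of this claim, noting only that it ``follows from standard covering number bounds for $\|\cdot\|_2$ balls''). The isometry to $\R^{d_1 d_2}$, the maximal $\epsilon$-separated set, the volume bound $|\cN| \le (2B/\epsilon + 1)^{d_1 d_2}$, and the use of $\epsilon < B$ to absorb the $+1$ into the constant $3$ are all in order.
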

\begin{claim}\label{claim:ell1_cover}
	Let $\cM_{\| \cdot\|_{1, 1}}(B)$ denote the class of $d_1 \times d_2$ matrices with the $\ell_1$ norm of its entries bounded by $B$. Then $\log \cover_{\fronorm{\cdot}}(\epsilon, \cM_{\| \cdot\|_{1, 1}}(B)) \le 5\lfloor B^2/\epsilon^2 \rfloor \log 10d$. 
\end{claim}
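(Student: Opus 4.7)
The plan is to prove Claim~\ref{claim:ell1_cover} via the standard Maurey empirical method (also sometimes called the Maurey--Jones--Barron sparsification argument). The key observation is that the $\|\cdot\|_{1,1}$-ball of radius $B$ in $\R^{d_1 \times d_2}$ equals the convex hull of the finite set $\cS \triangleq \{\pm B\, e_i e_j^\top : i \in [d_1],\, j \in [d_2]\} \cup \{0\}$, every element of which has Frobenius norm at most $B$.

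First I would take an arbitrary $M \in \cM_{\|\cdot\|_{1,1}}(B)$ and write it as a convex combination $M = \sum_{s \in \cS} p_s\, s$, where $p_{\pm B\, e_i e_j^\top} = |M_{ij}|/B$ and the remaining mass is placed on the zero matrix. Next I would sample $k$ matrices $s_1, \ldots, s_k$ i.i.d.\ from this distribution and set $\hat M \triangleq \frac{1}{k}\sum_{l=1}^k s_l$; the Maurey bound then gives $\mathbb{E}\|\hat M - M\|_{\text{fro}}^2 = \frac{1}{k}\bigl(\mathbb{E}\|s_1\|_{\text{fro}}^2 - \|M\|_{\text{fro}}^2\bigr) \le B^2/k$ (a one-line variance computation since the $s_l$ are i.i.d.). Hence some realization $\hat M$ satisfies $\|\hat M - M\|_{\text{fro}} \le B/\sqrt{k}$, and choosing $k = \lceil B^2/\epsilon^2 \rceil$ yields approximation error at most $\epsilon$.

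Then I would take as my cover $\hat\cM$ the set of all matrices of the form $\frac{1}{k}\sum_{l=1}^k s_l$ with $s_l \in \cS$. The cardinality satisfies $|\hat\cM| \le |\cS|^k \le (2d_1 d_2 + 1)^k \le (3d^2)^k$, so $\log|\hat\cM| \le k \log(3d^2) \le 2k \log(10d)$. The last step is to verify the numerical bound: when $B \le \epsilon$ the singleton cover $\{0\}$ works (since $\|M\|_{\text{fro}} \le \|M\|_{1,1} \le B$), and when $B > \epsilon$ we have $\lceil B^2/\epsilon^2\rceil \le 2\lfloor B^2/\epsilon^2\rfloor + 1 \le \tfrac{5}{2}\lfloor B^2/\epsilon^2\rfloor$, which combined with the factor of $2$ above gives the claimed $5 \lfloor B^2/\epsilon^2\rfloor \log(10d)$.

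There is no substantive obstacle here: the only slightly delicate point is the constant-juggling in the last paragraph to accommodate the floor function at small $B/\epsilon$ and the ratio between $\lceil\cdot\rceil$ and $\lfloor\cdot\rfloor$; once one handles the trivial case $B \le \epsilon$ separately, the bound follows immediately from Maurey's lemma and a crude counting argument.
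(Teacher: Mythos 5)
Your proof is correct in substance and is exactly the ``standard covering number bound for $\ell_1$ balls'' that the paper invokes without proof -- Maurey's empirical/sparsification method applied to the vertices $\pm B e_i e_j^\top$ of the $\|\cdot\|_{1,1}$ ball. One small arithmetic slip in the last paragraph: the chain $\lceil B^2/\epsilon^2\rceil \le 2\lfloor B^2/\epsilon^2\rfloor + 1 \le \tfrac{5}{2}\lfloor B^2/\epsilon^2\rfloor$ fails when $\lfloor B^2/\epsilon^2\rfloor = 1$, since the last step needs $\lfloor B^2/\epsilon^2\rfloor \ge 2$. Replace it with the direct bound $\lceil x\rceil \le \lfloor x\rfloor + 1 \le 2\lfloor x\rfloor$ valid for all $x \ge 1$; combined with your $\log|\hat\cM| \le 2k\log(10d)$, this gives $\log\cover_{\fronorm{\cdot}}(\epsilon,\cM_{\|\cdot\|_{1,1}}(B)) \le 4\lfloor B^2/\epsilon^2\rfloor\log(10d) \le 5\lfloor B^2/\epsilon^2\rfloor\log(10d)$, with room to spare.
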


\section{Generalization Bound for Smooth Function Compositions}\label{sec:gen_bound_smooth}
In this section, we present the bound for general smooth function compositions used to prove Theorem~\ref{thm:nn_gen}. 

We will work in the same general setting as Section~\ref{sec:all_layer_marg_app}. Let $J_{j \ot i}(x, \delta)$ denote the $i$-to-$j$ Jacobian evaluated at $f_{i - 1 \ot 1}(x, \delta)$, i.e. $J_{j \ot i}(x, \delta) \triangleq D_h f_{j \ot i}(h, \delta) |_{h = f_{i - 1 \ot 1}(x, \delta)}$. We will additionally define general notation for hidden layer and Jacobian norms which coincides with our notation for neural nets. Let $\size_i(x) \triangleq \|f_{i \ot 1}(x)\|$ and $\size_0(x) \triangleq \|x\|$. As the function $Df_{j \ot i}$ outputs operators mapping $\cD_{i - 1}$ to $\cD_j$, we can additionally define $\lip_{j \ot i}(x) \triangleq \opnorm{Df_{j\ot i} \circ f_{i - 1 \ot 1}(x)}$, with $\lip_{j \ot j + 1}(x) \triangleq 1$.

 Let $\smooth_i$ be an upper bound on the Lipschitz constant of $Df_{i \ot i}$ measured in operator norm: 
\begin{align}
\opnorm{Df_{i \ot i}(h) - Df_{i \ot i}(h + \nu)} \le \smooth_i\|\nu\| \nonumber
\end{align}
Now we define the value $\lip^\star_i(x, y)$, which can be thought of as a Lipschitz constant for perturbation $\delta_i$ in the definition of $m_F$, as follows: 
\begin{align}\label{eq:kappa_star_def}
\begin{split}
\lip^\star_i(x, y) &\triangleq \size_{i - 1}(x) \left(\frac{8\lip_{k \ot i + 1}(x) }{\gamma(F(x), y)} + \sum_{j = i}^{k - 1}\frac{8\lip_{j \ot i + 1}(x)}{\size_j(x)}\right) \\& +  \size_{i - 1}(x) \left(\sum_{1 \le j_2 \le j_1 \le k} \sum_{j' = \max\{i + 1, j_2\}}^{j_1} 16\frac{\smooth_{j'}\lip_{j' - 1 \ot i + 1}(x) \lip_{j_1 \ot j' + 1}(x) \lip_{j' - 1 \ot j_2}(x)}{\lip_{j_1 \ot j_2}(x)}\right) \\
&+ 8\sum_{j_2 \le i \le j_1} \frac{\lip_{j_1 \ot i + 1}(x) \lip_{i - 1 \ot j_2}(x)}{\lip_{j_1 \ot j_2}(x)} 
\end{split}
\end{align}
For this general setting, the following theorem implies that for any integer $q > 0$, if $F$ classifies all training examples correctly, then its error converges at a rate that scales with $n^{-q/(q + 2)}$ and the products $\|\lip^\star_i\|_{L_q(P_n)} \comp_{\opnorm{\cdot}}(\cF_i)$. 
\begin{theorem} \label{thm:generalization}
	Let $\cF = \{f_k \circ \cdots \circ f_1 : f_i \in \cF_i\}$ denote a class of compositions of functions from $k$ families $\{\cF_i\}_{i = 1}^k$, each of which satisfies Condition~\ref{cond:eps_sq} with operator norm $\opnorm{\cdot}$ and complexity $\comp_{\opnorm{\cdot}}(\cF_i)$. For any choice of integer $q> 0$, with probability $1 - \delta$ for all $F \in \cF$ the following bound holds: 
	\begin{align}
	&\E_{P}[\ellzo(F(x), y)]  \nonumber\\&\le \frac{3}{2}\left(\E_{P_n}[\ellzo(F(x), y)] \right) + O\left(\frac{k \log n + \log (1/\delta)}{n}\right) \nonumber\\
	& + \left(1 - \E_{P_n}[\ellzo(F(x), y)] \right)^{\frac{2}{q + 2}} O\left(q \left(\frac{\log^2 n}{n}\right)^{q/(q + 2)} \left(\sum_i \|\lip^\star_i\|_{L_q(\correct)}^{2/3} \comp_{\opnorm{\cdot}}(\cF_i)^{2/3}\right)^{\frac{3q}{q + 2}} \right) \nonumber
	\end{align}
	where $\correct$ denotes the subset of training examples correctly classified by $F$ and $\lip^\star_i$ is defined in~\eqref{eq:kappa_star_def}. In particular, if $F$ classifies all training samples correctly, i.e. $|\correct| = n$, with probability $1 - \delta$ we have
	\begin{align}
	\E_{P}[\ellzo(F(x), y)] &\lesssim q \left(\frac{\log^2 n}{n}\right)^{q/(q + 2)} \left(\sum_i \|\lip^\star_i\|_{L_q(\correct)}^{2/3} \comp_{\opnorm{\cdot}}(\cF_i)^{2/3}\right)^{3q/(q + 2)}  + \zeta \nonumber
	\end{align}
	where $\zeta \triangleq  O\left(\frac{k \log n + \log (1/\delta)}{n}\right)$ is a low order term. 
\end{theorem}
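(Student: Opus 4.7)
The plan is to combine the proof strategy behind Theorem~\ref{thm:compl_m_gen} (specifically Claim~\ref{claim:ell_beta_gen_cor} applied to the class $\pmarg \circ \cF$) with an analytic lower bound on $\pmarg_F(x, y)$ in terms of the quantities $\lip^\star_i(x, y)$, and then to optimize over the weights $\alpha_i$ and the exponent $p$ defining the norm $\gnorm{\cdot}$. The centerpiece is the pointwise bound
\[
\pmarg_F(x, y) \ \ge\ \frac{1}{\bigl\|(\lip^\star_1(x, y)/\alpha_1,\ldots,\lip^\star_k(x, y)/\alpha_k)\bigr\|_{p^*}}
\]
for every correctly classified $(x, y)$, where $p^*$ is the dual of $p$. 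To prove this I would show that whenever $\gnorm{\delta}$ is strictly smaller than the right-hand side, the perturbed margin $\gamma(F(x, \delta), y)$ stays positive, via a second-order expansion of $F(x, \delta)$ around $\delta = 0$. The first-order contribution of $\delta_i$ has operator norm at most $\size_{i-1}(x)\lip_{k \ot i+1}(x)$, which after normalization by $\gamma(F(x), y)$ produces the leading term of $\lip^\star_i$. The remaining pieces of~\eqref{eq:kappa_star_def} arise from controlling the second-order residual: expanding $f_{j_1 \ot j_2}(h + \nu) - f_{j_1 \ot j_2}(h)$ with respect to the Lipschitzness of the Jacobians (controlled by $\smooth_{j'}$) produces the triple sum indexed by $(j_1, j_2, j')$, while tracking how a perturbation at layer $i$ changes the scalings $\|f_{j-1 \ot 1}(x, \delta)\|$ of subsequent perturbations generates the intermediate hidden-layer terms $\size_{i-1}(x)\lip_{j \ot i+1}(x)/\size_j(x)$ and the bilinear $\lip$-ratios $\lip_{j_1 \ot i+1}(x)\lip_{i-1 \ot j_2}(x)/\lip_{j_1 \ot j_2}(x)$.

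With the lower bound in hand, the generalization argument follows the blueprint of Lemma~\ref{lem:general_margin}. I would apply Claim~\ref{claim:ell_beta_gen_cor} to $\cF$ with the smooth surrogate $\ell_\beta$ from Claim~\ref{claim:ell_beta}, and split $\E_{P_n}[\ell_\beta(\pmarg_F)]$ into the misclassified fraction (which equals $\E_{P_n}[\ellzo \circ F]$ exactly because $\ell_\beta(0) = 1$) and the correctly classified fraction (bounded by $(|\correct|/n) q^{q/2} \beta^{-q/2} \|1/\pmarg_F\|_{L_q(\correct)}^q$ via property 3 of Claim~\ref{claim:ell_beta}), then union-bound over a geometric grid of $\beta$. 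Optimizing in $\beta$ leaves the weight $|\correct|/n$ surviving to the power $2/(q+2)$, producing the $(1 - \E_{P_n}[\ellzo \circ F])^{2/(q+2)}$ prefactor in the statement. Finally I would choose $p = 2$, so that $\comp_{\gnorm{\cdot}}(\cF) = \sum_i \alpha_i \comp_{\opnorm{\cdot}}(\cF_i)$ by~\eqref{eq:gen_covering:1} and, by Minkowski's inequality at exponent $q/2 \ge 1$, $\|1/\pmarg_F\|_{L_q(P_n)} \le \bigl(\sum_i \|\lip^\star_i\|_{L_q(\correct)}^2/\alpha_i^2\bigr)^{1/2}$. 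The product of these two expressions has a cubic stationarity condition in $\alpha_i$, solved by $\alpha_i \propto (\|\lip^\star_i\|_{L_q(\correct)}/\comp_{\opnorm{\cdot}}(\cF_i))^{1/3}$, which attains the minimum $\bigl(\sum_i \|\lip^\star_i\|_{L_q(\correct)}^{2/3} \comp_{\opnorm{\cdot}}(\cF_i)^{2/3}\bigr)^{3/2}$ and therefore the characteristic $2/3$ exponents and total exponent $3q/(q+2)$ in the stated bound.

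The main obstacle is producing the lower bound on $\pmarg_F$ with the precise polynomial form of~\eqref{eq:kappa_star_def}. The first-order piece is immediate from the heuristic derivation in Section~\ref{sec:neural_net_main}, but capturing the second-order residual requires a careful recursive expansion of the perturbed forward pass $f_{j \ot 1}(x, \delta)$ layer-by-layer and disciplined index bookkeeping so that each cross-interaction between $\delta_i$ and $\delta_{i'}$ is charged to exactly one triple $(j_1, j_2, j')$. The ratios $\lip_{j_1 \ot j' + 1}(x)\lip_{j' - 1 \ot j_2}(x)/\lip_{j_1 \ot j_2}(x)$ are precisely the factors produced when a second-order effect localized to layer $j'$ is propagated backward to layer $j_2$ and forward to layer $j_1$ and then renormalized by the direct Jacobian $\lip_{j_1 \ot j_2}(x)$; getting this accounting right under the arbitrary-normed-space setting of Section~\ref{sec:all_layer_marg_app} is the delicate work. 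Once the lower bound is established, the remainder — smooth surrogate loss, $\beta$-grid union bound, Minkowski, and cubic optimization — is routine given the framework already built up in Section~\ref{sec:all_layer_marg_app}.
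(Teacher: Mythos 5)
Your high-level architecture matches the paper's: lower bound $\pmarg_F$ by $\lip^\star_i$, feed this into Claim~\ref{claim:ell_beta_gen_cor} with $\ell_\beta$, split the empirical loss over correctly/incorrectly classified points, and optimize over the $\alpha_i$ to get the $\bigl(\sum_i (\|\lip^\star_i\|_{L_q}\comp_{\opnorm{\cdot}}(\cF_i))^{2/3}\bigr)^{3q/(q+2)}$ form. Where you deviate is the choice of $p$: you take $p=2$ and compensate with Minkowski, while the paper takes $p=q/(q-1)$ so that $\|\cdot\|_{p/(p-1)}^q$ is exactly $\sum_i(\lip^\star_i/\alpha_i)^q$ with no interchange of sum and norm needed. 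Your route reaches the same optimum but only works when $q\ge 2$ (you flag this yourself with ``exponent $q/2\ge 1$''), whereas the statement and the paper's proof cover all integers $q>0$; the paper's choice of $p$ is what makes the proof uniform in $q$. (Also a small slip: the stationary point for $\min_\alpha \bigl(\sum_i b_i^2/\alpha_i^2\bigr)^{1/2}\sum_i\alpha_ic_i$ is $\alpha_i\propto(b_i^2/c_i)^{1/3}$, not $(b_i/c_i)^{1/3}$, though your stated minimum value is correct.)

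There are two genuine gaps. First, you union bound only over a geometric grid of $\beta$, but never over $\alpha$. The $\alpha_i$ that solve your cubic stationarity condition depend on $\|\lip^\star_i\|_{L_q(\correct)}$, which depends on $F$ and the training sample; Claim~\ref{claim:ell_beta_gen_cor} is stated for a fixed norm $\gnorm{\cdot}$, i.e.\ a fixed $\alpha$, so you cannot simply ``choose $\alpha$'' post hoc. The paper handles this by fixing $\beta=1$ and instead union bounding over a $k$-dimensional grid of $\hat\alpha$ (with failure probabilities $\hat\delta=\delta/\prod_i(2\hat\alpha_i/\xi_i)$); this is precisely what produces the $O(k\log n/n)$ term in the statement. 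Without an $\alpha$-grid your bound is not valid for a data-dependent choice of $\alpha$, and a $\beta$-grid alone cannot supply the $k-1$ degrees of freedom in the proportions of $\alpha$.

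Second, you treat Lemma~\ref{lem:m_lb} as something that ``would'' follow from a careful second-order Taylor expansion, but this is the technically heaviest piece of the whole theorem and your sketch does not establish it. The paper's route is substantially different: it introduces the augmented indicator $\cI(\delta;x,y)$ (equation~\eqref{eq:aug_loss}) which multiplies the ramp loss by soft indicators that the hidden-layer norms and interlayer Jacobians stay in bounded ranges, then proves that $\cI$ is $\lip^\star_i$-Lipschitz in $\delta_i$ via the product-Lipschitz machinery of Lemmas~\ref{lem:prod_lip}--\ref{lem:jac_prod_lip}. The indicators are exactly what licenses replacing unknown perturbed quantities by the fixed parameters $t_j,\tau_{j'\ot j''},\rho$; a bare Taylor expansion does not obviously give you this control because the second-order residual at layer $j'$ depends on the perturbed Jacobians and hidden norms, not the unperturbed ones. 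So you would either have to recreate the indicator device or prove a quantitatively equivalent smoothness estimate; the bookkeeping you correctly identify as ``the delicate work'' is what the product-Lipschitz lemmas formalize, and the proposal does not provide it.
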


To prove this theorem, we will plug the following lower bound on $m_F$ into Claim~\ref{claim:ell_beta_gen_cor} with the appropriate choice of smooth loss $\ell$, and pick the optimal choice of $\{\alpha_i\}_{i = 1}^k$ for the resulting bound. We remark that we could also use Theorem~\ref{thm:compl_m_gen} as our starting point, but this would still require optimizing over $\{\alpha_i\}_{i = 1}^k$. 

\begin{lemma}[General version of Lemma~\ref{lem:simple_m_lb}]\label{lem:m_lb}
	In the setting of Theorem~\ref{thm:generalization}, where each layer $\cF_i$ is a class of smooth functions, if $\gamma(F(x), y) > 0$, we have
	\begin{align} 
	m_F(x, y) \ge \|\{\lip^\star_i(x, y)/\alpha_i\}_{i=1}^k\|_{p/(p - 1)}^{-1} \nonumber	\end{align} 
\end{lemma}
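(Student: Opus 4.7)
The plan is to prove the contrapositive: if $\gnorm{\delta} < \|\{\lip^\star_i(x, y)/\alpha_i\}_{i=1}^k\|_{p/(p-1)}^{-1}$, then the perturbed prediction still classifies $x$ correctly with strictly positive margin, which by definition of $m_F$ forces $m_F(x, y)$ to be at least the claimed bound. I would first use Hölder's inequality with dual exponents $p$ and $p/(p-1)$ to reduce the assumption on $\gnorm{\delta}$ to a cleaner condition:
\begin{align*}
\sum_{i=1}^k \lip^\star_i(x,y)\,\|\delta_i\| \;=\; \sum_{i=1}^k (\alpha_i\|\delta_i\|)\cdot(\lip^\star_i/\alpha_i) \;\le\; \gnorm{\delta}\cdot \|\{\lip^\star_i/\alpha_i\}\|_{p/(p-1)} \;<\; 1.
\end{align*}
So it suffices to prove: whenever $\sum_i \lip^\star_i(x,y)\|\delta_i\| < 1$, we have $\gamma(F(x,\delta),y) > 0$. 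Because the output margin function is $1$-Lipschitz in $F(x)$, this reduces to establishing $\|F(x,\delta) - F(x)\| < \gamma(F(x),y)$.

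The heart of the proof is the propagation bound. I would set $\nu_j \triangleq f_{j\ot 1}(x,\delta) - f_{j\ot 1}(x)$ and expand via the recursion
\begin{align*}
\nu_j \;=\; \bigl[f_j(f_{j-1\ot 1}(x,\delta)) - f_j(f_{j-1\ot 1}(x))\bigr] + \delta_j\,\|f_{j-1\ot 1}(x,\delta)\|.
\end{align*}
For the first bracket I would use a first-order Taylor expansion of $f_j$ around the unperturbed trajectory with second-order remainder controlled by the smoothness constant $\smooth_j$: the linear part contributes $J_{j\ot j}(x,0)\nu_{j-1}$, and the remainder is bounded by $\smooth_j\|\nu_{j-1}\|^2$. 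Unrolling this recursion, $\nu_k = F(x,\delta) - F(x)$ decomposes into (i) a first-order sum $\sum_i J_{k\ot i+1}(x,0)\,\delta_i\,\size_{i-1}(x)$ whose norm is bounded by $\sum_i \size_{i-1}(x)\lip_{k\ot i+1}(x)\|\delta_i\|$, matching the leading term $\size_{i-1}(x)\lip_{k\ot i+1}(x)/\gamma(F(x),y)$ in $\lip^\star_i$; (ii) a correction accounting for the difference $\|f_{j-1\ot 1}(x,\delta)\| - \|f_{j-1\ot 1}(x)\|$, which after dividing by $\size_j(x)$ produces the sums $\sum_j \lip_{j\ot i+1}(x)/\size_j(x)$ in $\lip^\star_i$; and (iii) a second-order term arising from the Taylor remainders, which is where the $\smooth_{j'}$ summation and the interlayer Jacobian ratios $\lip_{j_1\ot j'+1}(x)\lip_{j'-1\ot i+1}(x)\lip_{j'-1\ot j_2}(x)/\lip_{j_1\ot j_2}(x)$ in $\lip^\star_i$ appear.

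To keep the Taylor expansions valid I would run an inductive bookkeeping argument: assuming $\sum_i \lip^\star_i\|\delta_i\| < 1$, show by induction on $j$ that $\|\nu_j\| \le \tfrac12\size_j(x)$ and that the perturbed interlayer Jacobian $\opnorm{J_{j_1\ot j_2}(x,\delta)}$ stays within a factor $2$ of $\lip_{j_1\ot j_2}(x)$. The first invariant is what the $8\lip_{j\ot i+1}(x)/\size_j(x)$ terms control; the second invariant is what the $\smooth_{j'}$ terms and the pure Jacobian ratios $\lip_{j_1\ot i+1}(x)\lip_{i-1\ot j_2}(x)/\lip_{j_1\ot j_2}(x)$ control, via the identity $J_{j_1\ot j_2}(x,\delta) - J_{j_1\ot j_2}(x,0) = \sum_{j'=j_2}^{j_1}J_{j_1\ot j'+1}\bigl(DJ_{j'\ot j'}(\cdot)\Delta\bigr)J_{j'-1\ot j_2}$ together with smoothness. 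The constant factors $8,16$ in~\eqref{eq:kappa_star_def} are precisely chosen to absorb these two "factor of 2" slacks. Once both invariants hold, summing the contributions yields $\|F(x,\delta) - F(x)\| \le \gamma(F(x),y)\cdot\sum_i \lip^\star_i(x,y)\|\delta_i\| < \gamma(F(x),y)$.

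The main obstacle will be the inductive bookkeeping of step three, specifically verifying that the interlayer Jacobian norms remain controlled under the perturbation while carrying a clean enough bound to match the precise form of~\eqref{eq:kappa_star_def}. The algebra combining the three summations in $\lip^\star_i$ with the recursive expansion of $\nu_j$ is intricate, and getting the constants to work out requires choosing the invariants (the "factor $2$" slacks on $\size_j$ and on interlayer Jacobians) carefully so that the second-order terms close the induction rather than amplifying across layers.
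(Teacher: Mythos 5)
Your overall plan — reduce via Hölder to the condition $\sum_i \lip^\star_i(x,y)\|\delta_i\| < 1$, then show this condition forces the perturbed margin to remain positive — is the same reduction the paper makes. But from there you take a genuinely different route than the paper does, and that route has a gap you don't quite close.

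The paper's proof works through the ``augmented indicator'' $\cI(\delta;x,y)$ of~\eqref{eq:aug_loss}, which multiplies the ramp loss $T_\rho(\gamma(F(x,\delta),y))$ by soft indicators on every hidden-layer norm and every interlayer Jacobian norm. Lemma~\ref{lem:laug_lip} shows that this \emph{product} is $\lip^\star_i$-Lipschitz in each $\delta_i$, via a product-Lipschitz argument: the derivative of any one factor is controlled by the \emph{other} factors' indicators, and the three groups of terms in~\eqref{eq:kappa_star_def} correspond exactly to the three kinds of factors (margin, hidden-layer, Jacobian). Telescoping $\cI(0)-\cI(\delta)$ over coordinates then gives $\cI(\delta;x,y) \ge 1 - \sum_i\lip^\star_i\|\delta_i\| > 0$, and positivity of the \emph{whole product} simultaneously certifies that the margin stayed positive \emph{and} that all the Jacobian/hidden-norm bounds held — no induction on layers, no bootstrapping. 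What the soft-indicator packaging buys is precisely the avoidance of the circularity you would otherwise have to fight.

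Your direct approach has two concrete problems. First, the final inequality $\|F(x,\delta)-F(x)\| \le \gamma(F(x),y)\cdot\sum_i\lip^\star_i\|\delta_i\|$ is not correct as written: only the first group of terms in $\lip^\star_i$ carries a $1/\gamma$ factor, so multiplying $\lip^\star_i$ by $\gamma$ does not produce a quantity that bounds the per-coordinate contribution to $\|F(x,\delta)-F(x)\|$. The $1/\size_j$ terms and the Jacobian-ratio terms serve an entirely different purpose — they control the \emph{validity of the invariants}, not the size of the output perturbation. The correct argument must split $\lip^\star_i$ into its three groups and use each group for its own purpose, together with the fact that nonnegativity of all groups plus $\sum_i\lip^\star_i\|\delta_i\| < 1$ forces each sub-sum below $1$. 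Second, your inductive bookkeeping is circular without an extra idea: the contribution of $\delta_i$ to $\|\nu_j\|$ involves the \emph{perturbed} Jacobian $J_{j\ot i+1}(x,\delta)$ and the \emph{perturbed} norm $\|f_{i-1\ot 1}(x,\delta)\|$, which are exactly the quantities the invariants are supposed to bound. You need either a ``first time of violation'' continuity argument over a path from $0$ to $\delta$, or the paper's device of baking the invariants into a single Lipschitz function. You flag this as ``the main obstacle,'' which is fair, but it is more than bookkeeping delicacy — it is the place where an additional argument is required and where the paper's soft-indicator machinery does real work.
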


We prove Lemma~\ref{lem:m_lb} in Section~\ref{sec:m_F_lb} by formalizing the intuition outlined in Section~\ref{sec:neural_net_main}. With Lemma~\ref{lem:m_lb} in hand, we can prove Theorem~\ref{thm:generalization}. This proof will follow the same outline as the proof of Theorem~\ref{thm:compl_m_gen}. The primary difference is that we optimize over $k$ values of $\alpha_i$, whereas Theorem~\ref{thm:compl_m_gen} only optimized over the smoothness $\beta$.

\begin{proof}[Proof of Theorem~\ref{thm:generalization}]
	We use $\ell_{\beta}$ with $\beta = 1$ defined in Claim~\ref{claim:ell_beta} as a surrogate loss for the 0-1 loss. Since Claim~\ref{claim:ell_beta} gives $\ellzo(F(x), y) \le \ell_{\beta=1}(m_F(x, y))$, by Claim~\ref{claim:ell_beta_gen_cor} it follows that 
	\begin{align}
		\E_P[\ellzo(F(x), y)] &\le \E_P[\ell_{\beta=1}(m_F(x, y))] \nonumber \\ &\le \frac{3}{2}\E_{P_n}[\ell_{\beta=1}(m_F(x, y))] + c_1\left(\frac{ \comp_{\gnorm{\cdot}}^2(\cF)\log^2 n  }{n} + \frac{\log (1/\delta) + \log \log n}{n}\right) \label{eq:generalization:1}
	\end{align}
	Now we first note that for a misclassified pair, $\ell_{\beta=1}(m_F(x, y)) =  \ellzo(F(x), y)= 1$.  For correctly classified examples, we also have the bound $\ell_{\beta=1}(m_F(x, y)) \le \frac{(c_2q)^{q/2}}{m_F(x, y)^q}$ for constant $c_2$ independent of $q$. Thus, it follows that 
	\begin{align}
		\E_{P_n}[\ell_{\beta=1}(m_F(x, y))] \le \E_{P_n}[\ellzo (F(x), y)] + \frac{1}{n}\sum_{(x, y) \in \correct} \frac{(c_2q)^{q/2}}{m_F(x, y)^q} \nonumber
	\end{align}
	Plugging this into~\eqref{eq:generalization:1}, we get with probability $1 - \delta$ for all $F \in \cF$, 
	\begin{align}
		\E_{P_n}[\ell_{\beta=1}(m_F(x, y))] \le \frac{3}{2}\E_{P_n}[\ellzo (F(x), y)] + O\left(E + \frac{\log(1/\delta) + \log\log n}{n}\right) \label{eq:generalization:2}
	\end{align}
	where $E$ is defined by
	\begin{align}\label{eq:generalization:3}
	E = \frac{1}{n}\sum_{(x,y)\in \correct}\frac{(c_2q)^{q/2}}{m_F(x, y)^q} +  \frac{ \comp_{\gnorm{\cdot}}^2(\cF)\log^2 n  }{n} 
	\end{align}
	Thus, it suffices to upper bound $E$. By Lemma~\ref{lem:m_lb}, we have $m_F(x, y) \ge \|\{\lip^\star_i(x, y)/\alpha_i\}_{i=1}^k\|_{p/(p - 1)}^{-1}$ for the choice of $\alpha, p$ used to define $m_F$. We will set $p = q/(q - 1)$ and union bound~\eqref{eq:generalization:2} over choices of $\alpha$. 
	
	First, for a particular choice of $\alpha$ and $p = q/(q - 1)$, we apply our lower bound on $m_F(x, y)$ to simplify~\eqref{eq:generalization:3}~as follows: 
	\begin{align}
		E &\le \frac{1}{n}\sum_{(x,y)\in \correct}(c_2q)^{q/2}\|(\lip^\star_i(x, y)/\alpha_i)_{i=1}^k\|_{q}^q +  \frac{ \comp_{\gnorm{\cdot}}^2(\cF)\log^2 n  }{n} \nonumber\\
		&\le \sum_{i = 1}^k \alpha_i^{-q} \left[\frac{(c_2q)^{q/2}}{n} \sum_{(x, y) \in \correct} \lip^\star_i(x, y)^q\right] + \left( \sum_i \alpha_i^{2q/(3q - 2)} \comp_{\opnorm{\cdot}}(\cF_i)^{2q/(3q - 2)} \right)^{\frac{3q - 2}{q}} \frac{\log^2 n}{n} \label{eq:generalization:4}
	\end{align}
		For convenience, we use $\tilde{E}_F(\alpha)$ to denote~\eqref{eq:generalization:4}~as a function of $\alpha$. Note that $\lip^\star_i$ depends on $F$. Now let $\alpha^\star_F$ denote the minimizer of $\tilde{E}_F(\alpha)$. As we do not know the exact value of $\alpha^\star_F$ before the training data is drawn, we cannot simply plug the exact value of $\alpha^\star_F$ into~\eqref{eq:generalization:4}. Instead, we will apply a similar union bound as the proof of Theorem~\ref{thm:compl_m_gen}, although this union bound is slightly more complicated because we optimize over $k$ quantities simultaneously. 
	
	We use $\xi_i$ to denote the lower limit on $\alpha_i$ in our search over $\alpha$, setting $\xi_i = \comp_{\opnorm{\cdot}}(\cF_i)^{-1}\poly(k^{-1}n^{-1})$.\footnote{If $ \comp_{\opnorm{\cdot}}(\cF_i) = 0$, then we simply set $\alpha_i = \infty$, which is equivalent to restricting the perturbations used in computing $m_F$ to layers where $\comp_{\opnorm{\cdot}}(\cF_i) > 0$.} Now we consider a grid of $\{\hat{\alpha}_i\}_{i = 1}^k$, where $\hat{\alpha}$ has entries of the form $\hat{\alpha}_i = \xi_i 2^{j}$ for any $j \ge 0$. For a given choice of $\hat{\alpha}$, we assign it failure probability 
	\begin{align} \label{eq:generalization:5}
	\hat{\delta} = \frac{\delta}{\prod_i 2\hat{\alpha}_i/\xi}	\end{align}
	where $\delta$ is the target failure probability after union bounding. First, note that 
	\begin{align}
		\sum \hat{\delta} &= \delta \sum_{j_1 \ge 0} \cdots \sum_{j_k \ge 0} \frac{1}{2^{j_1 + \cdots + j_k + k}} \le \delta \nonumber
	\end{align}
	Therefore, with probability $1 - \delta$, we get that~\eqref{eq:generalization:1}~holds for $m_F$ defined with respect to every $\hat{\alpha}$. In particular, with probability $1 - \delta$, for all $F \in \cF$ and $\hat{\alpha}$ in the grid, 
	\begin{align}
	&\E_P[\ellzo(F(x), y)] \nonumber \\ &\le \E_{P_n}[\ell_{\beta=1}(m_F(x, y))] \nonumber\\ &\le \frac{3}{2}\E_{P_n}[\ellzo (F(x), y)] +  O\left(\tilde{E}_F(\hat{\alpha}) + \frac{\sum_i \log(2\hat{\alpha}_i/\xi_i) + \log(1/\delta) + \log \log n}{n}  \right) \nonumber
	\\		\end{align}
	where the last term was obtained by subsituting~\eqref{eq:generalization:5} for the failure probability. 
	
	Now we claim that there is some choice of $\hat{\alpha}$ in the grid such that either
	\begin{align}\label{eq:generalization:6}
		\tilde{E}_F(\hat{\alpha}) + \frac{\sum_i \log(2\hat{\alpha}_i/\xi_i) + \log(1/\delta) + \log \log n}{n}  \le 9\tilde{E}_F(\alpha^\star_F)+ O\left(\frac{k \log(n) + \log(1/\delta)}{n}\right)
	\end{align}
	or $\tilde{E}_F(\alpha^\star_F) \gtrsim 1$ (in which case it is trivial to obtain generalization error bounded by $\tilde{E}_F(\alpha^\star_F)$).
	
	To see this, we first consider $\hat{\alpha}$ in our grid such that $\hat{\alpha}_i \in [\alpha^\star_{F, i}, 2\alpha^\star_{F, i} + \xi_i]$. By construction of our grid of $\hat{\alpha}$, such a choice always exists. Then we have
	\begin{align}
		&\tilde{E}_F(\hat{\alpha}) \nonumber \\&=\sum_{i = 1}^k \hat{\alpha}_i^{-q} \left[\frac{(c_2q)^{q/2}}{n} \sum_{(x, y) \in \correct} \lip^\star_i(x, y)^q\right] + \left( \sum_i \hat{\alpha}_i^{2q/(3q - 2)} \comp_{\opnorm{\cdot}}(\cF_i)^{2q/(3q - 2)} \right)^{\frac{3q - 2}{q}} \frac{\log^2 n}{n} \nonumber\\
		&\le \sum_{i = 1}^k {\alpha^\star_{F, i}}^{-q} \left[\frac{(c_2q)^{q/2}}{n} \sum_{(x, y) \in \correct} \lip^\star_i(x, y)^q\right] \nonumber\\ &+9\left( \sum_i {\alpha^\star_{F,i}}^{2q/(3q - 2)} \comp_{\opnorm{\cdot}}(\cF_i)^{2q/(3q - 2)}\right)^{(3q - 2)/q} \frac{\log^2 n}{n} + \poly(n^{-1}) \nonumber
	\end{align}
	The first term we obtained because $\hat{\alpha}_i \ge \alpha^\star_{F, i}$, and the second via the upper bound $\hat{\alpha}_i \le 2\alpha^\star_{F, i} +\xi_i$. Thus, for some choice of $\hat{\alpha}$ in the grid, we have $\tilde{E}_F(\hat{\alpha}) \le 9\tilde{E}_{F}(\alpha^\star_F) + \poly(n^{-1})$. Furthermore, if $\alpha^\star_F > c \cdot \comp_{\opnorm{\cdot}}(\cF_i)^{-1} n$ for some constant $c$, we note that $\tilde{E}_F(\alpha^\star_F) \gtrsim 1$ - thus, it suffices to only consider $\alpha^\star_F \le c \cdot \comp_{\opnorm{\cdot}}(\cF_i)^{-1} n$. In particular, we only need to consider $\hat{\alpha}_i$ where $\log(2\hat{\alpha}_i/\xi_i) \lesssim \log k n$. Finally, we note that we can assume WLOG that $k \lesssim n$ otherwise~\eqref{eq:generalization:6} would give a trivial bound. Combining these facts gives~\eqref{eq:generalization:6}. 
	
	Thus, it follows that for all $F \in \cF$, 
	\begin{align}\label{eq:generalization:7}
		\E_P[\ellzo(F(x), y)]  \le \frac{3}{2}\E_{P_n}[\ellzo (F(x), y)] + O\left(\tilde{E}_F(\alpha^\star_F) + \frac{k \log n + \log (1/\delta)}{n}\right)
	\end{align}
	
	Finally, we can apply Lemma~\ref{lem:E_upper_bound} using $z_i = \left(\frac{(c_2q)^{q/2}}{n} \sum_{(x, y) \in \correct} \lip^\star_i(x, y)^q\right)^{1/q} = \left(\frac{|\correct|}{n} \right)^{1/q}\|\kappa^\star_i\|_{L_q(\correct)}$ and $b_i = \frac{\comp_{\opnorm{\cdot}}(\cF_i)\log n}{\sqrt{n}}$ to get
	\begin{align}
		\tilde{E}_F(\alpha^\star_F) \lesssim \left(\frac{|\correct|}{n} \right)^{2/(q + 2)} q \left(\frac{\log^2 n}{n}\right)^{q/(q + 2)} \left(\sum_i \|\kappa^\star_i\|_{L_q(\correct)}^{2/3} \comp_{\opnorm{\cdot}}(\cF_i)^{2/3}\right)^{3q/(q + 2)} \nonumber
	\end{align}
	Substituting into~\eqref{eq:generalization:7}~gives the desired bound.

\end{proof}

\begin{lemma}\label{lem:E_upper_bound}
	For coefficients $\{z_i\}_{i= 1}^k, \{b_i\}_{i= 1}^k> 0$ and integer $q > 0$, define 
	\begin{align}
		E(\alpha) \triangleq \sum_i z_i^q/\alpha_i^{q} + \left( \sum_i \alpha_i^{2q/(3q - 2)} b_i^{2q/(3q - 2)}\right)^{(3q - 2)/q} \nonumber
	\end{align}
	with minimizer $\alpha^\star$ and minimum value $E^\star$. Then 
	\begin{align}
		E^\star \le 2\left(\sum_i (z_i b_i)^{2/3} \right)^{3q/(q + 2)} \nonumber 
	\end{align}
	\end{lemma}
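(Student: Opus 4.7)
The plan is to bound $E^\star$ from above by evaluating $E$ at an explicit choice of $\alpha$ designed so that the two summands in $E(\alpha)$ both collapse to (scalar multiples of) the same power of the key quantity
\[ \Sigma \triangleq \sum_i (z_i b_i)^{2/3}. \]
Since we only need an upper bound on the minimum, there is no need to verify optimality; it suffices to exhibit a single $\alpha$ for which $E(\alpha) \le 2\Sigma^{3q/(q+2)}$.

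\textbf{Step 1: The ansatz.} I would set
\[ \alpha_j \triangleq \mu \cdot z_j^{(3q-2)/(3q)}\, b_j^{-2/(3q)}, \]
where $\mu > 0$ is a scalar to be tuned in Step 2. Writing $p = 2q/(3q-2)$, this ansatz is engineered so that the two functions $(z_j/\alpha_j)^q$ and $(\alpha_j b_j)^p$ each factor through $(z_j b_j)^{2/3}$. Concretely, $z_j/\alpha_j = \mu^{-1} (z_j b_j)^{2/(3q)}$, and using the identity $p \cdot (3q-2)/(3q) = 2/3$, we also get $\alpha_j b_j = \mu \,(z_j b_j)^{(3q-2)/(3q)}$. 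Raising these to powers $q$ and $p$ respectively gives $(z_j/\alpha_j)^q = \mu^{-q}(z_j b_j)^{2/3}$ and $(\alpha_j b_j)^p = \mu^p (z_j b_j)^{2/3}$.

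\textbf{Step 2: Balance the two terms.} Summing the displays from Step 1 over $j$ and raising the second sum to the power $2/p = (3q-2)/q$ yields
\[ E(\alpha) = \mu^{-q}\,\Sigma \;+\; \mu^{2}\,\Sigma^{(3q-2)/q}. \]
To make both terms equal it suffices to choose $\mu$ satisfying $\mu^{q+2} = \Sigma^{\,1 - (3q-2)/q} = \Sigma^{-2(q-1)/q}$, i.e., $\mu = \Sigma^{-2(q-1)/(q(q+2))}$. A short calculation then shows each of the two summands equals
\[ \Sigma^{\frac{2(q-1)}{q+2}+1} = \Sigma^{\frac{3q}{q+2}}, \]
so $E(\alpha) = 2\,\Sigma^{3q/(q+2)}$.

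\textbf{Step 3: Conclude.} By definition of $E^\star$ as the minimum, $E^\star \le E(\alpha) = 2\Sigma^{3q/(q+2)}$, which is the claimed inequality. The only real content of the proof is finding the ansatz in Step 1; once written down, Step 2 consists of checking two algebraic identities involving the exponent $p = 2q/(3q-2)$, both of which reduce to $p \cdot (3q-2)/(3q) = 2/3$. The mild subtlety worth noting is that this argument works uniformly in $q$ (in particular no case analysis on the sign of $2/p - 1$ is needed), and that the constant $2$ in the bound is exactly the number of terms in $E$, reflecting the fact that our $\alpha$ balances them equally.
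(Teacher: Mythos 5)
Your proof is correct and uses essentially the same ansatz as the paper: the paper also plugs in $\alpha_i \propto z_i^{(3q-2)/(3q)} b_i^{-2/(3q)}$, differing only in the scalar prefactor, which it takes from the exact critical point $(\nabla_\alpha E = 0)$ rather than from balancing the two summands. The one small advantage of your balancing choice is that $E(\alpha)$ evaluates to exactly $2\Sigma^{3q/(q+2)}$ with no slack, whereas the paper's choice makes the two terms equal to $(q/2)^{-q/(q+2)}\Sigma^{3q/(q+2)}$ and $(q/2)^{2/(q+2)}\Sigma^{3q/(q+2)}$ respectively, and so needs the extra (true but not entirely obvious) numeric inequality $(q/2)^{2/(q+2)} + (q/2)^{-q/(q+2)} \le 2$; your version sidesteps that check entirely.
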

\begin{proof}
	Choose $\{\alpha_i\}_{i = 1}^k$ as follows (we obtained this by solving for $\alpha$ for which $\nabla_{\alpha} E(\alpha) = 0$): 
	\begin{align}
	\alpha_i = \left(\frac{q}{2}\right)^{\frac{1}{(q + 2)}} z_i^{\frac{3q - 2}{3q}} b_i^{-\frac{2}{3q}} \left[ \sum_i z_i^{2/3} b_i^{2/3}\right]^{\frac{2-2q}{q(q + 2)}} \nonumber
	\end{align}
	For this particular choice of $\alpha$, we can compute
	\begin{align}
		\sum_i z_i^q/\alpha_i^q &= \left(\frac{q}{2}\right)^{-\frac{q}{q + 2}} \sum_i z_i^{q} z_i^{-q + 2/3} b_i^{2/3}  \left[ \sum_i z_i^{2/3} b_i^{2/3}\right]^{\frac{2q-2}{q + 2}} \nonumber\\
		&= \left(\frac{q}{2}\right)^{-\frac{q}{q + 2}} \left(\sum_i z_i^{2/3} b_i^{2/3}\right)\left[ \sum_i z_i^{2/3} b_i^{2/3}\right]^{\frac{2q-2}{q + 2}} \nonumber\\
		&= \left(\frac{q}{2}\right)^{-\frac{q}{q + 2}}\left(\sum_i z_i^{2/3} b_i^{2/3}\right)^{\frac{3q}{q + 2}} \nonumber
	\end{align}
	Likewise, we can also compute
	\begin{align}
		\left(\sum_i \alpha_i^{2q/(3q - 2)} b_i^{2q/(3q - 2)}\right)^{(3q - 2)/q} &= \left(\frac{q}{2}\right)^{\frac{2}{q + 2}} \left(\sum_i (z_i b_i)^{2/3} \right)^{(3q - 2)/q} \left(\sum_i (z_i b_i)^{2/3} \right)^{\frac{4 - 4q}{q(q + 2)}} \nonumber\\ 
		&= \left(\frac{q}{2}\right)^{\frac{2}{q + 2}} \left(\sum_i (z_i b_i)^{2/3} \right)^{\frac{3q^2 + 4q - 4 + 4 - 4q}{q(q + 2)}} \nonumber\\
		&= \left(\frac{q}{2}\right)^{\frac{2}{q + 2}} \left(\sum_i (z_i b_i)^{2/3} \right)^{\frac{3q}{q + 2}} \nonumber
	\end{align}
	Finally, we note that $\left(\frac{q}{2}\right)^{2/(q + 2)} + \left(\frac{q}{2}\right)^{-\frac{q}{q + 2}} \le 2$, so we obtain 
	\begin{align}
		E^\star \le E(\alpha) \le 2\left(\sum_i (z_i b_i)^{2/3} \right)^{3q/(q + 2)} \nonumber
	\end{align}
	\end{proof}

\section{Lower Bounding $m_F$ for Smooth Layers}\label{sec:m_F_lb}
In this section, we prove Lemma~\ref{lem:m_lb}, which states that when the function $F$ is a composition of functions with Lipschitz derivative, we will be able to lower bound $m_F(x, y)$ in terms of the intermediate Jacobians and layer norms evaluated at $x$. To prove Lemma~\ref{lem:m_lb}, we rely on tools developed by~\citep{wei2019data} which control the change in the output of a composition of functions if all the intermediate Jacobians are bounded. 

First, we define the soft indicator $\one[\le t]$ as follows:
\begin{align}
\one[\le t](z)=  \left\{\begin{array}{ll}
1 & \textup{ if } z \le t \\
2 - z/t & \textup{ if } t \le z \le 2t \\
0 & \textup{ if } 2t \le z  \\
\end{array}\right. \nonumber
\end{align}
We also define the ramp loss $T_{\rho}$ as follows:
\begin{align}
T_\rho(z) = \left\{\begin{array}{ll}
1 & \textup{ if } z \ge \rho \\
z/\rho & \textup{ if } 0 \le z < \rho \\
0 & \textup{ if } z < 0  \\
\end{array}\right. \nonumber
\end{align}

Using the techniques of~\citep{wei2019data}, we work with an ``augmented'' indicator which lower bounds the indicator that the prediction is correct, $\1[\gamma(F(x, \delta), y) \ge 0]$. We define this augmented indicator by
\begin{align}\label{eq:aug_loss}
\cI(\delta; x, y) \triangleq T_\rho(\gamma(f_{k\ot 1}(x, \delta), y)) \prod_{1\le i \le k-1} \one[\le \psize_i](\|f_{i \ot 1}(x, \delta)\|) \prod_{1\le i \le j \le k} \one[\le \plip_{j \ot i}](\opnorm{J_{j \ot i}(x, \delta)})
\end{align}
for nonnegative parameters $\rho, \psize_i, \plip_{j \ot i}$ which we will later choose to be the margin, hidden layer norm, and Jacobian norms at the unperturbed input.
Because the augmented indicator $\cI(\delta; x, y)$ conditions on small Jacobian and hidden layer norms, it will turn out to be $\lip^\star_i(x, y)$-Lipschitz in the perturbation $\delta_i$. Furthermore, by construction, the value of the augmented indicator $\cI(\delta; x, y)$ will equal $1$ when $\delta = 0$, and we will also have $$\1[\gamma(F(x, \delta), y) \ge 0] \ge \cI(\delta; x, y) \ge 1 - \sum_i \kappa^\star_i(x, y) \|\delta_i\|$$ This immediately gives a lower bound on the perturbation level required to create a negative margin. The lemma below formally bounds the Lipschitz constant of $\cI(\delta; x, y)$ in $\delta_i$. 
\begin{lemma}
	\label{lem:laug_lip}
	For nonnegative parameters $\psize_i, \plip_{j\ot i}, \rho$, with $\plip_{j \ot j + 1} = 1$ for any $j$ and $\plip_{j \ot j'} = 0$ for $j \le j' + 2$, define the function $\cI(\delta; x, y)$ as in~\eqref{eq:aug_loss}. Then in the setting of Lemma~\ref{lem:m_lb}, for a given $i \in [k]$, 	for all choices of $\delta_i$ and $\nu$, if $\delta_j = 0$ for $j > i$, we have 
	\begin{align}
		|\cI(\delta_i + \nu, \delta_{-i}; x, y) - \cI(\delta_i, \delta_{-i}; x, y)| \le \tlip_i \|\nu\| \nonumber
	\end{align}
	for $\tlip_i$ defined as follows:
{\small
	\begin{align}
	\label{eq:laug_lip:1}
	\begin{split}
	\tlip_i &\triangleq t_{i - 1} \left(\frac{8\plip_{k \ot i + 1} }{\rho} + \sum_{j = i}^{k - 1}\frac{8\plip_{j \ot i + 1}}{t_j} + \sum_{1 \le j_2 \le j_1 \le k} \sum_{j' = \max\{i + 1, j_2\}}^{j_1} 16 \smooth_{j'} \frac{\plip_{j' - 1\ot i + 1} \plip_{j_1 \ot j' + 1} \plip_{j' - 1 \ot j_2}}{\plip_{j_1 \ot j_2}}\right)\\
	&+ 8\sum_{j_2 \le i \le j_1} \frac{\plip_{j_1 \ot i + 1} \plip_{i - 1 \ot j_2}}{\plip_{j_1 \ot j_2}}
	\end{split}
	\end{align}
}
\end{lemma}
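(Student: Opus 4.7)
The strategy is to exploit the product structure of $\cI(\delta; x, y)$ and bound the Lipschitz constant of each factor in $\delta_i$ separately. Since every factor is a ramp loss or soft indicator with values in $[0, 1]$, the elementary bound $\left|\prod_j f_j(a) - \prod_j f_j(b)\right| \le \sum_j |f_j(a) - f_j(b)|$ for $[0, 1]$-valued functions reduces the problem to controlling each factor's Lipschitz constant and summing. I will work in the ``active region'' where all soft indicators are strictly positive, i.e., $\|f_{j \ot 1}(x, \delta)\| \le 2 t_j$ for all $j$ and $\opnorm{J_{j_1 \ot j_2}(x, \delta)} \le 2 \tau_{j_1 \ot j_2}$ for all $j_1 \ge j_2$; outside this region $\cI$ vanishes identically, and the general statement follows by splitting the segment between $\delta$ and $\delta + \nu$ into its active and inactive parts.

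For the ramp-loss factor $T_\rho(\gamma(F(x, \delta), y))$ and each hidden-layer soft indicator $\one[\le t_j](\|f_{j \ot 1}(x, \delta)\|)$ with $j \ge i$, the chain-rule identity $D_{\delta_i} f_{j \ot 1}(x, \delta) = J_{j \ot i + 1}(x, \delta) \cdot \|f_{i - 1 \ot 1}(x, \delta)\|$ shows that the underlying quantity is Lipschitz in $\delta_i$ with constant at most $2 \tau_{j \ot i + 1} \cdot 2 t_{i - 1}$ on the active region. Combining with the $(1/\rho)$-Lipschitz constant of $T_\rho$, the $1$-Lipschitz constant of $\gamma$, and the $(1/t_j)$-Lipschitz constant of each hidden-layer soft indicator, this produces the first two groups of terms in~\eqref{eq:laug_lip:1}; the factors of $8$ (rather than $4$) are conservative slack absorbing integration of the derivative along the perturbation path and the mismatch between the active and unit-valued regions of the soft indicators.

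The main obstacle is the Jacobian soft-indicator factors $\one[\le \tau_{j_1 \ot j_2}](\opnorm{J_{j_1 \ot j_2}(x, \delta)})$, whose $\delta_i$-dependence splits into two mechanisms. First, when $j_2 \le i \le j_1$, the composition $f_{j_1 \ot j_2}$ contains the block $f_{i \ot i}(h, \delta) = f_i(h) + \delta_i \|h\|$, whose $h$-Jacobian equals $D f_i(h) + \delta_i \otimes (h/\|h\|)^\top$; differentiating in $\delta_i$ yields a rank-one operator of norm $\|\nu\|$, which sandwiched between $J_{j_1 \ot i + 1}$ and $J_{i - 1 \ot j_2}$ (each bounded by $2\tau$) and multiplied by the $(1/\tau_{j_1 \ot j_2})$-Lipschitz constant of the soft indicator produces the final term $8 \sum_{j_2 \le i \le j_1} \tau_{j_1 \ot i + 1} \tau_{i - 1 \ot j_2}/\tau_{j_1 \ot j_2}$. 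Second, for each $j' \ge \max\{i + 1, j_2\}$, the per-layer Jacobian $D f_{j' \ot j'}$ appearing in $J_{j_1 \ot j_2}$ is evaluated at $f_{j' - 1 \ot 1}(x, \delta)$, which shifts by at most $4 \tau_{j' - 1 \ot i + 1} t_{i - 1} \|\nu\|$ when $\delta_i$ moves by $\nu$; applying the smoothness hypothesis $\opnorm{D f_{j' \ot j'}(z) - D f_{j' \ot j'}(z')} \le \smooth_{j'} \|z - z'\|$ and bounding the surrounding Jacobian factors $J_{j_1 \ot j' + 1}$ and $J_{j' - 1 \ot j_2}$ by $2 \tau$ on the active region yields the smoothness-based triple-sum term $16 \smooth_{j'} \tau_{j_1 \ot j' + 1} \tau_{j' - 1 \ot i + 1} \tau_{j' - 1 \ot j_2}/\tau_{j_1 \ot j_2}$.

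The technically delicate step is expanding $J_{j_1 \ot j_2}$ via the chain rule $J_{j_1 \ot j_2} = D f_{j_1 \ot j_1} \cdots D f_{j_2 \ot j_2}$ (each factor evaluated at the appropriate perturbed intermediate layer) and isolating the $\delta_i$-dependence by a telescoping decomposition across $j' \in [\max\{j_2, i\}, j_1]$: the index $j' = i$ captures mechanism (i) above, and each $j' > i$ captures the evaluation-point shift of mechanism (ii) via smoothness. Summing all of these contributions over the valid triples $(j_1, j_2, j')$ and combining with the ramp-loss and hidden-layer terms recovers $\tlip_i$ exactly as stated in~\eqref{eq:laug_lip:1}. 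Because $\delta_j = 0$ for $j > i$, the intermediate functions $f_{j' - 1 \ot 1}(x, \delta)$ for $j' > i$ depend only on $\delta_1, \ldots, \delta_i$, simplifying the bookkeeping required by the telescoping step.
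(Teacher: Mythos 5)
Your proposal captures the essential structure of the paper's proof — the chain-rule decomposition of the product, the identification of the two mechanisms driving the change in $J_{j_1 \ot j_2}$ (the rank-one contribution at $j' = i$ from $D_h[\delta_i\|h\|]$, and the smoothness-based evaluation-point shift for $j' > i$), and the use of the active-region constraints $\|f_{j\ot 1}\| \le 2\psize_j$, $\opnorm{J} \le 2\plip$ to bound each argument's Lipschitz constant. However, you take a genuinely different route at the decomposition step. You use the elementary $\ell^1$ bound $|\prod_j A_j(a) - \prod_j A_j(b)| \le \sum_j |A_j(a) - A_j(b)|$ for $[0,1]$-valued functions, combined with an explicit restriction to the active region and a segment-splitting argument. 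The paper instead introduces a dedicated notion of \emph{product-Lipschitzness} ($|A_i(x+\nu) - A_i(x)|\prod_{j\ne i} A_j(x) \le \bar\tau_i\|\nu\| + C\|\nu\|^2$) and a general lemma (Lemma~\ref{lem:prod_lip}) which shows that pairwise product-Lipschitz functions compose into a globally $2\sum_i\bar\tau_i$-Lipschitz product. The paper's multiplicative formulation encodes the active-region constraint automatically: when $\prod_{j\ne i} A_j(x) = 0$ the bound is vacuous, so no case-split is needed, and the stitching across active/inactive regions plus the absorption of the $\|\nu\|^2$ error is handled once and for all in the proof of Lemma~\ref{lem:prod_lip}, which is where the final factor of $2$ (and hence the $8$'s and $16$'s) comes from.

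The place where your argument is genuinely incomplete is precisely the segment-splitting step, which you dismiss in one clause. When the line from $\delta_i$ to $\delta_i + \nu$ exits and re-enters the active region, you need (a) that at the exit/entry points $\cI = 0$ exactly (true, since the soft indicators vanish at the $2\psize, 2\plip$ thresholds), (b) that the claimed Lipschitz constant holds \emph{uniformly} over every active sub-segment, and (c) some way to deal with the fact that the active region is defined by strict inequalities while the Lipschitz constants are extracted at the (closed) boundary. Related to this, your explanation of the factors of $8$ and $16$ as ``conservative slack absorbing integration of the derivative along the perturbation path'' is vague — a clean version of your $\ell^1$ argument on a single active segment would actually produce $4$ and the full $16$ respectively, not a uniform doubling, and making the constants come out right under the splitting requires an argument of roughly the same delicacy as Lemma~\ref{lem:prod_lip}. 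Finally, you do not address the non-differentiability of the soft indicators at their kinks, which the paper handles by working with worst-case Lipschitz constants and a compactness argument; this is a minor but real technicality in any derivation that appeals informally to a chain rule of differentiation. In short: right ideas, right constants, but the segment-splitting is the core of the rigor here and you have not actually carried it out.
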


We prove Lemma~\ref{lem:laug_lip} in Section~\ref{sec:laug_lip_proof}. With Lemma~\ref{lem:laug_lip}, we can formalize the proof of Lemma~\ref{lem:m_lb}.
\begin{proof}[Proof of Lemma~\ref{lem:m_lb}]
	We will apply Lemma~\ref{lem:laug_lip}, using $\psize_i = \size_i(x)$, $\rho = \gamma(F(x), y)$, $\plip_{j \ot i} = \lip_{j \ot i}(x)$. First, note that for this choice of parameters, the Lipschitz constant $\tlip_i$ of Lemma~\ref{lem:laug_lip} evaluates to $\lip^\star_i(x, y)$. Thus, it follows that for all $\delta$, 
	\begin{align} 
	&|\cI(0; x, y) - \cI(\delta; x, y)| \nonumber \\&\le \sum_i |\cI(\delta_1, \ldots, \delta_{i - 1}, \delta_i = 0, \delta_{j > i} = 0; x, y) - \cI(\delta_1, \ldots, \delta_{i}, \delta_{j > i} = 0; x, y)| \nonumber\\
	&\le \sum_i \lip^\star_i(x, y) \|\delta_i\| \label{eq:m_lb:1}
	\end{align}
	Furthermore, by the definition of $\cI(\delta; x, y)$, we have 
	\begin{align}
	\1[\gamma(F(x, \delta), y) \ge 0] \ge \cI(\delta; x, y) \nonumber	\end{align} 	Finally, by our choice of the parameters used to define $\cI(\delta; x, y)$, we also have $\cI(0; x, y) \ge 1$. Combining everything with~\eqref{eq:m_lb:1}, we get 
	\begin{align}
	\1[\gamma(F(x, \delta), y) \ge 0] \ge \cI(\delta; x, y) &\ge  \cI(0; x, y) - \sum_i \lip^\star_i(x, y) \|\delta_i\| \nonumber\\
	& \ge 1 - \|\{\lip^\star_i(x, y)/\alpha_i\}_{i=1}^k\|_{p/(p - 1)} \| \{\alpha_i \|\delta_i\|\}_{i=1}^k \|_p \tag{since $\|\cdot \|_{p/(p - 1)}$ and $\| \cdot \|_p$ are dual norms}\\
	&= 1 - \|\{\lip^\star_i(x, y)/\alpha_i\}_{i=1}^k\|_{p/(p - 1)}\gnorm{\delta} \nonumber
	\end{align}
	Thus, for any $\delta$, if $\gnorm{\delta} < \|\{\lip^\star_i(x, y)/\alpha_i\}_{i=1}^k\|_{p/(p - 1)}^{-1}$, then $\1[\gamma(F(x, \delta), y) \ge 0] > 0$, which in turn implies $\gamma(F(x, \delta), y) \ge 0$. It follows by definition of $m_F(x, y)$ that $m_F(x, y) \ge  \|\{\lip^\star_i(x, y)/\alpha_i\}_{i=1}^k\|_{p/(p - 1)}^{-1}$.
\end{proof}

\subsection{Proof of Lemma~\ref{lem:laug_lip}}\label{sec:laug_lip_proof}
To see the core idea of the proof, consider differentiating $\cI(\delta; x, y)$ with respect to $\delta_i$ (ignoring for the moment that the soft indicators are technically not differentiable). Let the terms $A_1, \ldots, A_q$ represent the different indicators which the product $\cI(\delta; x, y)$ is comprised of. Then by the product rule for differentiation, we would have
\begin{align}
	D_{\delta_i} \cI(\delta; x,y) = \sum_j \prod_{j' \ne j} A_{j'}(\delta; x, y) D_{\delta_i} A_j(\delta;x, y) \nonumber
\end{align}
Now the idea is that for every $j$, the product $\prod_{j' \ne j} A_{j'} (\delta; x, y)$ contains an indicator that $D_{\delta_i} A_j(\delta; x, y)$ is bounded -- this is stated formally by Lemmas~\ref{lem:hl_prod_lip},~\ref{lem:marg_prod_lip}, and~\ref{lem:jac_prod_lip}. Informally, this allows us to bound $\|D_{\delta_i} \cI(\delta; x, y) \|$ by the desired Lipschitz constant $\tlip_i$. 

To formally prove this statement for the case of non-differentiable functions (as the soft-indicators $\one[\le t]$ are non-differentiable), it will be convenient to introduce the following notion of product-Lipschitzness: for functions $A_1 : \cD_{I} \to \R^+$ and $A_2 : \cD_{I} \to \R^+$, where $\cD_I$ is some normed space, we say that function $A_1$ is $\bar{\tau}$-product-Lipschitz w.r.t. $A_2$ if there exists some $c, C> 0$ such that for any $\|\nu \| \le c$ and $x \in \cD_I$, we have 
\begin{align}
|A_1(x+ \nu) - A_1(\nu)| A_2(x) \le \bar{\tau} \|\nu\| + C \|\nu\|^2 \nonumber
\end{align}
We use the following fact that the product of functions which are product-Lipschitz with respect to one another is in fact Lipschitz. We provide the proof in Section~\ref{sec:prod_lip_proofs}.
\begin{lemma}
	\label{lem:prod_lip}
	Let $A_1, \ldots, A_q : \cD_I \to [0,1]$ be a set of Lipschitz functions such that $A_i$ is $\bar{\tau}_i$-product-Lipschitz w.r.t $\prod_{j \ne i} A_j$ for all $i$. Then the product $\prod_{i} A_i$ is $2\sum_{i} \bar{\tau}_i$-Lipschitz.   
\end{lemma}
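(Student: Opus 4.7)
The plan is to start from the standard telescoping identity for the difference of products. Writing $a_i \triangleq A_i(x+\nu)$ and $b_i \triangleq A_i(x)$, we have
\[
\prod_i a_i - \prod_i b_i \;=\; \sum_i \Big(\prod_{j<i} a_j\Big)(a_i - b_i)\Big(\prod_{j>i} b_j\Big).
\]
Taking absolute values and applying the triangle inequality bounds $|\prod_i A_i(x+\nu) - \prod_i A_i(x)|$ by a sum over $i$ of terms of the form $|A_i(x+\nu) - A_i(x)| \cdot \prod_{j<i} A_j(x+\nu) \cdot \prod_{j>i} A_j(x)$.

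Next, I would rewrite each mixed partial product $\prod_{j<i} A_j(x+\nu) \cdot \prod_{j>i} A_j(x)$ in terms of $\prod_{j \ne i} A_j(x)$ so that the product-Lipschitz hypothesis for $A_i$ becomes directly applicable. Since each $A_j$ maps into $[0,1]$ and is $L_j$-Lipschitz, a short secondary telescoping gives
\[
\Big|\prod_{j<i} A_j(x+\nu) - \prod_{j<i} A_j(x)\Big| \;\le\; \Big(\sum_{j<i} L_j\Big)\|\nu\|,
\]
and combined with the fact that all $A_j(x) \in [0,1]$ this yields $\prod_{j<i} A_j(x+\nu) \cdot \prod_{j>i} A_j(x) \le \prod_{j \ne i} A_j(x) + C' \|\nu\|$ for a constant $C'$ depending only on $\{L_j\}$.

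With this substitution, the product-Lipschitz hypothesis of $A_i$ w.r.t.\ $\prod_{j \ne i} A_j$ bounds the dominant contribution by $\bar{\tau}_i\|\nu\| + C\|\nu\|^2$ (valid for $\|\nu\| \le c$), while the residual term $|A_i(x+\nu) - A_i(x)| \cdot C'\|\nu\|$ is at most $L_i C' \|\nu\|^2$ because $A_i$ is Lipschitz. Summing over $i$, the total difference is at most $\sum_i \bar{\tau}_i\|\nu\| + O(\|\nu\|^2)$; shrinking the threshold on $\|\nu\|$ so that the quadratic remainder is dominated by $\sum_i \bar{\tau}_i \|\nu\|$ yields the Lipschitz constant $2\sum_i \bar{\tau}_i$, and this local estimate extends to all $\nu$ via the standard line-segment concatenation argument.

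The main obstacle is purely bookkeeping: cleanly separating the linear-in-$\|\nu\|$ terms arising from the product-Lipschitz hypothesis from the quadratic remainders produced by the swap $\prod_{j<i} A_j(x+\nu) \cdot \prod_{j>i} A_j(x) \leadsto \prod_{j \ne i} A_j(x)$, and tracking that the $O(\|\nu\|^2)$ corrections can indeed be absorbed into the factor of $2$. No conceptually new ideas are required beyond exploiting that each factor lies in $[0,1]$ and is individually Lipschitz.
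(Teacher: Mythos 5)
Your proposal follows the paper's proof essentially step for step: the same telescoping decomposition of $\prod_i A_i(x+\nu) - \prod_i A_i(x)$, the same substitution $\prod_{j<i}A_j(x+\nu)\cdot\prod_{j>i}A_j(x) \le \prod_{j\ne i}A_j(x) + C'\|\nu\|$ via individual Lipschitzness and boundedness in $[0,1]$, the same absorption of the resulting $O(\|\nu\|^2)$ remainders into the factor of $2$ by restricting $\|\nu\|$, and the same line-segment concatenation to globalize. The argument is correct and matches the paper's approach.
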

Now we proceed to formalize the intuition of product-rule differentiation presented above, by showing that the individual terms in $\cI(\delta; x, y)$ are product-Lipschitz with respect to the other terms. For the following three lemmas, we require the technical assumption that for any fixed choice of $x, \delta_{-i}$, the functions $f_{j\ot 1}(x, \delta)$, $J_{j' \ot j''}(x, \delta)$ are worst-case Lipschitz in $\delta_i$ as measured in $\|\cdot\|$, $\opnorm{\cdot}$, respectively, with Lipschitz constant $C'$. Our proof of Lemma~\ref{lem:laug_lip}, however, can easily circumvent this assumption. The proofs of the following three lemmas are given in Section~\ref{sec:prod_lip_proofs}. 
\begin{lemma} \label{lem:hl_prod_lip}
	Choose $i, j$ with $k - 1 \ge j \ge i$. 	Then after we fix any choice of $x, \delta_{-i}$, the function $\one[\le \psize_j] (\|f_{j \ot 1}(x, \delta)\|)$ is $\frac{4\plip_{j \ot i + 1}\psize_{i-1}}{\psize_j}$-product-Lipschitz in $\delta_i$ with respect to $\one[\le \plip_{j \ot i + 1}] (\opnorm{J_{j \ot i+1}(x, \delta)}) \one[\le \psize_{i - 1}](\|f_{i - 1 \ot 1}(x, \delta)\|)$. 
\end{lemma}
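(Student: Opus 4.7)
The idea is to combine the Lipschitzness of the soft indicator $\one[\le \psize_j]$ with a first-order Jacobian expansion of the $j$-th hidden layer in the perturbation $\delta_i$. Since $\one[\le \psize_j]$ is $1/\psize_j$-Lipschitz, the reverse triangle inequality gives
\begin{align}
|A_1(\delta_i + \nu, \delta_{-i}) - A_1(\delta_i, \delta_{-i})| \le \frac{1}{\psize_j}\,\bigl\|f_{j \ot 1}(x, (\delta_i + \nu, \delta_{-i})) - f_{j \ot 1}(x, \delta)\bigr\|, \nonumber
\end{align}
so it suffices to bound the change in $f_{j \ot 1}$ linearly in $\|\nu\|$ on the support of $A_2$.

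The key observation is that for fixed $\delta_{-i}$, the perturbation $\nu$ enters the recursion only at layer $i$, where by~\eqref{eq:f_delta_def} it contributes an additive shift of exactly $\nu\,\|f_{i - 1 \ot 1}(x, \delta)\|$ to $f_{i \ot 1}$ (note that $f_{i - 1 \ot 1}$ is independent of $\delta_i$). This shift is then propagated through layers $i+1, \ldots, j$, whose composite linearization at $\delta$ is precisely $J_{j \ot i+1}(x, \delta)$. A first-order Taylor expansion along the segment from $\delta$ to $(\delta_i + \nu, \delta_{-i})$, combined with the technical $C'$-Lipschitz assumption stated before the lemma (which controls how much the intermediate Jacobians and hidden layers drift as we slide $\delta_i$), yields
\begin{align}
\bigl\|f_{j \ot 1}(x, (\delta_i+\nu, \delta_{-i})) - f_{j \ot 1}(x, \delta)\bigr\| \le \opnorm{J_{j \ot i + 1}(x, \delta)} \cdot \|f_{i - 1 \ot 1}(x, \delta)\| \cdot \|\nu\| + C\,\|\nu\|^2 \nonumber
\end{align}
for some $C$ depending on $C'$ and the magnitudes involved.

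Now I would use the indicator $A_2$ to pin the two Jacobian/hidden-layer factors on the right to their ``active'' ranges: whenever $A_2(\delta_i) > 0$, by the support of the soft indicators $\one[\le \plip_{j \ot i+1}]$ and $\one[\le \psize_{i-1}]$ we have $\opnorm{J_{j \ot i+1}(x, \delta)} \le 2\plip_{j \ot i+1}$ and $\|f_{i-1 \ot 1}(x, \delta)\| \le 2\psize_{i-1}$. Multiplying the previous display by $A_2(\delta_i) \in [0, 1]$ and dividing by $\psize_j$ then gives
\begin{align}
|A_1(\delta_i+\nu, \delta_{-i}) - A_1(\delta_i, \delta_{-i})|\, A_2(\delta_i, \delta_{-i}) \le \frac{4\,\plip_{j \ot i+1}\,\psize_{i-1}}{\psize_j}\|\nu\| + \tfrac{C}{\psize_j}\|\nu\|^2, \nonumber
\end{align}
which is exactly the product-Lipschitz condition with constant $4\plip_{j \ot i+1}\psize_{i-1}/\psize_j$. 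The main subtlety is the first-order expansion with controlled remainder: we need the Jacobian evaluated at the base point $\delta$ (so that the indicator $A_2$ can bound it) rather than at intermediate points along the segment, and reconciling this via the $C'$-Lipschitzness of $J_{j \ot i+1}$ in $\delta_i$ is the step that forces the quadratic remainder and requires the technical assumption.
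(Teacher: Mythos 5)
Your proposal is correct and follows essentially the same route as the paper: you use the $1/\psize_j$-Lipschitzness of the soft indicator, identify the partial derivative $D_{\delta_i} f_{j \ot 1}(x,\delta) = J_{j \ot i+1}(x,\delta)\,\|f_{i-1 \ot 1}(x,\delta)\|$ (which the paper records as Claim~\ref{claim:delta_deriv}), invoke a first-order expansion with a quadratic remainder controlled by the $C'$-Lipschitz assumption (the paper does this by citing Claim H.4 of~\citep{wei2019data}), and then use the support of $A_2$ to bound the Jacobian and hidden-layer norms by $2\plip_{j\ot i+1}$ and $2\psize_{i-1}$. The only difference is cosmetic: the paper delegates the rigorous first-order bound to an external claim, while you describe it directly.
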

\begin{lemma} \label{lem:marg_prod_lip}
	Choose $i \le k$. 	Then after we fix any choice of $x, \delta_{-i}$, the function $T_{\rho}(\gamma(f_{k \ot 1}(x, \delta), y))$ is $\frac{4\plip_{k \ot i + 1}\psize_{i-1}}{\rho}$-product-Lipschitz in $\delta_i$ with respect to $\one[\le \plip_{k \ot i + 1}] (\opnorm{J_{k \ot i+1}(x, \delta)}) \one[\le \psize_{i - 1}](\|f_{i - 1 \ot 1}(x, \delta)\|)$. 
\end{lemma}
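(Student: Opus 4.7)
The plan is to bound the change in $T_\rho(\gamma(f_{k\ot 1}(x,\delta),y))$ induced by a perturbation $\nu$ to $\delta_i$, in terms of the local Jacobian and hidden-layer norms appearing in the product-Lipschitz witness, and then to absorb any higher-order error into the $C\|\nu\|^2$ slack in the definition of product-Lipschitzness.

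First I would reduce to controlling $\|f_{k\ot 1}(x,\delta_i+\nu,\delta_{-i}) - f_{k\ot 1}(x,\delta_i,\delta_{-i})\|$, since $T_\rho$ is $1/\rho$-Lipschitz on $\R$ and the map $u\mapsto \gamma(u,y)$ is $1$-Lipschitz (given our fixed $\|\cdot\|$ convention on the output space $\cD_k$). Composing these two facts gives
\[
|T_\rho(\gamma(f_{k\ot 1}(x,\delta+\nu e_i),y)) - T_\rho(\gamma(f_{k\ot 1}(x,\delta),y))| \le \tfrac{1}{\rho}\,\|f_{k\ot 1}(x,\delta+\nu e_i) - f_{k\ot 1}(x,\delta)\|,
\]
where $\nu e_i$ denotes perturbing only the $i$-th coordinate of $\delta$ by $\nu$. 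So it suffices to produce a first-order expansion of $f_{k\ot 1}$ in $\delta_i$ and bound its error.

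Next, I would use the explicit form of the recursion in Section~\ref{sec:all_layer_marg_app}: $\delta_i$ enters only at layer $i$, where the update is $f_{i\ot i}(h,\delta)=f_i(h)+\delta_i\|h\|$, evaluated at $h=f_{i-1\ot 1}(x,\delta)$. Hence $D_{\delta_i} f_{i\ot 1}(x,\delta) = \|f_{i-1\ot 1}(x,\delta)\|\cdot\mathrm{Id}$, and by the chain rule
\[
D_{\delta_i} f_{k\ot 1}(x,\delta) = J_{k\ot i+1}(x,\delta)\cdot \|f_{i-1\ot 1}(x,\delta)\|.
\]
A first-order Taylor expansion along the line segment from $\delta_i$ to $\delta_i+\nu$ then yields
\[
f_{k\ot 1}(x,\delta+\nu e_i) - f_{k\ot 1}(x,\delta) = J_{k\ot i+1}(x,\delta)\cdot \|f_{i-1\ot 1}(x,\delta)\|\cdot \nu + R(\nu),
\]
with a remainder $R(\nu)$ of norm $O(\|\nu\|^2)$. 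Here the remainder bound uses the $\smooth_j$-Lipschitzness of each $Df_{j\ot j}$ together with the worst-case Lipschitzness assumption on $f_{j\ot 1}$ and $J_{j'\ot j''}$ in $\delta_i$ stated before the three lemmas; these together make the map $\delta_i\mapsto f_{k\ot 1}(x,\delta)$ have a Lipschitz derivative on a small neighborhood, with constant depending only on (fixed) $x,\delta_{-i}$, giving the quadratic bound $\|R(\nu)\|\le C'\|\nu\|^2$ for $\|\nu\|$ small.

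Finally, I would multiply both sides by $\one[\le\plip_{k\ot i+1}](\opnorm{J_{k\ot i+1}(x,\delta)})\cdot\one[\le\psize_{i-1}](\|f_{i-1\ot 1}(x,\delta)\|)$. On the support of this product, the soft indicators force $\opnorm{J_{k\ot i+1}(x,\delta)}\le 2\plip_{k\ot i+1}$ and $\|f_{i-1\ot 1}(x,\delta)\|\le 2\psize_{i-1}$, so the first-order term is at most $4\plip_{k\ot i+1}\psize_{i-1}\|\nu\|$ in norm, and the remainder is still $O(\|\nu\|^2)$. Dividing by $\rho$ and combining with the $1/\rho$-Lipschitzness of $T_\rho$ produces the bound $\tfrac{4\plip_{k\ot i+1}\psize_{i-1}}{\rho}\|\nu\| + C\|\nu\|^2$, which is exactly the product-Lipschitz condition with constant $\tfrac{4\plip_{k\ot i+1}\psize_{i-1}}{\rho}$.

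The main obstacle is making the chain-rule step and the $O(\|\nu\|^2)$ remainder bound fully rigorous while $f_{k\ot 1}$ is composed of many layers and several of the intermediate maps involve perturbations rescaled by data-dependent norms $\|h\|$; I would handle this by differentiating the recursion once and invoking the Lipschitz-derivative assumption on each $f_{j\ot j}$ and the worst-case Lipschitzness assumption (stated just before Lemma~\ref{lem:hl_prod_lip}) on $\delta_i \mapsto f_{j\ot 1}(x,\delta)$ and $\delta_i\mapsto J_{j'\ot j''}(x,\delta)$, which together give the required second-order control on a fixed small neighborhood.
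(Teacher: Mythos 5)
Your argument is correct and follows essentially the same route the paper takes: it establishes the $1/\rho$-Lipschitzness of $T_\rho\circ\gamma$ (the only step the paper flags as new relative to Lemma~\ref{lem:hl_prod_lip}), computes $D_{\delta_i}f_{k\ot 1}=J_{k\ot i+1}\|f_{i-1\ot 1}\|$ via the chain rule (the paper's Claim~\ref{claim:delta_deriv}), uses a first-order expansion with $O(\|\nu\|^2)$ remainder under the worst-case Lipschitz assumption (the paper cites Claim H.4 of~\citet{wei2019data} for the same bound), and then bounds the first-order coefficient by $4\plip_{k\ot i+1}\psize_{i-1}$ on the support of the two soft indicators. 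No gaps.
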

\begin{lemma} \label{lem:jac_prod_lip}
	Choose $i, j_1, j_2$ with $j_1 \ge j_2$, $j_1 > i$. 	Set product-Lipschitz constant $\bar{\tau}$ as follows: 
	\begin{align}
	\bar{\tau} = \frac{8\left(\plip_{j_1 \ot i + 1} \plip_{i - 1 \ot j_2} + \psize_{i - 1}\sum_{j' : \max\{j_2, i + 1\} \le j' \le j_1} \smooth_{j'}\plip_{j' - 1\ot i + 1} \plip_{j_1 \ot j' + 1} \plip_{j' - 1 \ot j_2}\right)} {\plip_{j_1 \ot j_2}}\nonumber
	\end{align}
	Then for any fixed choice of $x, \delta_{-i}$ satisfying $\delta_j = 0$ for $j > i$, the function $\one[\le \plip_{j_1 \ot j_2}] (\opnorm{J_{j_1 \ot j_2}(x, \delta)})$ is $\bar{\tau}$-product-Lipschitz in $\delta_i$ with respect to $ \one[\le \psize_{i - 1}](\|f_{i - 1 \ot 1}(x, \delta)\|) \prod_{1 \le j'' \le j' \le k} \one[\le \plip_{j' \ot j''}] (\opnorm{J_{j' \ot j''}(x, \delta)})$. Here note that we have $\plip_{i - 1 \ot j_2} = 0$ if $i - 1 \le j_2 + 1$. 
\end{lemma}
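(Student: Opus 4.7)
Because $\one[\le t]$ is $(1/t)$-Lipschitz and the operator norm is $1$-Lipschitz under the pointwise metric, the quantity we must bound is at most
$\opnorm{J_{j_1 \ot j_2}(x, \delta + \nu e_i) - J_{j_1 \ot j_2}(x, \delta)}/\plip_{j_1 \ot j_2}$
times the companion multiplier $A_2$ evaluated at the unperturbed $\delta$ (here $e_i$ denotes the standard unit in the $i$-th slot). The entire role of $A_2$ is to provide the pointwise upper bounds $\opnorm{J_{j' \ot j''}(x,\delta)} \le 2\plip_{j' \ot j''}$ and $\|f_{i-1\ot 1}(x,\delta)\| \le 2\psize_{i-1}$ for every Jacobian/hidden-layer norm that will appear in the expansion below. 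So the task reduces to producing a first-order bound on the operator-norm change of the interlayer Jacobian in terms of the data-free constants $\plip_{\cdot}, \psize_{\cdot}, \smooth_{\cdot}$, plus an $O(\|\nu\|^2)$ remainder.

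First, by the chain rule I would write $J_{j_1 \ot j_2}(x,\delta) = M_{j_1}(\delta)\, M_{j_1-1}(\delta) \cdots M_{j_2}(\delta)$, where $M_{j'}(\delta) \triangleq D_h f_{j' \ot j'}(h;\delta_{j'})\big|_{h = f_{j'-1 \ot 1}(x,\delta)} = Df_{j'}(h) + \delta_{j'} \otimes \nabla_h \|h\|$. Telescoping the two products at $\delta + \nu e_i$ and at $\delta$ shows that $\opnorm{J_{j_1 \ot j_2}(x, \delta + \nu e_i) - J_{j_1 \ot j_2}(x, \delta)}$ is at most $\sum_{j' = j_2}^{j_1} \opnorm{J_{j_1 \ot j'+1}(x,\delta)} \cdot \opnorm{M_{j'}(\delta + \nu e_i) - M_{j'}(\delta)} \cdot \opnorm{J_{j'-1 \ot j_2}(x,\delta)}$ modulo an $O(\|\nu\|^2)$ term, where the outer Jacobian factors are bounded by $A_2$. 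I would then case-split on $j'$. For $j' = i$ (possible only when $j_2 \le i \le j_1$), $M_i$ depends on $\delta_i$ linearly via $\delta_i \otimes \nabla_h \|h\|$, so $\opnorm{M_i(\delta + \nu e_i) - M_i(\delta)} \le \|\nu\|$; combined with the two outer Jacobian bounds and the $1/\plip_{j_1 \ot j_2}$ prefactor this yields the first summand of $\bar\tau$. For $j' > i$, $M_{j'}$ depends on $\delta_i$ only through the evaluation point $h_{j'-1} = f_{j'-1 \ot 1}(x,\delta)$; the pathway $\delta_i \mapsto \delta_i \|h_{i-1}\| \mapsto h_i \mapsto \cdots \mapsto h_{j'-1}$ gives, via the same chain rule, a derivative with operator norm at most $\plip_{j'-1 \ot i+1}\,\psize_{i-1}$, so by the $\smooth_{j'}$-Lipschitzness of $Df_{j'}$ we obtain $\opnorm{M_{j'}(\delta + \nu e_i) - M_{j'}(\delta)} \le \smooth_{j'}\, \plip_{j'-1 \ot i+1}\, \psize_{i-1}\, \|\nu\| + O(\|\nu\|^2)$. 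The case $j' < i$ contributes zero since $M_{j'}$ is then independent of $\delta_i$. Summing over $j' \in [\max\{j_2, i+1\}, j_1]$ yields the second summand of $\bar\tau$, with every factor-of-$2$ slack coming from the soft indicators absorbed into the overall constant $8$.

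The $O(\|\nu\|^2)$ remainder demanded by the product-Lipschitz definition is supplied by the worst-case Lipschitz hypothesis with constant $C'$ on $\delta_i \mapsto f_{j \ot 1}$ and $\delta_i \mapsto J_{j' \ot j''}$ stated before the lemma: it lets me discard the quadratic cross-terms both in the telescoped product expansion and in the two per-layer first-order estimates with a single $C \|\nu\|^2$ bound whose constant depends only on $C'$ and the $\smooth_{j'}$'s. I expect the main obstacle to be the bookkeeping: correctly matching each per-$j'$ contribution to the summation index set $\max\{j_2, i+1\} \le j' \le j_1$, and treating the degenerate ranges where one of the trailing $\plip_{\cdot\ot\cdot}$ factors collapses to $1$ via $\plip_{j \ot j+1} = 1$ or vanishes via the paper's convention $\plip_{i-1 \ot j_2} = 0$ (the latter being what makes the first summand of $\bar\tau$ drop out in the sub-case $i < j_2$, where $\delta_i$ has no direct effect on the layer-$i$ factor inside the product from $j_2$ to $j_1$).
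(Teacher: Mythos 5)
Your proposal matches the paper's own proof essentially step-for-step: expand $J_{j_1 \ot j_2}$ as a product of per-layer Jacobians via the chain rule, telescope the difference, split on $j' < i$, $j' = i$, $j' > i$ (using the $\smooth_{j'}$-Lipschitz derivative of $f_{j'}$ together with $D_{\delta_i} f_{j'-1 \ot 1} = J_{j'-1 \ot i+1}\,\|f_{i-1\ot 1}\|$ for the last case), bound all remaining Jacobian/hidden-layer norms by the indicator guarantees from $A_2$, and absorb quadratic terms via the worst-case Lipschitz hypothesis. The only differences are cosmetic (notation $M_{j'}$ for $J_{j' \ot j'}$, and deferring the $C''/2\|\nu\|$ cross-terms into a single $O(\|\nu\|^2)$ bucket rather than carrying them through eq.~(D.1)), so this is the same proof.
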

Given the described steps, we will now complete the proof of Lemma~\ref{lem:laug_lip}. 
\begin{proof}[Proof of Lemma~\ref{lem:laug_lip}]
	We first assume that the conditions of Lemmas~\ref{lem:hl_prod_lip},~\ref{lem:marg_prod_lip},~\ref{lem:jac_prod_lip} regarding $C'$-worst-case Lipschitzness hold. We note that $\cI(\delta; x, y)$ is a product which contains all the functions appearing in Lemmas~\ref{lem:hl_prod_lip},~\ref{lem:marg_prod_lip}, and~\ref{lem:jac_prod_lip}. Thus, Claim~\ref{claim:prod_prod_lip} allows us to conclude that each term in $\cI(\delta; x, y)$ is product-Lipschitz with respect to the product of the remaining terms. As these lemmas also account for all the terms in $\cI(\delta; x, y)$, we can thus apply Lemma~\ref{lem:prod_lip}, where each $A_i$ is set to be a term in the product for $\cI(\delta; x, y)$. Therefore, to bound the Lipschitz constant in $\delta_i$ of $\cI(\delta; x, y)$, we sum the product-Lipschitz constants given by Lemmas~\ref{lem:hl_prod_lip},~\ref{lem:marg_prod_lip}, and~\ref{lem:jac_prod_lip}. This gives that $\cI(\delta; x, y)$ is $\tlip_i$-Lipschitz in $\delta_i$ for $\tlip_i$ defined in~\eqref{eq:laug_lip:1}.
	
	Now to remove the $C'$ worst-case Lipschitzness assumption, we can follow the reasoning of Claim D.6 of~\citep{wei2019data} to note that such Lipschitz constants exist if we restrict $\delta_i$ to some compact set, and thus conclude the lemma statement for $\delta_i$ restricted to this compact set. Now we simply choose this compact set sufficiently large to include both $\delta_i$ and $\delta_i + \nu$. 
\end{proof}

\subsection{Proofs for Product-Lipschitz Lemmas}\label{sec:prod_lip_proofs}
\begin{proof}[Proof of Lemma~\ref{lem:prod_lip}]
	As each $A_i$ is Lipschitz and there are a finite number of functions, there exists $C'$ such that any possible product $A_{i_1} A_{i_2} \cdots A_{i_j}$ is $C'$-Lipschitz. Furthermore, by the definition of product-Lipschitz, there are $c, C > 0$ such that for any $\|\nu\| \le c$, $x \in \cD_I$, and $1 \le i \le q$, we have 
	\begin{align}
		|(A_i(x + \nu) - A_i(x))| \prod_{j \ne i} A_j(x) \le \bar{\tau}_i \|\nu\| + C \|\nu\|^2 \nonumber
	\end{align} 
	Now we note that 
	\begin{align}
		\prod_i A_i(x + \nu) - \prod_i A_i(x) = \sum_i \left(\prod_{j = 1}^{i - 1}A_j(x + \nu) \prod_{j = i + 1}^{q}A_j(x)\right) (A_i(x + \nu) - A_i(x))  \label{eq:prod_lip:1}
	\end{align}
	Now for any $i$, we have 
	\begin{align}
		&|(A_i(x + \nu) - A_i(x)) \prod_{j = 1}^{i - 1}A_j(x + \nu) \prod_{j = i + 1}^{q}A_j(x) | \nonumber \\ &\le |(A_i(x + \nu) - A_i(x))| (\prod_{j = 1}^{i - 1}A_j(x) + C' \|\nu\|)\prod_{j = i + 1}^{q} A_j(x) \tag{as $\prod_{j = 1}^{i - 1}A_j$ is $C'$-Lipschitz}\\
		&\le |A_i(x + \nu) - A_i(x)|(\prod_{j \ne i} A_j(x) + C'  \|\nu\|) \nonumber
	\end{align}
	We used the fact that $\prod_{j =i + 1}^q A_j(x) \le 1$. Now we have $|A_i(x + \nu) - A_i(x)|\prod_{j \ne i} A_j(x)\le \bar{\tau}_i \|\nu\| + C\|\nu\|^2$, and $|A_i(x + \nu) - A_i(x)| C' \|\nu\| \le {C'}^2 \|\nu\|^2$ as $A_i$ is $C'$-Lipschitz, so 
	\begin{align}
	|(A_i(x + \nu) - A_i(x)) \prod_{j = 1}^{i - 1}A_j(x + \nu) \prod_{j = i + 1}^{q}A_j(x) |  &\le \bar{\tau}_i \|\nu\| + (C + {C'}^2) \|\nu\|^2 \nonumber
	\end{align}
	Plugging this back into~\eqref{eq:prod_lip:1} and applying triangle inequality, we get 
	\begin{align}
		|\prod_i A_i(x + \nu) - \prod_i A_i(x)| \le \|\nu\| \left( \sum_{i} \bar{\tau}_i + q(C + {C'}^2) \|\nu\|\right) \nonumber
	\end{align}
	Define the constant $C'' \triangleq \min\{c, \frac{\sum_i \bar{\tau}_i }{q(C + C'^2)}\}$. For any $x$ and all $\nu$ satisfying $\|\nu\| \le C''$, we have 
	\begin{align}
	\label{eq:prod_lip:2}
		|\prod_i A_i(x + \nu) - \prod_i A_i(x)| \le 2 \|\nu\|\sum_{i} \bar{\tau}_i 
	\end{align}
	Now for any $x, y \in \cD_I$, we wish to show $|\prod_i A_i(y) - \prod_i A_i(x)| \le 2 \|x - y\|\sum_{i} \bar{\tau}_i $. To this end, we divide $x - y$ into segments of length at most $C''$ and apply~\eqref{eq:prod_lip:2} on each segment. 
	
	Define $x^{(j)} = x + j \frac{C''}{\|x - y\|} (y - x)$ for $j = 1, \ldots, \lfloor \|x - y\|/C'' \rfloor$. Then as $\|x^{(j)} - x^{(j - 1)}\| \le C''$, we have $|\prod_i A_i(x^{(j)}) - \prod_i A_i(x^{(j - 1)}) | \le  \|x^{(j)} - x^{(j - 1)}\|\sum_{i} \bar{\tau}_i $. Furthermore, we note that the sum of all the segment lengths equals $\|y - x\|$. Thus, we can sum this inequality over pairs $(x, x^{(1)}), \ldots, (x^{(\lfloor \|x - y\|/C'' \rfloor)}, y)$ and apply triangle inequality to get 
	\begin{align}
	|\prod_i A_i(y) - \prod_i A_i(x)| \le 2 \|x - y\|\sum_{i} \bar{\tau}_i  \nonumber
	\end{align}
	as desired.
\end{proof}

\begin{proof}[Proof of Lemma~\ref{lem:hl_prod_lip}]
	For convenience, define $$A(x, \delta) \triangleq \one[\le \plip_{j \ot i + 1}] (\opnorm{J_{j \ot i+1}(x, \delta)}) \one[\le \psize_{i - 1}](\|f_{i - 1 \ot 1}(x, \delta)\|)$$
	We first note that $D_{\delta_i} f_{j \ot 1}(x, \delta)$, the partial derivative of $f_{j \ot 1}$ with respect to $\delta_i$, is given by $J_{j\ot i + 1}(x, \delta) \|f_{i- 1\ot 1}(x, \delta)\|$ by Claim~\ref{claim:delta_deriv}. As $J_{j\ot i + 1}(x, \delta)$, $\|f_{i- 1\ot1}(x, \delta)\|$ are both worst-case Lipschitz in $\delta_i$ with some Lipschitz constant $C'$, we can apply Claim H.4 of~\citep{wei2019data} to obtain:
	\begin{align}
		\|f_{j \ot 1}(x, \delta_{-i},\delta_i+ \nu) - f_{j \ot 1}(x, \delta_{-i},\delta_i)\| \le (\opnorm{D_{\delta_i} f_{j \ot 1}(x, \delta)} + C''/2 \|\nu\|) \|\nu\| \nonumber
	\end{align}
	for any $\nu$ and some Lipschitz constant $C''$. Thus, by the $\psize_j^{-1}$ Lipschitz-ness of the indicator $\one[\le \psize_j]$, we have
	\begin{align}
		&|\one[\le \psize_j](\|f_{j \ot 1}(x, \delta_{-i},\delta_i+ \nu)\|) - \one[\le\psize_j](\|f_{j \ot 1}(x, \delta_{-i},\delta_i\|)| A(x, \delta) \nonumber\\
		&\le A(x, \delta) \frac{\|f_{j \ot 1}(x, \delta_{-i},\delta_i+ \nu) - f_{j \ot 1}(x, \delta_{-i},\delta_i)\|}{t_j} \nonumber\\& \le A(x, \delta)\frac{(\opnorm{D_{\delta_i} f_{j \ot 1}(x, \delta)} + C''/2 \|\nu\|) \|\nu\|}{t_j} \nonumber\\
		&\le A(x,\delta)\frac{(\opnorm{J_{j\ot i + 1}(x, \delta)} \|f_{i- 1\ot 1}(x, \delta)\| + C''/2\|\nu\|)\|\nu\|}{t_j} \nonumber
	\end{align}
Now by definition of $A(x, \delta)$, we get that the right hand side equals $0$ if $\opnorm{J_{j \ot i+1}(x, \delta)} \ge 2\plip_{j \ot i + 1}$ or $\|f_{i - 1 \ot 1}(x, \delta)\| \ge 2\psize_{i - 1}$. Thus, the right hand side must be bounded by 
\begin{align}
	\frac{4 \plip_{j \ot i + 1}\psize_{i - 1}}{t_j}\|\nu\| + C''/2 \|\nu\|^2 \nonumber
\end{align} 
which gives product-Lipschitzness with constant $\frac{4 \plip_{j \ot i + 1}\psize_{i - 1}}{t_j}$.
\end{proof}
\begin{proof}[Proof of Lemma~\ref{lem:marg_prod_lip}]	This proof follows in an identical manner to that of Lemma~\ref{lem:hl_prod_lip}. The only additional step is using the fact that $\gamma(h, y)$ is 1-Lipschitz in $h$, so the composition $T_\rho(\gamma(h, y))$ is $\rho^{-1}$-Lipschitz in $h$.
\end{proof}

\begin{proof}[Proof of Lemma~\ref{lem:jac_prod_lip}]
	Let $C'$ be an upper bound on the Lipschitz constant in $\delta_i$ of all the functions $J_{j' \ot j''}(x, \delta)$. As we assumed that each $f_j$ has $\smooth_j$-Lipschitz Jacobian, such an upper bound exists. We first argue that 
	\begin{align} \label{eq:jac_prod_lip:1}
	\begin{split}
&		\opnorm{J_{j_1\ot j_2}(x, \delta_{-i}, \delta_i + \nu) - J_{j_1\ot j_2}(x, \delta_{-i}, \delta_i)}\\
& \le  \|\nu\|\sum_{j' : \max\{j_2, i + 1\} \le j' \le j_1} \Big(\smooth_{j'}(\opnorm{J_{j_1\ot j' + 1}(x, \delta)}+ C'/2 \|\nu\|)\\ &\cdot  (\opnorm{J_{j' - 1\ot i + 1}(x,\delta)}\|f_{i- 1 \ot 1}(x, \delta)\| + C''/2 \|\nu\|)\opnorm{J_{j' - 1 \ot j_2}(x,\delta)}\Big) \\
&+ \|\nu\| (\opnorm{J_{j_1 \ot i + 1}} + C'/2 \|\nu\|) \opnorm{J_{i - 1 \ot j_2}}
 \end{split}
	\end{align}
	for some Lipschitz constant $C''$. The proof of this statement is nearly identical to the proof of Claim D.3 in~\citep{wei2019data}, so we only sketch it here and point out the differences. We rely on the expansion 
	\begin{align}
		J_{j_1 \ot j_2}(x, \delta) = J_{j_1 \ot j_1}(x, \delta) J_{j_1 - 1 \ot j_1 - 1}(x, \delta) \cdots J_{j_2\ot j_2}(x, \delta) \nonumber
	\end{align}
	which follows from the chain rule. Now we note that we can compute the change in a single term $J_{j' \ot j'}(x, \delta)$ from perturbing $\delta_i$ as follows:
	\begin{align}
		& \opnorm{J_{j' \ot j'}(x, \delta_{-i},\delta_i + \nu) - J_{j' \ot j'}(x, \delta_{-i},\delta_i)} \nonumber\\
		&= \opnorm{D_h f_{j' \ot j'}(h, \delta) |_{h = f_{j' - 1\ot 1}(x, \delta_{-i}, \delta_i + \nu)} - D_h f_{j' \ot j'}(h, \delta) |_{h = f_{j' - 1\ot 1}(x, \delta_{-i}, \delta_i)}} \nonumber
\end{align}
		Note that when $j' > i$, by assumption $\delta_{j'} = 0$, so $D_h f_{j' \ot j'}(h, \delta) = D_h f_{j'}(h)$. Thus, as $f_{j'}$ has $\smooth_{j'}$-Lipschitz derivative, we get
		\begin{align}
		&\opnorm{J_{j' \ot j'}(x, \delta_{-i},\delta_i + \nu) - J_{j' \ot j'}(x, \delta_{-i},\delta_i)} \nonumber \\&\le \smooth_{j'} \|f_{j' - 1\ot 1}(x, \delta_{-i}, \delta_i + \nu) - f_{j' - 1\ot 1}(x, \delta_{-i}, \delta_i)\| \tag{since the derivative of $f_{j'}$ is $\smooth_{j'}$-Lipschitz}\\
		&\le \smooth_{j'} (\opnorm{D_{\delta_i} f_{j' - 1 \ot 1}(x, \delta_{-i}, \delta_i)} + C''/2 \|\nu\|)\|\nu\| \tag{by Claim H.4 of~\citep{wei2019data}}\\
		&\le \smooth_{j'} (\opnorm{J_{j'-1 \ot i + 1}(x,\delta)}\|f_{i- 1 \ot 1}(x, \delta)\| + C''/2 \|\nu\|)\|\nu\| \nonumber
	\end{align} 
	We obtained the last line via Claim~\ref{claim:delta_deriv}. We note that the cases when $j' > i$ contribute to the terms under the summation in~\eqref{eq:jac_prod_lip:1}. 
	
	When $j' = i$, we have $D_h f_{i \ot i}(h, \delta) = D_h f_i(h) + D_h[\delta_i  \|h\|]$ for any $h$, so  $\opnorm{D_h f_{i \ot i}(h, \delta_i, \delta_{-i}) - D_h f_{i \ot i}(h, \delta_i + \nu, \delta_{-i})} = \opnorm{D_h[\nu \|h\|]} \le \|\nu\|$. As this holds for any $h$, it follows that $\opnorm{J_{i \ot i}(x, \delta_{-i},\delta_i + \nu) - J_{i \ot i}(x, \delta_{-i},\delta_i)} \le \|\nu\|$. This term results in the last quantity in~\eqref{eq:jac_prod_lip:1}.  	
	Finally, when $j' < i$, we have $J_{j' \ot j'}(x, \delta_{-i},\delta_i + \nu) = J_{j' \ot j'}(x, \delta_{-i},\delta_i)$ as $J_{j' \ot j'}$ does not depend on $\delta_i$. 
	
	To see how~\eqref{eq:jac_prod_lip:1} follows, we would apply the above bounds in a telescoping sum over indices $j'$ ranging from $\max\{j_2, i\}$ to $j_1$. For a more detailed derivation, refer to the steps in Claim D.3 of~\citep{wei2019data}.
	
	Now for convenience define 
	\begin{align}
		A(x, \delta)\triangleq \one[\le \psize_{i - 1}](\|f_{i - 1 \ot 1}(x, \delta)\|) \prod_{1 \le j'' \le j' \le k} \one[\le \plip_{j' \ot j''}] (\opnorm{J_{j' \ot j''}(x, \delta)}) \nonumber
	\end{align}
	Note that if any of the bounds set by the indicators in $A(x, \delta)$ are violated, then $A(x,\delta) = 0$, and thus $$|\one[\le \plip_{j_1 \ot j_2}] (\opnorm{J_{j_1\ot j_2}(x, \delta_{-i}, \delta_i + \nu)}) - \one[\le \plip_{j_1 \ot j_2}](\opnorm{J_{j_1\ot j_2}(x, \delta_{-i}, \delta_i)})| A(x, \delta) = 0$$ In the other case, we have $\|f_{i- 1\ot 1}(x,\delta)\| \le 2\psize_{i - 1}$, and $\opnorm{J_{j' \ot j''}(x, \delta)} \le 2\plip_{j' \ot j''}$, in which case~\eqref{eq:jac_prod_lip:1} can be bounded by 
	\begin{align}
	&\opnorm{J_{j_1\ot j_2}(x, \delta_{-i}, \delta_i + \nu) - J_{j_1\ot j_2}(x, \delta_{-i}, \delta_i)} A(x, \delta) \nonumber \\& \le
		8t_{i - 1}\left(\sum_{j' : \max\{j_2, i + 1\} \le j' \le j_1} \smooth_{j'}\plip_{j'-1 \ot i + 1} \plip_{j_1 \ot j' + 1} \plip_{j' - 1 \ot j_2}\right)\|\nu\| \nonumber \\ &+ \plip_{j_1 \ot i + 1} \plip_{i - 1 \ot j_2}\|\nu\| + C''' \|\nu\|^2 \nonumber
	\end{align}
	for some $C'''$ that is independent of $x, \delta, \nu$. Thus, by Lipschitz-ness of $\one[\le \tau_{j_1 \ot j_2}](\cdot)$ and the triangle inequality, we have
	{\small
	\begin{align}
		|\one[\le \plip_{j_1 \ot j_2}] (\opnorm{J_{j_1\ot j_2}(x, \delta_{-i}, \delta_i + \nu)}) - \one[\le \plip_{j_1 \ot j_2}](\opnorm{J_{j_1\ot j_2}(x, \delta_{-i}, \delta_i)})| A(x, \delta) \le \nonumber \\\\		\frac{8\left(\plip_{j_1 \ot i + 1} \plip_{i - 1\ot j_2} + t_{i - 1}\sum_{j' : \max\{j_2, i + 1\} \le j' \le j_1} \smooth_{j'}\plip_{j'-1 \ot i + 1} \plip_{j_1 \ot j' + 1} \plip_{j' - 1 \ot j_2}\right)\|\nu\|}{\plip_{j_1 \ot j_2}} + C''' \|\nu\|^2 \nonumber
	\end{align}}
	This gives the desired result. 
\end{proof}
\begin{claim}\label{claim:prod_prod_lip}
	Let $A_1, A_2, A_3 : \cD_I \to [0, 1]$ be functions where $A_1$ is $\bar{\tau}$-product-Lipschitz w.r.t. $A_2$. Then $A_1$ is also $\bar{\tau}$-product Lipschitz w.r.t. $A_2A_3$. 
\end{claim}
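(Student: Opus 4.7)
The plan is essentially immediate from the definition of product-Lipschitzness combined with the bound $A_3(x) \le 1$. Recall that $A_1$ being $\bar{\tau}$-product-Lipschitz with respect to $A_2$ means there exist constants $c, C > 0$ such that for every $x \in \cD_I$ and every $\|\nu\| \le c$,
\[
|A_1(x+\nu) - A_1(x)|\, A_2(x) \le \bar{\tau}\|\nu\| + C\|\nu\|^2.
\]
I would set $c' = c$ and $C' = C$, and verify the same inequality with $A_2(x)$ replaced by $A_2(x) A_3(x)$.

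The first step is to observe that since $A_3$ maps into $[0,1]$, we have $A_3(x) \in [0,1]$ pointwise. Second, since $|A_1(x+\nu) - A_1(x)| \ge 0$ and $A_2(x) \ge 0$ (because $A_2$ maps into $[0,1]$), the quantity $|A_1(x+\nu) - A_1(x)|\, A_2(x)$ is nonnegative. Multiplying a nonnegative quantity by $A_3(x) \le 1$ can only decrease it, so for any $\|\nu\| \le c$ and any $x$,
\[
|A_1(x+\nu) - A_1(x)|\, A_2(x)\, A_3(x) \le |A_1(x+\nu) - A_1(x)|\, A_2(x) \le \bar{\tau}\|\nu\| + C\|\nu\|^2.
\]
This is exactly the product-Lipschitz condition for $A_1$ with respect to $A_2 A_3$, with the same constant $\bar{\tau}$.

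There is no genuine obstacle here; the claim is a structural monotonicity observation used to feed Lemmas~\ref{lem:hl_prod_lip}, \ref{lem:marg_prod_lip}, and \ref{lem:jac_prod_lip} into Lemma~\ref{lem:prod_lip} when $\cI(\delta; x, y)$ contains extra factors beyond the ones explicitly named. The only subtle point worth mentioning in the write-up is that the argument uses $A_2, A_3 \ge 0$ (guaranteed by the codomain $[0,1]$), which is what lets us drop $A_3(x)$ without flipping the inequality.
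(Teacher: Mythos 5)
Your proof is correct and takes the same approach as the paper: both drop the extra factor $A_3(x)$ using $A_3(x)\le 1$ together with nonnegativity of $|A_1(x+\nu)-A_1(x)|A_2(x)$, and then invoke the definition of product-Lipschitzness unchanged. Your write-up is slightly more explicit about why the inequality does not flip, but there is no substantive difference.
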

\begin{proof}
	This statement follows from the definition of product-Lipschitzness and the fact that 
	\begin{align}
	|A_1(x+ \nu) - A_1(\nu)| A_2(x) A_3(x) \le |A_1(x+ \nu) - A_1(\nu)| A_2(x) \nonumber
	\end{align}
	since $A_3(x) \le 1$. 
\end{proof}
\begin{claim}\label{claim:delta_deriv}
	The partial derivative of $f_{j \ot 1}$ with respect to variable $\delta_i$ evaluated at $x, \delta$ can be computed as
	\begin{align}
		D_{\delta_i} f_{j \ot 1}(x, \delta) = J_{j \ot i + 1}(x, \delta) \|f_{i - 1 \ot 1}(x, \delta)\| \nonumber
	\end{align}
\end{claim}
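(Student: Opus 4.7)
The plan is a direct chain-rule computation that exploits the fact that $\delta_i$ enters the expression for $f_{j \ot 1}(x, \delta)$ only at layer $i$. First I would use the recursive definition of $f_{\cdot \ot \cdot}$ to decompose
\begin{align}
f_{j \ot 1}(x, \delta) = f_{j \ot i+1}(f_{i \ot 1}(x, \delta), \delta),\nonumber
\end{align}
noting that the outer function $f_{j \ot i+1}(\cdot, \delta)$ depends on $\delta$ only through the coordinates $\delta_{i+1}, \ldots, \delta_j$, and is therefore independent of $\delta_i$. Consequently, the only contribution of $\delta_i$ to $f_{j \ot 1}(x, \delta)$ is through the inner argument $f_{i \ot 1}(x, \delta)$.

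Next I would compute $D_{\delta_i} f_{i \ot 1}(x, \delta)$ directly from the base case of the recursion,
\begin{align}
f_{i \ot 1}(x, \delta) = f_i(f_{i-1 \ot 1}(x, \delta)) + \delta_i \|f_{i-1 \ot 1}(x, \delta)\|. \nonumber
\end{align}
Since $f_{i-1 \ot 1}(x, \delta)$ depends only on $\delta_1, \ldots, \delta_{i-1}$ and not on $\delta_i$, both the first summand and the scalar factor $\|f_{i-1 \ot 1}(x, \delta)\|$ are constant in $\delta_i$. The map $\delta_i \mapsto \delta_i \|f_{i-1 \ot 1}(x, \delta)\|$ is linear with Jacobian equal to scalar multiplication by $\|f_{i-1 \ot 1}(x, \delta)\|$, so $D_{\delta_i} f_{i \ot 1}(x, \delta) = \|f_{i-1 \ot 1}(x, \delta)\| \cdot I$.

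Finally I would apply the chain rule to the decomposition above:
\begin{align}
D_{\delta_i} f_{j \ot 1}(x, \delta)
&= \big(D_h f_{j \ot i+1}(h, \delta)\big|_{h = f_{i \ot 1}(x, \delta)}\big) \cdot D_{\delta_i} f_{i \ot 1}(x, \delta) \nonumber \\
&= J_{j \ot i+1}(x, \delta)\, \|f_{i-1 \ot 1}(x, \delta)\|, \nonumber
\end{align}
where the second line uses the definition $J_{j \ot i+1}(x, \delta) \triangleq D_h f_{j \ot i+1}(h, \delta)|_{h = f_{(i+1)-1 \ot 1}(x, \delta)} = D_h f_{j \ot i+1}(h, \delta)|_{h = f_{i \ot 1}(x, \delta)}$. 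There is no real obstacle here beyond carefully tracking which perturbation coordinates each subfunction depends on; the result follows from one clean chain-rule application.
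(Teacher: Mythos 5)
Your proof is correct and takes essentially the same route as the paper: decompose $f_{j \ot 1}(x,\delta) = f_{j \ot i+1}(f_{i \ot 1}(x,\delta),\delta)$, observe that $\delta_i$ enters only through the inner argument, compute $D_{\delta_i} f_{i \ot 1}(x,\delta) = \|f_{i-1\ot 1}(x,\delta)\|\cdot I$ from the base-case recursion, and apply the chain rule together with the definition of $J_{j\ot i+1}$. The paper simply collapses your middle step into a single line; the content is the same.
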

\begin{proof}
	By definition, $f_{j \ot 1}(x, \delta) = f_{j \ot i + 1}(f_{i \ot 1}(x, \delta), \delta)$. Thus, we note that $f_{j \ot 1}(x, \delta)$ only depends on $\delta_i$ through $f_{i \ot 1}(x, \delta)$, so by chain rule we have 
	\begin{align}
		D_{\delta_i} f_{j \ot 1}(x, \delta) &= D_h f_{j \ot i + 1}(h, \delta) |_{h = f_{i \ot 1}(x, \delta)} D_{\delta_i} f_{i \ot 1}(x, \delta) \nonumber\\
		&= J_{j \ot i + 1}(x, \delta) D_{\delta_i} [f_{i}(f_{i - 1 \ot 1}(x, \delta)) + \delta_i \|f_{i - 1 \ot 1}(x, \delta)\|] \nonumber\\
		&= J_{j \ot i + 1}(x, \delta) \|f_{i - 1 \ot 1}(x, \delta)\| \nonumber
	\end{align}
	In the second line, we invoked the definition of $J_{ j \ot i + 1}(x, \delta)$.
\end{proof}
\begin{claim}\label{claim:delta_chain}
	We have the expansion
	\begin{align}
		J_{j \ot i}(x, \delta) = J_{j \ot j}(x, \delta) \cdots J_{i \ot i}(x, \delta) \nonumber
	\end{align}
\end{claim}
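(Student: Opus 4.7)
The plan is to induct on $j - i$. The base case $j = i$ is immediate: the claimed product reduces to the single factor $J_{i \ot i}(x, \delta)$, which trivially equals $J_{j \ot i}(x, \delta)$. For the inductive step, I would unfold the definition of the perturbed composition as
\begin{align*}
f_{j \ot i}(h, \delta) = f_{j \ot j}(f_{j - 1 \ot i}(h, \delta), \delta),
\end{align*}
viewing $f_{j \ot j}(\cdot, \delta)$ as a map in its first argument with $\delta$ held fixed (recall $f_{j \ot j}(h', \delta) = f_j(h') + \delta_j \|h'\|$). Applying the standard multivariate chain rule in the variable $h$ gives
\begin{align*}
D_h f_{j \ot i}(h, \delta) = D_{h'} f_{j \ot j}(h', \delta)\big|_{h' = f_{j - 1 \ot i}(h, \delta)} \cdot D_h f_{j - 1 \ot i}(h, \delta).
\end{align*}

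Before evaluating at $h = f_{i - 1 \ot 1}(x, \delta)$ I need the composition-consistency identity $f_{j - 1 \ot i}(f_{i - 1 \ot 1}(x, \delta), \delta) = f_{j - 1 \ot 1}(x, \delta)$, which says that tail composition followed by head composition equals full composition. This is a short sub-induction on the upper index, directly from the recursive definitions of $f_{\cdot \ot \cdot}$: the step uses that $f_{r \ot i}$ is obtained from $f_{r - 1 \ot i}$ by one more layer application plus one more norm-scaled perturbation, and the same holds for $f_{r \ot 1}$, so the identity propagates. Granted this identity, the right factor in the chain-rule display evaluates to $J_{j - 1 \ot i}(x, \delta)$, and the left factor evaluates, by definition of $J_{j \ot j}$, to $J_{j \ot j}(x, \delta)$. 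Hence
\begin{align*}
J_{j \ot i}(x, \delta) = J_{j \ot j}(x, \delta) \cdot J_{j - 1 \ot i}(x, \delta).
\end{align*}
Invoking the inductive hypothesis on $J_{j - 1 \ot i}(x, \delta)$ telescopes this into the desired product $J_{j \ot j}(x, \delta) \cdots J_{i \ot i}(x, \delta)$.

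The only mild obstacle is bookkeeping: $f_{j \ot j}$ depends on $\delta_j$ as well as its argument, so one has to be explicit that the chain rule is applied with respect to $h$ alone while $\delta$ is a fixed parameter, and that the perturbation term $\delta_j \|h'\|$ gets correctly absorbed into $D_{h'} f_{j \ot j}(h', \delta) = D f_j(h') + \delta_j \, D_{h'} \|h'\|$. Assuming differentiability of each $f_i$ and of the norm at the relevant points (already implicit in the smoothness assumptions used throughout Section~\ref{sec:gen_bound_smooth}), no further technical difficulty arises.
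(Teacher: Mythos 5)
Your proof is correct and follows essentially the same route as the paper's: unfold one layer via the recursive definition, apply the chain rule in $h$ with $\delta$ held fixed, invoke the composition-consistency identity $f_{j-1 \ot i}(f_{i-1 \ot 1}(x, \delta), \delta) = f_{j-1 \ot 1}(x, \delta)$ to identify the evaluated factors as $J_{j \ot j}(x,\delta)$ and $J_{j-1 \ot i}(x,\delta)$, and iterate. You are somewhat more explicit than the paper in two places — casting the iteration as a formal induction on $j - i$, and flagging the composition-consistency identity (which the paper uses silently) as needing its own short sub-induction — but these are refinements in exposition rather than a different argument.
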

\begin{proof}
	This is a result of the chain rule, but for completeness we state the proof here. We have 
	\begin{align}
		D_h f_{j \ot i}(h, \delta) &= D_h f_{j}(f_{j - 1 \ot i}(h, \delta), \delta) \nonumber\\
		&= D_{h'} f_{j \ot j}(h', \delta) |_{h' = f_{j - 1 \ot i}(h, \delta)} D_h f_{j - 1 \ot i}(h, \delta) \tag{by chain rule} \\
	\end{align}
	Thus, plugging in $h = f_{i -1 \ot 1}(x, \delta)$, we get
	\begin{align}
		J_{j \ot i}(x, \delta) &= D_h f_{j \ot i}(h, \delta)|_{h = f_{i - 1 \ot 1}(x, \delta)} \nonumber\\
		 &= D_{h'} f_{j \ot j}(h', \delta) |_{h' = f_{j - 1 \ot i}(f_{i -1 \ot 1}(x, \delta), \delta)} D_h f_{j - 1 \ot i}(h, \delta)|_{h = f_{i - 1 \ot 1}(x, \delta)} \nonumber\\
		 &= D_{h'} f_{j \ot j}(h', \delta) |_{h' = f_{j - 1 \ot 1}(x, \delta)} J_{j - 1 \ot i}(x, \delta) \nonumber\\
		 &= J_{j \ot j}(x, \delta) J_{j - 1 \ot i}(x, \delta) \nonumber
	\end{align}
	Now we can apply identical steps to expand $J_{j - 1 \ot i}(x, \delta)$, giving the desired result.
\end{proof}

\section{Proofs for Adversarially Robust Classification}\label{sec:adv_app}
In this section, we derive the generalization bounds for adversarial classification presented in Section~\ref{sec:adv_generalization}. Recall the adversarial all-layer margin $\pmarg^{\adv}_F(x, y) \triangleq \min_{x' \in \cB^\adv(x)} m_F(x', y)$ defined in Section~\ref{sec:adv_generalization}. In this section, we will use the general definition of $m_F$ in~\eqref{eq:pmarg_def}. 

We will sketch the proof of Theorem~\ref{thm:adv_generalization} using the same steps as those laid out in Sections~\ref{sec:warmup} and~\ref{sec:all_layer_marg_app}. We will rely on the following general analogue of Theorem~\ref{thm:generalization} with the exact same proof, but with $\lip^\star_i$ (defined in Section~\ref{sec:gen_bound_smooth}) replaced by $\lip^\adv_i(x, y) \triangleq \max_{x' \in \cB^\adv(x)} \lip^\star_i(x', y)$ everywhere:
\begin{theorem} \label{thm:adv_generalization}
	Let $\cF = \{f_k \circ \cdots \circ f_1 : f_i \in \cF_i\}$ denote a class of compositions of functions from $k$ families $\{\cF_i\}_{i = 1}^k$, each of which satisfies Condition~\ref{cond:eps_sq} with operator norm $\opnorm{\cdot}$ and complexity $\comp_{\opnorm{\cdot}}(\cF_i)$. For any choice of integer $q> 0$, with probability $1 - \delta$ for all $F \in \cF$ the following bound holds: 
	\begin{align}
	&\E_{P}[\ellzo^\adv(F(x), y)] \nonumber \\ &\le \frac{3}{2}\left(\E_{P_n}[\ellzo^\adv(F(x), y)] \right) + O\left(\frac{k \log n + \log (1/\delta)}{n}\right) \nonumber\\
	& + \left(1 - \E_{P_n}[\ellzo^\adv(F(x), y)] \right)^{\frac{2}{q + 2}} O\left(q \left(\frac{\log^2 n}{n}\right)^{\frac{q}{q + 2}} \left(\sum_i \|\lip^\adv_i\|_{L_q(\correct^\adv)}^{2/3} \comp_{\opnorm{\cdot}}(\cF_i)^{2/3}\right)^{\frac{3q}{q + 2}} \right) \nonumber
	\end{align}
	where $\correct^\adv$ denotes the subset of training examples correctly classified by $F$ with respect to adversarial perturbations. In particular, if $F$ classifies all training samples with adversarial error 0, i.e. $|\correct^\adv| = n$, with probability $1 - \delta$ we have
	\begin{align}
	\E_{P}[\ellzo^\adv(F(x), y)] &\lesssim q \left(\frac{\log^2 n}{n}\right)^{q/(q + 2)} \left(\sum_i \|\lip^\adv_i\|_{L_q(\correct^\adv)}^{2/3} \comp_{\opnorm{\cdot}}(\cF_i)^{2/3}\right)^{3q/(q + 2)}  + \zeta \nonumber
	\end{align}
	where $\zeta \triangleq  O\left(\frac{k \log n + \log (1/\delta)}{n}\right)$ is a low order term. 
\end{theorem}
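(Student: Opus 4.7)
The plan is to mirror the proof of Theorem~\ref{thm:generalization} verbatim, with $m_F$ replaced throughout by the adversarial all-layer margin $m_F^{\adv}(x, y) \triangleq \min_{x' \in \cB^{\adv}(x)} m_F(x', y)$ and $\lip^\star_i$ replaced by $\lip^{\adv}_i$. First I would verify that $m_F^{\adv}$ is a valid margin function for $\ellzo^{\adv}$: since $m_F(x', y) > 0$ iff $F$ correctly classifies $x'$, we get $m_F^{\adv}(x, y) > 0$ iff $F$ correctly classifies \emph{every} $x' \in \cB^{\adv}(x)$, i.e. iff $\ellzo^{\adv}(F(x), y) = 0$. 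In particular $m_F^{\adv}$ is a generalized margin function in the sense of Definition~\ref{def:generalized_margin} (for the adversarial 0-1 loss).

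Next I would lift the uniform Lipschitz property of Claim~\ref{claim:m_lip} to $m_F^{\adv}$. Since Claim~\ref{claim:m_lip} gives the bound $|m_F(x',y) - m_{\hat F}(x',y)| \le \gnorm{F - \hat F}$ uniformly in $(x',y)$, the elementary inequality $|\min_{x'} a(x') - \min_{x'} b(x')| \le \sup_{x'} |a(x') - b(x')|$ immediately yields
\[
|m_F^{\adv}(x, y) - m_{\hat F}^{\adv}(x, y)| \le \gnorm{F - \hat F}.
\]
Consequently the proof of Lemma~\ref{lem:mcover} carries through unchanged, giving $\cover_\infty(\epsilon, m^{\adv} \circ \cF) \le \cover_{\gnorm{\cdot}}(\epsilon, \cF)$, and then Lemma~\ref{lem:gen_covering} yields the same bound $\log \cover_\infty(\epsilon, m^{\adv} \circ \cF) \le \lfloor \comp_{\gnorm{\cdot}}^2(\cF)/\epsilon^2 \rfloor$ as in the clean case.

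Third, I would lower bound $m_F^{\adv}(x, y)$ by applying Lemma~\ref{lem:m_lb} pointwise inside the min over $x' \in \cB^{\adv}(x)$. For $(x, y)$ with $\ellzo^{\adv}(F(x), y) = 0$, every $x' \in \cB^{\adv}(x)$ satisfies $\gamma(F(x'), y) > 0$, so Lemma~\ref{lem:m_lb} gives $m_F(x', y) \ge \|\{\lip^\star_i(x', y)/\alpha_i\}_{i=1}^k\|_{p/(p-1)}^{-1}$. Taking the min over $x'$ and using the definition $\lip^{\adv}_i(x, y) \triangleq \max_{x' \in \cB^{\adv}(x)} \lip^\star_i(x', y)$, the monotonicity of the $\ell_{p/(p-1)}$ norm in each coordinate yields
\[
m_F^{\adv}(x, y) \ge \|\{\lip^{\adv}_i(x, y)/\alpha_i\}_{i=1}^k\|_{p/(p-1)}^{-1}.
\]

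Finally, the rest of the proof of Theorem~\ref{thm:generalization} transfers verbatim. I would apply Claim~\ref{claim:ell_beta_gen_cor} to $m_F^{\adv}$ with the surrogate loss $\ell_{\beta=1}$ from Claim~\ref{claim:ell_beta} (which upper bounds $\ellzo^{\adv}(F(x), y)$ whenever $m_F^{\adv}$ is its argument, by the margin-function property), split the empirical expectation into the correctly-classified subset $\correct^{\adv}$ (where we plug in the $q$-th power tail bound of Claim~\ref{claim:ell_beta} together with the lower bound on $m_F^{\adv}$ from step three) and the misclassified subset (where we bound $\ell_{\beta=1}$ trivially by $1$), then optimize over $\{\alpha_i\}$ via the same grid/union-bound argument and Lemma~\ref{lem:E_upper_bound} with $z_i = (|\correct^{\adv}|/n)^{1/q}\|\lip^{\adv}_i\|_{L_q(\correct^{\adv})}$ and $b_i = \comp_{\opnorm{\cdot}}(\cF_i)\log n/\sqrt n$. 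Since none of these steps inspect the internal structure of the margin beyond the uniform Lipschitz and lower-bound properties established above, the same bound emerges with $\correct^{\adv}$ in place of $\correct$ and $\lip^{\adv}_i$ in place of $\lip^\star_i$. The only conceptual subtlety, and the one I would check carefully, is that the min-over-$x'$ correctly commutes with the uniform Lipschitz bound of Claim~\ref{claim:m_lip}; the rest is bookkeeping.
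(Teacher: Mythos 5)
Your proposal is correct and follows essentially the same route as the paper: the paper also establishes the uniform Lipschitz property of $m^{\adv}_F$ (Claim~\ref{lem:adv_m_lip}, proved by choosing the optimal $x^\star$ and applying Claim~\ref{claim:m_lip}, which is just the unrolled form of your $|\min a - \min b| \le \sup |a-b|$ inequality), lower bounds $m^{\adv}_F$ by pushing Lemma~\ref{lem:m_lb} inside the min over $\cB^{\adv}(x)$ and using monotonicity of the $\ell_{p/(p-1)}$ norm, and then reruns the Theorem~\ref{thm:generalization} machinery verbatim with $\lip^{\adv}_i$ and $\correct^{\adv}$ in place of $\lip^\star_i$ and $\correct$.
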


Given Theorem~\ref{thm:adv_generalization}, Theorem~\ref{thm:adv_nn_gen} follows with the same proof as the proof of Theorem~\ref{thm:nn_gen} given in Section~\ref{sec:nn_app}. To prove Theorem~\ref{thm:adv_generalization}, we first have the following analogue of Lemma~\ref{lem:mcover} bounding the covering number of $m^{\adv} \circ \cF$. 
\begin{lemma}
	Define $m^\adv \circ \cF \triangleq \{(x, y) \mapsto m^{\adv}_F(x, y) : F \in \cF\}$. Then 
	\begin{align}
	\cover_{\infty}(\epsilon, \pmarg^\adv \circ \cF) \le \cover_{\gnorm{\cdot}}(\epsilon, \cF) \nonumber
	\end{align}
\end{lemma}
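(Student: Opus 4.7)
The plan is to follow exactly the same template used to prove Lemma~\ref{lem:mcover}, with the adversarial margin $m^\adv_F$ in place of $m_F$. The key observation is that the Lipschitz-type control provided by Claim~\ref{claim:m_lip} is preserved under the $\min_{x' \in \cB^\adv(x)}$ operation, since taking a minimum of a family of functions that are uniformly Lipschitz in a parameter preserves that Lipschitz constant.

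Concretely, first I would establish the adversarial analogue of Claim~\ref{claim:m_lip}: for any $(x,y) \in \cD_0 \times [l]$ and any two sequences $F = \{f_i\}$, $\hat F = \{\hat f_i\}$,
\begin{align*}
\lvert m^\adv_F(x,y) - m^\adv_{\hat F}(x,y)\rvert \le \gnorm{F - \hat F}.
\end{align*}
To prove this, let $x^\star_F \in \arg\min_{x' \in \cB^\adv(x)} m_F(x', y)$, so $m^\adv_F(x,y) = m_F(x^\star_F, y)$. Since $x^\star_F \in \cB^\adv(x)$, we have by definition $m^\adv_{\hat F}(x,y) \le m_{\hat F}(x^\star_F, y)$. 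Applying Claim~\ref{claim:m_lip} pointwise at $(x^\star_F, y)$ yields
\begin{align*}
m^\adv_{\hat F}(x,y) \le m_{\hat F}(x^\star_F, y) \le m_F(x^\star_F, y) + \gnorm{F - \hat F} = m^\adv_F(x,y) + \gnorm{F - \hat F}.
\end{align*}
Swapping the roles of $F$ and $\hat F$ (using $x^\star_{\hat F}$) gives the reverse inequality, establishing the claim.

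Given this uniform Lipschitz bound, the covering statement follows by direct transfer: let $\hat{\cF}$ be any $\epsilon$-cover of $\cF$ in the $\gnorm{\cdot}$ metric with $|\hat{\cF}| = \cover_{\gnorm{\cdot}}(\epsilon, \cF)$. For each $F \in \cF$, pick $\hat F \in \hat{\cF}$ with $\gnorm{F - \hat F} \le \epsilon$. The displayed Lipschitz inequality then gives $\sup_{(x,y)} |m^\adv_F(x,y) - m^\adv_{\hat F}(x,y)| \le \epsilon$, so $\{m^\adv_{\hat F} : \hat F \in \hat{\cF}\}$ is an $\epsilon$-cover of $m^\adv \circ \cF$ in $\infty$-norm. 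Taking infimum over choices of cover gives the desired bound.

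There is essentially no obstacle here: the entire content is the observation that the $\min$ over $\cB^\adv(x)$ commutes with the uniform Lipschitz property of $m_F$ in $F$. The only subtlety worth stating explicitly is that $x^\star_F$ may differ from $x^\star_{\hat F}$, but this does not matter because Claim~\ref{claim:m_lip} holds for \emph{every} input point, so we can apply it at whichever worst-case point we please. This is also precisely why the proof of Theorem~\ref{thm:adv_nn_gen} then follows identically to the clean case, as noted in the paragraph preceding the lemma.
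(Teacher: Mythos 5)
Your proof is correct and takes essentially the same route as the paper: you first establish the uniform Lipschitz bound $|m^\adv_F(x,y) - m^\adv_{\hat F}(x,y)| \le \gnorm{F - \hat F}$ (the paper's Claim~\ref{lem:adv_m_lip}) by invoking Claim~\ref{claim:m_lip} at the worst-case point and swapping roles, and then transfer a $\gnorm{\cdot}$-cover of $\cF$ to an $\infty$-norm cover of $m^\adv\circ\cF$ exactly as in Lemma~\ref{lem:mcover}. No gaps.
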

This lemma allows us to invoke Claim~\ref{lem:ell_beta_gen} on a smooth loss composed with $m^\adv \circ \cF$, as we did for the clean classification setting. The lemma is proven the exact same way as Lemma~\ref{lem:mcover}, given the Lipschitz-ness of $m^{\adv}_F$ below:
\begin{claim}\label{lem:adv_m_lip}
	For any $x, y \in \cD_0 \times [l]$, and function sequences $F = \{f_i\}_{i = 1}^k, \hat{F} = \{\hat{f}_i\}_{i = 1}^k$, we have $|\pmarg^{\adv}_F (x, y) - \pmarg^{\adv}_{\hat{F}}(x, y)| \le \gnorm{F - \hat{F}}$. 
\end{claim}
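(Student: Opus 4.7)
The plan is to reduce this directly to the Lipschitz property already established in Claim~\ref{claim:m_lip}. Recall that Claim~\ref{claim:m_lip} gives, for every input-label pair $(x',y)$, the bound $|m_F(x',y) - m_{\hat F}(x',y)| \le \gnorm{F - \hat F}$. Since $m^{\adv}_F(x,y)$ is defined as an infimum of $m_F(x',y)$ over $x' \in \cB^{\adv}(x)$, and the right-hand side of Claim~\ref{claim:m_lip} does not depend on $x'$, uniform Lipschitzness should transfer to the pointwise minimum.

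Concretely, I would argue as follows. Fix $(x,y)$ and pick (or take a minimizing sequence for) $x^\star \in \cB^{\adv}(x)$ with $m_F(x^\star,y) = \pmarg^{\adv}_F(x,y)$ (up to an arbitrarily small $\epsilon$ if the infimum is not attained). Since $x^\star \in \cB^{\adv}(x)$, the definition of $\pmarg^{\adv}_{\hat F}$ gives
\begin{align}
\pmarg^{\adv}_{\hat F}(x,y) \;\le\; m_{\hat F}(x^\star, y) \;\le\; m_F(x^\star,y) + \gnorm{F - \hat F} \;=\; \pmarg^{\adv}_F(x,y) + \gnorm{F - \hat F},\nonumber
\end{align}
where the second inequality is Claim~\ref{claim:m_lip} applied at $(x^\star, y)$. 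By the symmetric argument (swap the roles of $F$ and $\hat F$, and pick a near-minimizer $\hat x^\star \in \cB^{\adv}(x)$ for $\pmarg^{\adv}_{\hat F}(x,y)$), we obtain $\pmarg^{\adv}_F(x,y) \le \pmarg^{\adv}_{\hat F}(x,y) + \gnorm{F - \hat F}$. Combining the two inequalities (and letting $\epsilon \to 0$ in the case of a non-attained infimum) yields the claim.

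No step is really an obstacle: the only mild subtlety is whether the minimum in the definition of $\pmarg^{\adv}_F$ is attained, which is handled by the standard $\epsilon$-approximate minimizer argument. The key conceptual point is that Claim~\ref{claim:m_lip} provides a uniform Lipschitz bound over the input $x'$, so the ``$\min_{x' \in \cB^{\adv}(x)}$'' operation preserves it, just as the infimum of a family of functions sharing a common Lipschitz constant is itself Lipschitz with the same constant.
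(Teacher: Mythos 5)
Your proof is correct and follows the same route as the paper's: pick the minimizer $x^\star$ for $\pmarg^{\adv}_F$, apply Claim~\ref{claim:m_lip} at $x^\star$, and then symmetrize. Your extra care about whether the minimum is attained (the $\epsilon$-approximate minimizer) is a harmless refinement the paper omits.
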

\begin{proof}
	Let $x^\star \in \cB^\adv(x)$ be such that $\pmarg^{\adv}_F(x, y) = m_F(x^\star, y)$. By Claim~\ref{claim:m_lip}, we have 
	\begin{align}
\pmarg^{\adv}_{\hat{F}}(x, y) \le \pmarg_{\hat{F}}(x^\star, y) \le m_F(x^\star, y) + \gnorm{F - \hat{F}} = \pmarg^{\adv}_F(x, y) + \gnorm{F - \hat{F}} \nonumber
	\end{align} 
	We can apply the reverse reasoning to also obtain $\pmarg^{\adv}_F(x, y) \le \pmarg^{\adv}_{\hat{F}}(x, y) + \gnorm{F - \hat{F}}$. Combining the two gives us the desired result. 
\end{proof}

Next, we lower bound $m^{\adv}_F$ when each function in $F$ is smooth. 

\begin{lemma}
In the setting of Lemma~\ref{lem:m_lb}, let $\lip^\adv_i(x, y) \triangleq \max_{x' \in \cB^\adv(x)} \lip^\star_i(x', y)$. Then if $\ellzo^\adv(F(x), y) = 0$, we have
\begin{align}
m^{\adv}_F(x, y) \ge \|\{\lip^\adv_i(x, y)/\alpha_i\}_{i = 1}^k \|_{p/(p - 1)}^{-1} \nonumber
\end{align}
\end{lemma}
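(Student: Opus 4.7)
The plan is to reduce this statement directly to Lemma~\ref{lem:m_lb}, using the fact that the adversarial all-layer margin is defined as a minimum over the perturbation ball and $\lip^\adv_i$ is defined as the corresponding maximum.

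First I would unpack the hypothesis: $\ellzo^\adv(F(x),y)=0$ means that for every $x'\in\cB^\adv(x)$, the unperturbed prediction $F(x')$ is correct, i.e.\ $\gamma(F(x'),y)>0$. This puts us in the regime where Lemma~\ref{lem:m_lb} applies pointwise at $x'$, giving
\begin{align*}
m_F(x',y)\ \ge\ \bigl\|\{\lip^\star_i(x',y)/\alpha_i\}_{i=1}^k\bigr\|_{p/(p-1)}^{-1}
\quad\text{for every } x'\in\cB^\adv(x).
\end{align*}

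Next I would use monotonicity of the dual norm. By definition $\lip^\adv_i(x,y)=\max_{x'\in\cB^\adv(x)}\lip^\star_i(x',y)\ge\lip^\star_i(x',y)$ coordinatewise in $i$, so
\begin{align*}
\bigl\|\{\lip^\adv_i(x,y)/\alpha_i\}_{i=1}^k\bigr\|_{p/(p-1)}\ \ge\ \bigl\|\{\lip^\star_i(x',y)/\alpha_i\}_{i=1}^k\bigr\|_{p/(p-1)},
\end{align*}
and consequently the inverses flip the inequality, yielding $m_F(x',y)\ge \|\{\lip^\adv_i(x,y)/\alpha_i\}_{i=1}^k\|_{p/(p-1)}^{-1}$ with the right-hand side now independent of the particular $x'$. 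Taking the infimum over $x'\in\cB^\adv(x)$ on the left gives the claim, since $m^\adv_F(x,y)=\min_{x'\in\cB^\adv(x)} m_F(x',y)$.

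There is essentially no obstacle here; the argument is just ``pointwise lower bound, then uniformize over $x'$ using the worst-case definition of $\lip^\adv_i$.'' The only small technical point worth noting is that Lemma~\ref{lem:m_lb} requires $\gamma(F(x'),y)>0$, which is exactly what the adversarial training-error-zero hypothesis provides at every point in $\cB^\adv(x)$; no additional smoothness or structural assumption beyond those already in force in the setting of Lemma~\ref{lem:m_lb} is needed.
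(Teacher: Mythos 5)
Your proposal is correct and follows essentially the same route as the paper: apply Lemma~\ref{lem:m_lb} pointwise at each $x'\in\cB^\adv(x)$ (justified by the $\ellzo^\adv=0$ hypothesis), then use monotonicity of the $\ell_{p/(p-1)}$ norm to replace each $\lip^\star_i(x',y)$ by its worst-case value $\lip^\adv_i(x,y)$, and finally take the infimum over $x'$. The paper's version phrases this as a chain of inequalities (min of reciprocals equals reciprocal of max, then bounding $\max_{x'}\|\cdot\|$ by $\|\max_{x'}\cdot\|$), but the mathematical content is identical; your version is arguably slightly more explicit about why the hypothesis of Lemma~\ref{lem:m_lb} holds at every $x'$ in the ball.
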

\begin{proof}
	By definition and Lemma~\ref{lem:m_lb}, we have
	\begin{align}
		m^{\adv}_F(x, y) = \min_{x' \in \cB^\adv(x)} m_F(x', y) &\ge \min_{x' \in \cB^\adv(x)} \|\{\lip_i^\star(x', y)/\alpha_i\}_{i= 1}^k\|_{p/(p - 1)}^{-1} \nonumber\\
		&= \frac{1}{\max_{x' \in \cB^\adv(x)} \|\{\lip_i^\star(x', y)/\alpha_i\}_{i = 1}^k\|_{p/(p - 1)}} \nonumber\\
		&\ge \frac{1}{\|\left\{\max_{x' \in \cB^\adv(x)} \lip_i^\star(x', y)/\alpha_i\right\}_{i = 1}^k\|_{p/(p - 1)}} \nonumber\\
		&= \frac{1}{\|\{ \lip_i^\adv(x', y)/\alpha_i\}_{i = 1}^k\|_{p/(p - 1)}} \nonumber
	\end{align}
\end{proof}

To finish the proof of Theorem~\ref{thm:adv_generalization}, we use $\ell_{\beta =1}(m^{\adv}_F(x, y))$ as an upper bound for $\ellzo^\adv(F(x), y)$ and follow the steps of Theorem~\ref{thm:generalization} to optimize over the choice of $\{\alpha_i\}_{i = 1}^k$. As these steps are identical to Theorem~\ref{thm:generalization}, we omit them here. With Theorem~\ref{thm:adv_generalization} in hand, we can conclude Theorem~\ref{thm:adv_nn_gen} using the same proof as Theorem~\ref{thm:nn_gen}.

\section{Additional Experimental Details}
\subsection{Additional Details for Clean Classification Setting}
\label{sec:app_exp_clean}
For the experiments presented in Table~\ref{tab:cifar_results}, the other hyperparameters besides $t$ and $\eta_{\textup{perturb}}$ are set to their defaults for WideResNet architectures. Our code is inspired by the following PyTorch WideResNet implementation: \url{https://github.com/xternalz/WideResNet-pytorch}, and we use the default hyperparameters in this implementation. Although we tried a larger choice of $t$, the number of updates to the perturbations $\delta$, our results did not depend much on $t$.

In Table~\ref{tab:vgg_cifar_results}, we demonstrate that our~\amo~algorithm can also improve performance for conventional feedforward architectures such as VGG~\citep{simonyan2014very}. We report the best validation error on the clean classification setting. For both methods, we train VGG-19 for 350 epochs using weight decay $0.0005$ and an initial learning rate of 0.05 annealing by a factor of 0.1 at the 150-th and 250-th epochs. For the~\amo~algorithm, we optimize perturbations $\delta$ for $t = 1$ steps and use learning rate $\eta_{\textup{perturb}} = 0.01$.  

In Table~\ref{tab:dropout}, we demonstrate that our~\amo~algorithm can offer improvements over dropout, which is also a regularization method based on perturbing the hidden layers. This demonstrates the value of regularizing stability with respect to \textit{worst-case} perturbations rather than random perturbations. For the combinations of CIFAR dataset and WideResNet architecture, we tuned the level of dropout and display the best tuned results with the dropout probability $p$ in Table~\ref{tab:dropout}. We use the same hyperparameter settings as described above.
\begin{table}
	\centering
	\caption{Validation error on CIFAR-10 for VGG-19 architecture trained with standard SGD and \amo.}	\label{tab:vgg_cifar_results}
	\begin{tabular}{c c c} 	
		Arch. & Standard SGD & {\amo}\\
		\cline{1-3}
		VGG-19 & 5.66\% & \textbf{5.06\%}\\ 
	\end{tabular}
\end{table}

\begin{table}
	\centering
	\caption{Validation error on CIFAR for models trained with dropout vs. {\amo}. We tuned the dropout probability $p$ and display the best-performing value.}	\label{tab:dropout}
	\begin{tabular}{ c  c c c } 	Dataset &		Arch. & Tuned Dropout & {\amo}\\
		\cline{1-4}
		\multirow{2}{*}{CIFAR-10} & WRN16-10 & 4.04\% ($p$ = 0.1) & \textbf{3.42\%}\\
		\cline{2-4}
		& WRN28-10 & 3.52\% ($p$ = 0.1) & \textbf{3.00\%}\\
		\cline{1-4}
		\multirow{2}{*}{CIFAR-100} & WRN16-10 & 19.57\% ($p$ = 0.1) & \textbf{19.14\%}\\
		\cline{2-4}
		& WRN28-10 &  18.77\% ($p$ = 0.2)& \textbf{17.78\%}\\
	\end{tabular}
\end{table}
\subsection{Additional Details for Robust~\amo}
\label{sec:app_robust_amo}
Our implementation is based on the robustness library by~\citep{robustness}\footnote{\url{https://github.com/MadryLab/robustness}}. 
For both methods, we produce the perturbations during training using 10 steps of PGD with $\ell_{\infty}$ perturbations with radius $\epsilon=8$. We update the adversarially perturbed training inputs with a step size of 2 in pixel space. During evaluation, we use 50 steps of PGD with 10 random restarts and perturbation radius $\epsilon=8$. For all models, we train for 150 epochs using SGD with a learning rate decay of 0.1 at the 50th, 100th, and 150th epochs. For the baseline models, we use a weight decay of 5e-4, initial learning rate of 0.1, and batch size of 128. 

For both models trained with robust~\amo, the perturbations $\delta$ have a step size of 6.4e-3, and we shrink the norm of $\delta$ by a factor of 0.92 every iteration. For the WideResNet16 model trained with robust~\amo, we use a batch size of 128 with initial learning rate of 0.1 and weight decay 5e-4. For the WideResNet28 model trained using robust~\amo, we use a batch size of 64 (chosen so that the $\delta$ vectors fit in GPU memory) with initial learning rate of 0.707 (chosen to keep the covariance of the gradient update the same with the decreased batch size). We use weight decay 2e-4. We chose these parameters by tuning.

\end{document}